\newcommand{\cmark}{\ding{51}}%
\newcommand{\xmark}{\ding{55}}%
\newcommand\AB[2]{\schema{\schemabox{#1}}{\schemabox{#2}}}
\DeclareMathOperator*{\argmax}{argmax}
\DeclareMathOperator*{\argmin}{argmin}
\definecolor{fhcolor}{rgb}{0.523, 0.235, 0.625}
\theoremstyle{plain}
\newtheorem{theorem}{Theorem}
\newtheorem{case}{Case}
\newtheorem{proposition}{Proposition}
\newtheorem{lemma}{Lemma}
\theoremstyle{definition}
\newtheorem{definition}{Definition}
\newtheorem{assumption}{Assumption}
\theoremstyle{remark}
\crefname{section}{sec.}{sec.} 
\icmltitlerunning{What can online reinforcement learning benefit from general coverage conditions?}
\begin{document}

\etocdepthtag.toc{mtchapter}
\etocsettagdepth{mtchapter}{subsection}
\etocsettagdepth{mtappendix}{none}

\twocolumn[
\icmltitle{What can online reinforcement learning with function approximation benefit from general coverage conditions?}

\begin{icmlauthorlist}
\icmlauthor{Fanghui Liu}{yyy}
\icmlauthor{Luca Viano}{yyy}
\icmlauthor{Volkan Cevher}{yyy}
\end{icmlauthorlist}

\icmlaffiliation{yyy}{Laboratory for Information and Inference Systems, 
	 \'{E}cole Polytechnique F\'{e}d\'{e}rale de Lausanne (EPFL), Switzerland}

\icmlcorrespondingauthor{Fanghui Liu}{fanghui.liu@epfl.ch}

\icmlkeywords{Machine Learning, ICML}

\vskip 0.3in
]

\printAffiliationsAndNotice{}

\begin{abstract}

In online reinforcement learning (RL), instead of employing standard structural assumptions on Markov decision processes (MDPs), using a certain coverage condition (original from offline RL) is enough to ensure sample-efficient guarantees \cite{xie2023role}.
In this work, we focus on this new direction by digging more possible and general coverage conditions, and study the potential and the utility of them in efficient online RL.
We identify more concepts, including the $L^p$ variant of concentrability, the density ratio realizability, and trade-off on the partial/rest coverage condition, that can be also beneficial to sample-efficient online RL, achieving improved regret bound.
Furthermore, if exploratory offline data are used, under our coverage conditions, both \emph{statistically} and \emph{computationally} efficient guarantees can be achieved for online RL.
Besides, even though the MDP structure is given, e.g., linear MDP, we elucidate that, good coverage conditions are still beneficial to obtain faster regret bound beyond $\widetilde{\mathcal{O}}(\sqrt{T})$ and even a \emph{logarithmic} order regret.
These results provide a good justification for the usage of general coverage conditions in efficient online RL.

\if 0
General function approximation in online reinforcement learning (RL) often requires certain structural assumptions on Markov decision processes (MDPs). 
Recently, \citet{xie2023role} pointed out that the structural assumption can be substituted by the coverage condition, originally used in offline RL community, but can still ensure sample-efficient guarantees in online RL.
In this work, we focus on this new direction by digging the potential and the utility of the coverage condition in online RL.
First, we identify more concepts, including the $L^p$ variant of concentrability, the density ratio realizability, and trade-off on the partial/rest coverage condition, that can be also beneficial to sample-efficient online RL, achieving better regret bound than \cite{xie2023role}.
Second, we elucidate the related benefits for online RL under our general coverage conditions. 1) our coverage condition as well as offline data help to obtain both \emph{statistically} and \emph{computationally} efficient guarantees for online RL.
2) for linear MDP, a good coverage condition is able to obtain faster regret bound beyond $\widetilde{\mathcal{O}}(\sqrt{T})$ and even a \emph{logarithmic} order regret.
These results provide a good justification for the usage of coverage conditions in online RL.
\fi

\end{abstract}

\section{Introduction}

Modern reinforcement learning (RL) algorithms modeled by Markov Decision Processes (MDPs) \cite{szepesvari2010algorithms}, e.g., deep Q network \cite{mnih2015human}, Go \cite{silver2016mastering}, often work in an online setting under large (or even infinite) state space and action space.
Here the terminology \emph{online} means that the agent repeatedly interacts with the environment by executing a policy and observing the past trajectory.
To tackle the large state/action space setting, function approximation \cite{sutton1999policy,jin2020provably} is a powerful and indispensable technique in both theory and practice to approximate the true value function from a pre-given function class.


In online RL, much efforts are devoted to developing \emph{sample-efficient} algorithms in a \emph{general} function approximation class beyond linear MDP \cite{jin2020provably}.
By explicitly assuming some structural assumptions on MDPs, e.g., Eluder dimension \cite{russo2013eluder}, Bellman Eluder (BE) dimension \cite{jin2021bellman}, typical algorithms including GOLF \cite{jin2021bellman}, OPERA \cite{chen2022general} enjoy sample efficient guarantees in online RL.
Normally, these algorithms for general function approximation in online RL are not \emph{computation-efficient} due to the constructed “global” confidence sets for exploration.

\if 0
\begin{table*}[!htb]
	\centering
	\fontsize{9}{8}\selectfont
	\begin{threeparttable}
		\caption{Benefits of various coverage conditions for online RL after $T$ episodes.}
		\label{tabbenefit}
		\begin{tabular}{cccc|cccccccc}
			\toprule
			Results &coverage condition & structural assumption? & regret bound & offline data?  \cr
			\midrule
			\cite{xie2023role} & Def.~\ref{def:ccov}: $C_{\tt cov}$ in $L^{\infty}$ & \xmark & $\widetilde{\mathcal{O}}(\sqrt{C_{\tt cov} T})$ & \xmark  \cr
            \midrule
			\cref{thm:golfcw} & Def.~\ref{def:ccw}: $C_{\tt cw}$ in $L^{p}(\mathrm{d}\mu)$ & \xmark & $\widetilde{\mathcal{O}}\Big(\sqrt{C^{\frac{1}{p}}_{\tt cw} T}\Big)$ & \xmark \cr
            \midrule
			\cref{thm:golfpcov} & Def.~\ref{def:ppar}: $P_{\tt out}$ in $L^{p}(\mathrm{d}\mu)$ & \xmark & $\widetilde{\mathcal{O}}\Big(\sqrt{P_{\tt out}(\zeta^{\star}) T}\Big)$\tnote{[1]} & \xmark  \cr
            \midrule[1pt]
			\cite{song2022hybrid} & $C_{\pi^{\star}}$ in Eq.~\eqref{eq:Cpi} & Bilinear class $d$ & $\widetilde{\mathcal{O}}(\sqrt{d C_{\pi^{\star}} T})$ & \cmark  \cr
                \midrule
			\cref{thm:hyper} & Def.~\ref{def:ccw}: $C_{\tt cw}$ in $L^{p}(\mathrm{d}\mu)$  & \xmark & $\widetilde{\mathcal{O}}\Big(C^{\frac{1}{p}}_{\tt cw} \sqrt{T}\Big)$ & \cmark  \cr
            \midrule[1pt]
		   \cite{jin2020provably} & \xmark & linear MDP & $\widetilde{\mathcal{O}}(\sqrt{T})$ & \xmark  \cr
        \midrule
			\cite{he2021logarithmic} & sub-optimality gap assumption & linear MDP & ${\mathcal{O}}(\log T)$ & \xmark  \cr
            \midrule
			\cref{thm:lsvi} & Assumptions~\ref{assum:uef} and \ref{assum:lowv} & linear MDP & $\widetilde{\mathcal{O}}(T^c)$ or ${\mathcal{O}}(\log T)$\tnote{[2]} & \xmark \cr
			\bottomrule
		\end{tabular}
		\begin{tablenotes}
			\footnotesize
			\item[1] The optimal parameter $\zeta^{\star}$ ensures $P_{\tt cov}(\zeta^{\star}) \approx P_{\tt out}(\zeta^{\star})$ for a better regret bound than \cite{xie2023role}.
            \item[2] The coefficient $c$ satisfies $0<c < 1/2$.
		\end{tablenotes}
	\end{threeparttable}
\end{table*}
\fi 

Instead of \emph{explicitly} assuming structural assumptions as above-mentioned, the data coverage condition \cite{munos2008finite}, widely used in offline RL, in fact \emph{implicitly} imposes structural assumptions on the MDP dynamics \cite{chen2019information}.
It asserts that a pre-given (even unknown) data distribution $\mu$ provides sufficient coverage over the state space. 
Recently, \citet{xie2023role} show that, even though no offline data are used, a good data coverage condition, w.r.t an underlying distribution, can ensure sample-efficient guarantees in online RL.

Accordingly, studying the coverage condition instead of classical structural assumption on MDPs in online RL is an alternative but promising way.
This direction provides a natural connection between offline and online RL in both theory \cite{wang2021statistical,foster2021statistical,zanette2021exponential} and practice \cite{nair2020accelerating,levine2020offline}.
Besides, it provides a new view to develop \emph{sample-efficient} and even \emph{computation-efficient} algorithms for general function approximation in online RL, as suggested by \cite{song2022hybrid}.


In this work, we focus on this new direction by digging more general coverage conditions, and study the potential and the utility of them in efficient online RL under various scenarios as below.

As a starting point, in Section~\ref{sec:lpcw}, we identify more concepts of coverage conditions to ensure sample-efficient online RL, including the $L^p$ variant of concentrability and the density ratio realizability. We take the $L^{p}(\mathrm{d}\mu)$ space with $p \geqslant 1$ measure as an example to combine them.
Our general coverage conditions can ensure the typical GOLF algorithm \cite{jin2021bellman} to achieve sample-efficient guarantees in online RL. 
To further obtain computation-efficient efficiency, the offline data can be used for exploration, see a typical hybrid-Q algorithm \cite{song2022hybrid}.
Under this setting, our coverage condition is still useful to ensure both \emph{statistically} and \emph{computationally} efficient guarantees for online RL with general function approximation.
By doing so, the required structure assumption in \cite{song2022hybrid}, e.g., Bellman rank, BE dimension, can be substituted by our coverage condition.
This utility of coverage conditions supports our target in this work. 

In Section~\ref{sec:disen}, based on our coverage condition, we decouple the all-policy coverage condition into a partial-policy coverage condition by some (unknown) data distribution and the rest-policy coverage condition, which is quite realistic in practice. We theoretically prove that the trade-off on the partial/rest coverage condition, are able to obtain a better regret bound than \cite{xie2023role}.
This provides a good justification on the study of general coverage conditions.

In Section~\ref{sec:linearmdp}, we also identify that, even if the MDP structure is given, e.g., linear MDP, our coverage conditions are still useful. We demonstrate that, the typical LSVI-UCB algorithm in linear MDP \cite{jin2020provably} equipped with certain coverage conditions is able to obtain faster regret bound than $\mathcal{O}(\sqrt{T})$, and even $\mathcal{O}(\log T)$ regret.

{\bf Technical contributions:} 
In this paper, we give an affirmative answer to identify more general coverage concepts for improved efficient online RL under several scenarios.
We follow the proof framework of \cite{xie2023role} on the regret analysis and the decomposition of the on-policy average Bellman error. 
The technical contributions of this work mainly lie in 1) under this framework, how to tackle the unbounded on-policy measure under the setting of partial/rest coverage trade-off in Section~\ref{sec:disen}; 2) providing a new proof framework for linear MDP by building the connection between the on-policy measure and the underlying distribution for improved regret bounds in Section~\ref{sec:linearmdp}.

{\bf Goal of this paper:} This paper does not contribute to design a new algorithm but provides a possibility to substitute structural assumptions by coverage conditions.
We identify more general coverage conditions and dig several good examples for improved efficient online RL.
Our analysis sheds light on the utility of coverage conditions in online RL, which could open the door to design new efficient algorithms from offline to online RL in practice motivated by our theoretical results.
It bridges the study of offline and online RL and is important for the study of hybrid RL.

In fact, coverage condition is an intrinsic structural property of MDPs that describes the complexity of probability transitions, which does not involve additional information when compared to structural assumptions on MDP.
Nevertheless, we do not claim that coverage conditions are better than certain structural assumptions on MDPs in online RL.
The relationship between them requires a refined analysis but is beyond the scope of this work.



\section{Preliminaries and related work}
\label{sec:related_work}

We start with introducing basic concepts of online and offline RL \cite{sutton2018reinforcement}, and then give an overview of function approximation in RL.

{\bf Notations:} We use $[T]$ as a shorthand of $\left \{ 1,2,\dots ,T \right \}$ for any positive integer $T$. Define the Lebesgue space $L^p(\mathbb{R}^d)$ with its norm $\| f \|_{L^p} = \int_{\mathbb{R}^d} f(\bm x) \mathrm{d} \bm x$, and the $L^p(\mathrm{d}\mu)$ space with its norm $\| f \|^p_{L^p(\mathrm{d} \mu)} = \int [f(\bm x)]^p \mathrm{d} \mu $ over the probability measure $\mu$. Here we assume $p \geqslant 1$.
A typical example is the $L^2(\mathrm{d}\mu)$ space, a Hilbert space, that is commonly used in learning theory.
The notation $\widetilde{\mathcal{O}}$ omits the logarithmic factor.

\subsection{Basic concepts}
{\bf Markov decision processes} (MDPs): In our work, we consider a finite-horizon episodic MDP, denoted as $\text{MDP}(\mathcal{S}, \mathcal{A}, H, \mathbb{P}, r)$ setting to model reinforcement learning, where $\mathcal{S}$ is the state space with potentially infinite states; $\mathcal{A}$ is the finite action space; $H$ is the number of steps in one episode; $\mathbb{P} := \{ \mathbb{P}_h \}_{h=1}^H$ is defined as the transition probability $\mathbb{P}_h ( s_{h+1} | s_h, a_h) $ from the current state-action pair $(s_h,a_h)$ to the next state $s_{h+1} \in \mathcal{S}$ for every $h \in [H]$; We use $r := \{ r_h \}_{h=1}^H $ to denote the reward $r_{h}(s,a)$ received at each $h \in [H]$ when taking the action $a$ at state $s$.
For ease of description, we assume the reward is non-negative and $\sum_{h=1}^H r_h(s_h,a_h) \in [0,1]$ for any possible trajectory.

A non-stationary policy $\pi$ is a sequence of functions $\pi:= \{ \pi_h: \mathcal{S} \rightarrow {\mathcal{A}} \}_{h=1}^H$, where $\pi_h$ specifies a strategy at step $h$, and induces a distribution over trajectories $\{ (s_h,a_h,r_h) \}_{h=1}^H$ by the following process: taking an action $a_h \sim \pi(\cdot| s_h)$, observing a reward $r_h(s_h,a_h)$, and obtaining $s_{h+1} \sim \mathbb{P}_h(\cdot \mid s_h, a_h)$.
We denote $\mathbb{E}_{\pi}[\cdot]$ as the expectation w.r.t the 
randomness of the trajectory $\{(s_h,a_h)\}_{h=1}^H$ generated by
the policy $\pi $, and $\mathrm{Pr}^{\pi}[\cdot]$ as the probability under this process.
Accordingly, the occupancy measure for a policy $\pi$ is defined as 
\begin{equation*}
    \rho_h^{\pi} (s,a) := \mathrm{Pr}^{\pi}[s_h=s, a_h=a], \quad \rho_h^{\pi} (s) := \mathrm{Pr}^{\pi}[s_h=s]\,.
\end{equation*}

The performance of the agent is captured by the \emph{value function}. To be specific, given a policy $\pi$, the (state) value function $V_h^{\pi}\colon \mathcal{S} \to [0,1]$ is defined as the expected cumulative rewards of the MDP starting from step $h \in [H]$
\begin{equation*}
    V_h^\pi(s) :=  \mathbb{E}_{\pi} \left[\sum_{h' = h}^H r_{h'}(s_{h'},  a_{h'} )  \big|  s_h = s \right] \,.
\end{equation*}
Similarly, the action-value function  
$Q_h^\pi:\mathcal{S} \times \mathcal{A} \to [0,1]$ for a policy $\pi$ is defined as 
\begin{equation*}
    Q^{\pi}_h(s,a) :=\mathbb{E}_{\pi} \left[ \sum_{h'=h}^H r_{h'} (s_{h'},a_{h'} ) \,\big|\, s_h=s,\,a_h=a  \right]\,.
\end{equation*}
Since the episode length and the size of action space are both finite, there always exists an optimal policy $\pi^\star = \{ \pi^{\star}_h \}_{h=1}^H$ \cite{puterman2014markov} such that $V^{\pi^\star}_{h}(s) = \sup_{\pi} V_h^\pi(s)$ for all $s\in \mathcal{S}$ and $h\in [H]$. 
For notational simplicity, we abbreviate $V^{\pi^\star}_{h}$ as $V^{\star}_{h}$ and $Q^{\pi^\star}_{h}$ as $Q^{\star}_{h}$.
For a sequence of value functions $\{ Q_h \}_{h=1}^H$, the Bellman operator at step $h$ for a function $f: \mathcal{S} \times \mathcal{A} \rightarrow \mathbb{R}$ is 
\begin{equation*}
    ( \mathcal{T}_h f ) (s,a) = r_h(s,a) + \mathbb{E}_{s' \sim \mathbb{P}_h(\cdot | s, a)} [\max_{a' \in \mathcal{A}} f(s',a')]\,.
\end{equation*}
 We denote $f_h - \mathcal{T}_h f_{h+1}$ as the Bellman error (or Bellman residual).

The target of an RL algorithm is to find an $\epsilon$-optimal policy such that $V^{\star}_{1} (s_1) - V^{\pi}_{1} (s_1)  \leqslant \epsilon$.
In {\bf online RL}, suppose that an agent interacts with the environment for $T$ episodes, the goal is to learn the optimal policy $\pi^\star$ by minimizing the cumulative regret
\begin{equation*}
    {\tt Regret}(T) := \sum_{t=1}^T [ V^{\star}_{1} (s_1) - {V}^{{\pi}^t}_{1} (s_1) ]\,.
\end{equation*}
In {\bf offline RL}, the agent cannot interact with the environment. Instead, at each step $h$, what we have is an offline dataset with $n_{\mathrm{off}}$ samples $\{ (s_h,a_h,r_h,s_{h+1}) \}$: sampling $(s_h,a_h) \overset{iid}{\sim} \mu_h$, receiving the reward $r_h(s,a)$, and  
$s_{h+1} \sim \mathbb{P}_h(\cdot \mid s_h, a_h)$, where offline data distributions are defined by a collection of data distribution $\mu:= \{ \mu_h \}_{h=1}^H$.
The goal of offline RL is to use this offline dataset to learn an $\epsilon$-optimal policy.

{\bf Function approximation:} The target of function approximation in RL is to get rid of the size of the state space.
A typical setting is under the value-based function approximation, where we approximate the value functions for the underlying MDP by a pre-given function class $\mathcal{F} = \mathcal{F}_1 \times \dots \times \mathcal{F}_H$ with $\mathcal{F}_h \subset \{ f \in \mathcal{S} \times \mathcal{A} \rightarrow [0,1]$\}.
One can see that, a basic assumption in function approximation is to describe the size of $\mathcal{F}$, which aims to assert that $\mathcal{F}$ is large enough or complete to cover value functions under transition dynamics.
For notational simplicity, we define $f:= \{ f_h \}_{h=1}^H$ and accordingly $\pi^f$ to be the greedy policy w.r.t., $f$, which takes the action as $\pi_h^f(s) = \argmax_{a \in \mathcal{A}} f_h(s,a)$.
Since no reward is collected at the $(H+1)$-th step, we always set $f_{H+1}=0$.

For each episode $t$, we define the Bellman error  $\delta^{(t)}_h(\cdot,\cdot) := f_h^{(t)}(\cdot,\cdot) - (\mathcal{T}_h f_{h+1}^{(t)})(\cdot,\cdot)$ at step $h$ induced by $f^{(t)} \in \mathcal{F}$.

\subsection{Related works on function approximation}
Here we give an overview of recent works in function approximation under online RL, offline RL, and a hybrid setting, respectively.

{\bf Online RL:} The agent under the online setting requires \emph{exploration} schemes when interacting with the unknown environment.
The simplest scheme is $\epsilon$-greedy, i.e., randomly selecting new actions with $\epsilon$ probability. 
Though computational efficient, this scheme is demonstrated to be statistically inefficient in theory \cite{jin2015low,dann2022guarantees,liu2022understanding}.
Most literature work with ``optimism in the face of uncertainty'' principle for efficient exploration schemes, e.g., upper confidence bound (UCB)-type algorithms \cite{jin2020provably} and Thompson sampling \cite{russo2018tutorial,agrawal2012analysis}.
They have been applied to linear MDP \cite{jin2020provably,yang2020reinforcement}, kernel MDP \cite{yang2020function}, linear mixture MDP \cite{ayoub2020model,zhou2021nearly}.
For general function approximation, under proper assumptions on MDP structure, e.g., Bellman rank \cite{jiang2017contextual}, Eluder dimension \cite{russo2013eluder}, Bilinear rank \cite{du2021bilinear}, BE dimension \cite{jin2021bellman}, admissible Bellman characterization class \cite{chen2022general}, decision-estimation coefficient class \cite{foster2021statistical}, and sequential exploration coefficient \cite{xie2023role}, sample-efficient algorithms based on optimistic principles are designed to ensure statistical efficiency but computational efficiency guarantees are often unattainable.

{\bf Offline RL:} 
The agent under the offline RL setting \cite{levine2020offline} does not interact with the environment and just learns policies solely from a given offline dataset.
This is the intrinsic difference from online RL.
In this case, there is no possibility to do \emph{exploration} but a \emph{data coverage} condition over the offline dataset is required for statistical guarantees.
It requires the dataset to contain any possible state, action pair or trajectory with a lower bounded probability.
A typical example is all-policy concentrability \cite{munos2008finite,zhang2020variational}, which requires the sufficient coverage of offline data over all (relevant) states and actions.
Recent works focus on relaxation from such strong condition from full coverage to partial coverage \cite{uehara2022pessimistic}, and even single-policy concentrability \cite{rashidinejad2021bridging,zhan2022offline} by preventing the policy from visit states and actions where the offline data coverage is poor \cite{liu2020provably} or relying on the principle of ``pessimism'' \cite{xie2021bellman,jin2021pessimism}.

{\bf Online RL with offline data:}
Empirical results work in this setting and have demonstrated the success of offline data \cite{rajeswaran2017learning}, but under certain settings, offline data does not yield statistical improvements in tabular MDPs \cite{xie2021policy}.
Recent work focus on digging the benefit of offline data in online RL, including computation efficiency \cite{song2022hybrid} and sample efficiency \cite{wagenmaker2022leveraging}.

Besides, \citet{xie2023role} demonstrate that, the data coverage condition is able to ensure sample-efficiency in online RL though no offline data is required to be accessed.
This provides a bridge between the analysis techniques of offline and online RL.

\subsection{Coverage conditions}
Here we briefly introduce mathematical concepts of data coverage conditions.

A concept crucial to our discussions is the marginalized importance weights,
which aims to measure the distribution shift from an arbitrary distribution (here we use the occupancy measure by any policy $\pi$) $\rho^{\pi}:= \{ \rho^{\pi}_h \}_{h=1}^H$ to the data distribution $\mu := \{\mu_h\}_{h=1}^H$.
Define $w_{h, \pi/\mu}(s,a) := \frac{\rho_h^{\pi}(s,a)}{\mu_h(s,a)}$ if $\mu_h(s,a) \neq 0$, and then the commonly used concentrability coefficient for all policy in a policy class $\Pi$ \cite{munos2008finite,chen2019information} 
\begin{equation*}
    C_{\infty} := \max_{\pi \in \Pi, h \in [H]} \| w_{h, \pi/\mu} \|_{\infty} \leqslant \| w_{h, \pi/\mu} \|^2_{L^2({\mathrm{d}\mu})}\,,
\end{equation*}
where the $L^2({\mathrm{d}\mu})$ version is developed in \cite{xie2020q}.
For single-policy concentrability, only $\pi^{\star}$ instead of all possible $\pi \in \Pi$ is taken part in these concentrability coefficients \cite{uehara2022pessimistic}.

Concentrability coefficients can be also conducted by Bellman error, e.g., \cite{xie2021bellman}. Here we give an example from \cite{song2022hybrid} by denoting $\delta_h := f_h - \mathcal{T}_h f_{h+1}$ such that
 \begin{equation}\label{eq:Cpi}
        C_{\pi} := \max_{f \in \mathcal{F}} \frac{|[\mathbb{E}_{\rho_h^{\pi}} \delta_h(s,a)]|}{\sqrt{\mathbb{E}_{\mu_h} [\delta_h(s,a)]^2}}\,, \forall \pi \in \Pi\,,
    \end{equation}
which can be upper bounded by the coverability coefficient $C_{\infty}$.
Recently another coverability coefficient is defined as below to ensure sample-efficient exploration in online RL.  
\begin{definition}\citep[Coverability for online RL]{xie2023role}
\label{def:ccov}
The coverability coefficient $C_{\tt cov}$ is for a policy class $\Pi$
\begin{equation*}
\begin{split}
    C_{\tt cov} &:= \inf_{\mu_1,\ldots,\mu_H\in\Delta(\mathcal{S} \times \mathcal{A})}  \sup_{\pi \in \Pi,h\in [H] }\,
  \left\|\frac{\rho_h^\pi}{\mu_h}\right\|_{\infty} \\
  & = \max_{h\in [H] }  \sum_{(s,a) \in \mathcal{S} \times \mathcal{A}}\sup_{\pi \in \Pi} \rho_h^\pi(s,a) \,.
\end{split}
\end{equation*}
It is demonstrated to be equivalent to the cumulative reachability (see the second equality), refer to \citep[Lemma 3]{xie2023role} for details.
\end{definition}

Besides, in online RL, the ``uniformly excited feature" assumption \cite{abbasi2019politex,lazic2020maximum,hao2021online} is commonly used in reinforcement learning theory.
It requires that every occupancy measure induced by a policy $\pi$ yields a positive definite feature covariance matrix such that 
$ \mathbb{E}_{\rho_h^\pi} [\phi(s,a) \phi(s,a)^{\!\top}] \succcurlyeq c \bm I$ for some constant $c > 0$ where $\phi(s,a)$ is the corresponding feature mapping.
By doing so, each policy $\{\pi_h\}_{h=1}^H$ explores uniformly well in the feature space.
This assumption is also used in offline RL but the expectation is taken as a data distribution $\mu$, see feature coverage condition in 
 \citep[Assumption 2]{wang2021statistical}.

\subsection{Basic assumptions}
\label{sec:assum}

Our work focuses on general function approximation in online RL, which is based on the following two standard and commonly-used assumptions in reinforcement learning theory \cite{wang2020reinforcement,jin2021bellman,chen2022general,xie2023role}.
\begin{assumption}[Realizability]\label{assum:rea}
For a hypothesis class $\mathcal{F}$, we assume $Q^{\star}_h \in \mathcal{F}_h$ for any $h \in [H]$.  
\end{assumption}
Define $\mathcal{T}_h \mathcal{F}_{h+1}$ as $\{ \mathcal{T}_h f_{h+1} : f_{h+1} \in \mathcal{F}_{h+1}  \}$, we require the function class $\mathcal{F}$ to be closed under the Bellman operator $\mathcal{T}_h$ as below.
\begin{assumption}[Bellman completeness]\label{assum:bellman}
    For a hypothesis class $\mathcal{F}$, we assume $\mathcal{T}_h \mathcal{F}_{h+1} \in \mathcal{F}_h$ for any $h \in [H]$.  
\end{assumption}

If the function class $\mathcal{F}$ has finite elements, we can directly use its cardinality to measure its ``size".
If $\mathcal{F}$ has infinite elements, the covering number is needed to describe the ``size" of $\mathcal{F}$.
\begin{definition}
    [Covering number \cite{van1996weak}] 
    The $\epsilon$-covering number $\mathscr{N}(\epsilon, \mathcal{F}, \| \cdot \|_{\infty})$ for a function class $\mathcal{F}$ with respect to the metric $\| \cdot \|_{\infty}$ is the minimal number of balls with radius $\epsilon$ measured by $\| \cdot \|_{\infty}$-norm needed to cover the space $\mathcal{F}$. 
For short, we denote $\mathscr{N}(\epsilon, \mathcal{F}, \| \cdot \|_{\infty})$ as $\mathscr{N}_{\mathcal{F}}(\epsilon)$ by omitting  $\| \cdot \|_{\infty}$.
\end{definition}
\vspace{-0.2cm}



\section{Warm-up: Coverage conditions in $L^p$ spaces}
\label{sec:lpcw}

We give the definition of coverability coefficient in the $L^p$ space. 
In Section~\ref{sec:cwrl}, we demonstrate that, these coverage conditions are able to obtain better regret bound for sample-efficient online RL with general function approximation when compared to \cite{xie2023role}.
Furthermore, under our coverage conditions, computational efficiency can be even achieved if exploratory offline data are used in Section~\ref{sec:offline}.


\subsection{Improved sample-efficient online RL}
\label{sec:cwrl}

\begin{definition}[$L^p$ coverability coefficient]
\label{def:ccw}
Given a policy class $\Pi$, there exists a underlying distribution
$\mu=\{ \mu_h \}_{h=1}^{H}$ admitting $\sum_{(s,a)} \sqrt{\mu_h(s,a)} < \infty$, for any $p \geqslant 1$, the coverability coefficient $C_{\tt cw}$ defined in the $L^p$ space is
\begin{align*}
  C_{\tt cw} := \inf_{\mu_1,\ldots,\mu_H\in\Delta(\mathcal{S} \times \mathcal{A})}  \sup_{\pi \in \Pi,h\in [H] }\,
  \left\|\frac{\rho_h^\pi}{\mu_h}\right\|^p_{L^p(\mathrm{d}\mu_h)}\,.
\end{align*}
\end{definition}
{\bf Remark:} This definition simply extends the application scope of $C_{\tt cov}$ from the $L^{\infty}$ space to the $L^p$ space. One interesting thing is, we only require $\sum_{(s,a)} \sqrt{\mu_h(s,a)} < \infty$ rather than $\sum_{(s,a)} [\mu_h(s,a)]^{1/p} < \infty$, which makes the underlying distribution $\mu$ more general.

It is clear that $C_{\tt cw} \leqslant |\mathcal{S}| |\mathcal{A}|$ if we take $\mu$ is a uniform measure.
The relationship between $C_{\tt cw}$ and $C_{\tt cov}$ can be built by the following lemma, deferred the proof to Appendix~\ref{app:cwcov}.
\begin{lemma}\label{prop:cwcov}
Based on the definition of $C_{\tt cw}$ and $C_{\tt cov}$ in Definition~\ref{def:ccw} and Definition~\ref{def:ccov}, respectively, we have
    \begin{equation*}
      C^{\frac{1}{p}}_{\tt cw} \leqslant C_{\tt cov}\,, \forall p \geqslant 1\,.
    \end{equation*}
\end{lemma}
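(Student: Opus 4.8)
The plan is to bound $C_{\tt cw}$ by plugging a single, well-chosen distribution into its defining infimum. Since $C_{\tt cw}=\inf_{\mu}\sup_{\pi,h}\left\|\rho_h^\pi/\mu_h\right\|_{L^p(\mathrm{d}\mu_h)}^p$, any fixed admissible choice of $\mu$ already yields an upper bound on $C_{\tt cw}$. I would take $\mu=\mu^\star$ to be the minimizer of $C_{\tt cov}$; concretely, the second equality in Definition~\ref{def:ccov} points to the explicit choice $\mu_h^\star(s,a)\propto \sup_{\pi\in\Pi}\rho_h^\pi(s,a)$, for which $\sup_{\pi\in\Pi,\,h\in[H]}\left\|\rho_h^\pi/\mu_h^\star\right\|_\infty=C_{\tt cov}$.

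The heart of the argument is a pointwise interpolation. Writing $g_{h,\pi}:=\rho_h^\pi/\mu_h^\star\geqslant 0$, for every $p\geqslant 1$ one has $g_{h,\pi}^{\,p}=g_{h,\pi}^{\,p-1}\,g_{h,\pi}\leqslant \|g_{h,\pi}\|_\infty^{\,p-1}\,g_{h,\pi}$ pointwise. Integrating against $\mu_h^\star$ and using that $\rho_h^\pi$ is a probability measure, so that $\int g_{h,\pi}\,\mathrm{d}\mu_h^\star=\sum_{(s,a)}\rho_h^\pi(s,a)=1$, gives
\[
\left\|\frac{\rho_h^\pi}{\mu_h^\star}\right\|_{L^p(\mathrm{d}\mu_h^\star)}^{p}\leqslant \left\|\frac{\rho_h^\pi}{\mu_h^\star}\right\|_\infty^{\,p-1}.
\]
Taking the supremum over $\pi\in\Pi$ and $h\in[H]$ on both sides and recalling the defining property of $\mu^\star$ then yields $C_{\tt cw}\leqslant C_{\tt cov}^{\,p-1}$.

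To conclude, I would raise this to the power $1/p$, obtaining $C_{\tt cw}^{1/p}\leqslant C_{\tt cov}^{(p-1)/p}$, and then invoke $C_{\tt cov}\geqslant 1$. The latter holds because for any $\mu_h$ the average of $\rho_h^\pi/\mu_h$ under $\mu_h$ equals $\sum_{(s,a)}\rho_h^\pi(s,a)=1$, so its $\infty$-norm is at least $1$, and hence so is the inf-sup defining $C_{\tt cov}$. Since $(p-1)/p\leqslant 1$ and $C_{\tt cov}\geqslant 1$, we get $C_{\tt cov}^{(p-1)/p}\leqslant C_{\tt cov}$, which is the claim. The one point that needs genuine care is admissibility: the $C_{\tt cw}$ definition restricts attention to distributions with $\sum_{(s,a)}\sqrt{\mu_h(s,a)}<\infty$, so I would verify that the minimizer $\mu^\star$ of $C_{\tt cov}$ (or an $\varepsilon$-optimal surrogate, passing to the limit afterwards) can be taken to satisfy this summability constraint — automatic in the tabular/finite case, and more generally forced whenever $C_{\tt cov}$ is finite. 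The interpolation bound itself is elementary; this admissibility check is the only real obstacle.
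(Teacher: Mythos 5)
Your proof is correct, and it takes a genuinely different (in fact sharper) route than the paper's. The paper never commits to a specific distribution: it exchanges $\sup$ and $\sum$, factors the summand as $\bigl(\sup_\pi\rho_{\tilde h}^\pi/\mu_{\tilde h}\bigr)^{p-1}\sup_\pi\rho_{\tilde h}^\pi$, pulls out the $\infty$-norm ratio to the power $p-1$ (costing $C_{\tt cov}^{p-1}$ after the infimum), and bounds the remaining mass $\sum_{(s,a)}\sup_\pi\rho_{\tilde h}^\pi(s,a)$ by the cumulative-reachability form of $C_{\tt cov}$, arriving at $C_{\tt cw}\leqslant C_{\tt cov}^{p-1}\cdot C_{\tt cov}=C_{\tt cov}^p$. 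You instead instantiate the infimum at the explicit minimizer $\mu_h^\star\propto\sup_{\pi\in\Pi}\rho_h^\pi$ and exploit the normalization $\int(\rho_h^\pi/\mu_h^\star)\,\mathrm{d}\mu_h^\star=1$, so the residual factor in your interpolation $\|g\|_{L^p(\mathrm{d}\mu_h^\star)}^p\leqslant\|g\|_\infty^{p-1}\|g\|_{L^1(\mathrm{d}\mu_h^\star)}$ is exactly $1$ rather than $C_{\tt cov}$. This gives $C_{\tt cw}\leqslant C_{\tt cov}^{p-1}$, strictly stronger than the paper's intermediate bound (at $p=1$ your argument yields $C_{\tt cw}\leqslant 1$ where the paper's yields only $C_{\tt cw}\leqslant C_{\tt cov}$), at the price of needing the final step $C_{\tt cov}\geqslant 1$, which you justify correctly: if $\mu_h$ covers the support of $\rho_h^\pi$ the $\mu_h$-average of the ratio is $1$, and otherwise the $\infty$-norm is infinite.

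One caveat on your admissibility discussion: the claim that $\sum_{(s,a)}\sqrt{\mu_h^\star(s,a)}<\infty$ is ``forced whenever $C_{\tt cov}$ is finite'' is false over infinite state spaces. Take countably many states with $\sup_\pi\rho_h^\pi(s_n)\propto n^{-2}$: then $C_{\tt cov}<\infty$, yet $\sum_n\sqrt{\mu_h^\star(s_n)}\propto\sum_n n^{-1}=\infty$. So either the summability of $\mu^\star$ must be assumed directly, or you need an admissible $\varepsilon$-optimal surrogate to exist, as in your fallback. To your credit, the paper's own proof silently ignores this constraint altogether, so this does not put your argument behind the paper's; just drop the ``forced by finiteness'' assertion.
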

\cref{prop:cwcov} can be used for demonstrating a better regret bound in online RL when compared to that of $C_{\tt cov}$ as below.

We take the GOLF algorithm \cite{jin2021bellman} as an example to demonstrate the sample-efficient guarantees of online RL.
For self-completeness, we give a brief description on the GOLF algorithm \cite{jin2021bellman} in Algorithm~\ref{alg:golf}, see \cref{app:golf}.
This is a typical general function approximation algorithm in online RL, and yields sample-efficient guarantees if the BE dimension is small.
Here we show that, under our coverage condition $C_{\tt cw}$, we can still achieve the sample-efficient guarantees for online RL, with the proof deferred to Appendix~\ref{app:thmgolfcw}.
\begin{proposition}\label{thm:golfcw}
    Under Assumptions~\ref{assum:rea} and \ref{assum:bellman}, there exists a constant $c$ and the data coverage coefficient $C_{\tt cw}$ in Definition~\ref{def:ccw} such that for any $\delta \in (0,1)$, if we choose $\beta = c \log \left( \frac{\mathscr{N}_{\mathcal{F}}(1/T) TH }{ \delta } \right)$ in the GOLF algorithm \ref{alg:golf}, with probability at least $1-\delta$, we have
    \begin{equation*}
        {\tt Regret} (T) \lesssim \mathcal{O} \left( H \sqrt{{C}_{\tt cw}^{\frac{1}{p}} \beta T \log T} \right)\,.
    \end{equation*}
\end{proposition}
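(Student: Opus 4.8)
The plan is to follow the GOLF regret-analysis skeleton of \cite{jin2021bellman,xie2023role} and isolate the one place where the coverage condition enters, replacing the $L^\infty$ coverability $C_{\tt cov}$ by its $L^p$ analogue $C_{\tt cw}$. First I would reduce the regret to a cumulative on-policy Bellman error. Since $\pi^t=\pi^{f^{(t)}}$ is greedy with respect to the confidence-set maximizer $f^{(t)}$, optimism ($V^{f^{(t)}}_1(s_1)\ge V^{\star}_1(s_1)$) together with the standard Bellman-error decomposition gives
\begin{equation*}
{\tt Regret}(T)\;\le\;\sum_{t=1}^{T}\big[V^{f^{(t)}}_1(s_1)-V^{\pi^t}_1(s_1)\big]\;=\;\sum_{t=1}^{T}\sum_{h=1}^{H}\mathbb{E}_{\rho_h^{\pi^t}}\big[\delta^{(t)}_h(s,a)\big],
\end{equation*}
so it suffices to control $\sum_t\mathbb{E}_{\rho_h^{\pi^t}}[|\delta^{(t)}_h|]$ for each $h$ and sum over the $H$ steps.

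Next I would certify the confidence sets. Under \cref{assum:rea} and \cref{assum:bellman}, a martingale (Freedman-type) concentration argument combined with a union bound over a $(1/T)$-cover of $\mathcal{F}$ shows that, with probability at least $1-\delta$ and for the stated $\beta=c\log\big(\mathscr{N}_{\mathcal{F}}(1/T)TH/\delta\big)$: (i) $Q^{\star}_h$ lies in the GOLF confidence set $\mathcal{F}^{(t)}_h$ for every $t,h$, which yields the optimism used above; and (ii) the chosen iterate obeys the in-sample bound $\sum_{i<t}\mathbb{E}_{\rho_h^{\pi^i}}[(\delta^{(t)}_h)^2]\lesssim\beta$ for all $t,h$. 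This is precisely the GOLF guarantee and uses no coverage condition; the covering-number term is exactly where $\beta$ comes from.

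The heart of the argument, and the main obstacle, is to convert the \emph{past} in-sample control (ii) into a bound on the \emph{current} on-policy error, and it is here that $C_{\tt cw}$ enters in place of $C_{\tt cov}$. By $|\delta|\le 1$ and Cauchy--Schwarz over $t$,
\begin{equation*}
\sum_{t=1}^{T}\mathbb{E}_{\rho_h^{\pi^t}}\big[|\delta^{(t)}_h|\big]\;\le\;\sqrt{T\sum_{t=1}^{T}\mathbb{E}_{\rho_h^{\pi^t}}\big[(\delta^{(t)}_h)^2\big]}\,,
\end{equation*}
reducing the task to a coverability-potential estimate for $\sum_t\mathbb{E}_{\rho_h^{\pi^t}}[(\delta^{(t)}_h)^2]$. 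Writing the analytic reference $\bar\mu_t:=\mu_h+\sum_{i<t}\rho_h^{\pi^i}$ (the infimizer $\mu_h$ of \cref{def:ccw} mixed with the cumulative on-policy data), I would split each coordinate as $\frac{\rho_h^{\pi^t}}{\bar\mu_t}\cdot\bar\mu_t$, use the factor $\bar\mu_t(\delta^{(t)}_h)^2$ to absorb the past errors via (ii), and telescope the potential $\sum_t\frac{\rho_h^{\pi^t}}{\bar\mu_t}$ coordinate-wise through the elementary bound $\sum_{t}\frac{a_t}{z+\sum_{i<t}a_i}\lesssim\log\big(1+\tfrac1z\sum_t a_t\big)$. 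In \cite{xie2023role} this final summation over $(s,a)$ is controlled by the total reachability $C_{\tt cov}$. Instead I would perform the change of measure against $\mu_h$ using H\"older's inequality at the exponent conjugate to $p$, so that the reference mass surviving the telescoping is the $L^p(\mathrm{d}\mu_h)$ norm $\|\rho_h^\pi/\mu_h\|_{L^p(\mathrm{d}\mu_h)}=C_{\tt cw}^{1/p}$ rather than $C_{\tt cov}$; the exponent $1/p$ is exactly what renders the $p$-th-power quantity in \cref{def:ccw} homogeneous with the (linear) importance weight appearing in the potential. The delicate point is matching these H\"older exponents against the $L^2$ form of the in-sample constraint (ii), which dictates how $\mu_h$ and $\sum_{i<t}\rho_h^{\pi^i}$ must be combined inside $\bar\mu_t$.

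Combining the three steps gives $\sum_t\mathbb{E}_{\rho_h^{\pi^t}}[|\delta^{(t)}_h|]\lesssim\sqrt{C_{\tt cw}^{1/p}\beta T\log T}$ for each $h$; summing over $h\in[H]$ produces the claimed $\mathcal{O}\big(H\sqrt{C_{\tt cw}^{1/p}\beta T\log T}\big)$. As a consistency check, \cref{prop:cwcov} ($C_{\tt cw}^{1/p}\le C_{\tt cov}$) shows this bound never exceeds the $C_{\tt cov}$-based regret of \cite{xie2023role}, confirming that the improvement is genuine and not an artifact of the $L^p$ reformulation.
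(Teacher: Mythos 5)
Your first two steps coincide with the paper's proof: the optimism-based reduction of the regret to $\sum_{t,h}\mathbb{E}_{\rho_h^{(t)}}[\delta_h^{(t)}]$, and the GOLF guarantee (the paper's \cref{lem:jin2021}) that $Q^\star\in\mathcal{F}^{(t)}$ and $\sum_{i<t}\mathbb{E}_{\rho_h^{(i)}}[(\delta_h^{(t)})^2]\lesssim\beta$ with the stated $\beta$. The gap is in your third step, and it is fatal to the claimed rate. Your outer Cauchy--Schwarz over $t$ reduces the problem to bounding $\sum_{t}\mathbb{E}_{\rho_h^{\pi^t}}[(\delta_h^{(t)})^2]$, the squared Bellman error under the \emph{current} measure; to recover $\sqrt{C_{\tt cw}^{1/p}\beta T\log T}$ from your display you would need this second moment to be $\lesssim C_{\tt cw}^{1/p}\beta\log T$. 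No such bound is available: since $|\delta_h^{(t)}|\leqslant 1$ gives $(\delta_h^{(t)})^2\leqslant|\delta_h^{(t)}|$, controlling this quantity is no easier than controlling the first moment you started from, and the best these tools yield is of order $\sqrt{C_{\tt cw}^{1/p}\beta T\log T}$ --- which, fed back through your display, produces a $T^{3/4}$ regret, not $\sqrt{T}$. Moreover, your plan for bounding it --- ``absorb the past errors via (ii)'' (a sum over $(s,a)$ at fixed $t$, worth $\beta$ per round) while ``telescoping the potential coordinate-wise'' (a sum over $t$ at fixed $(s,a)$, worth $\log T$ per pair) --- asks the single double sum $\sum_{t}\sum_{(s,a)}\frac{\rho_h^{(t)}}{\bar\mu_t}\,\bar\mu_t\,(\delta_h^{(t)})^2$ to be evaluated in two incompatible orders; decoupling it into the product $\beta\cdot\log T$ of the two marginal bounds is false in general (put both factors on a ``diagonal'' in $(t,(s,a))$). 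The paper never forms the on-policy second moment: it applies Cauchy--Schwarz \emph{jointly over $(t,s,a)$} with the split $\rho_h^{(t)}=\frac{\rho_h^{(t)}}{\sqrt{\smash[b]{\tilde\rho_h^{(t)}}}}\cdot\sqrt{\smash[b]{\tilde\rho_h^{(t)}}}$, so the squared error remains weighted by the cumulative past measure $\tilde\rho_h^{(t)}$ (giving ${\tt I_B}\lesssim\beta T$ by \cref{lem:jin2021}) while the ratio term ${\tt I_A}=\sum_{t,(s,a)}(\rho_h^{(t)})^2/\tilde\rho_h^{(t)}$ is handled per state-action and yields the $C_{\tt cw}^{1/p}\log T$ factor. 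That joint application is exactly the structure your first reduction discards.

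Two further ingredients are missing from your sketch. First, the burn-in phase: the elementary telescoping bound you cite, $\sum_t\frac{a_t}{z+\sum_{i<t}a_i}\lesssim\log\big(1+\frac1z\sum_t a_t\big)$, is false without a bounded-increment condition $a_t\lesssim z+\sum_{i<t}a_i$ (take $a_1\gg z$; the left side is at least $a_1/z$). This is precisely why the paper's \cref{lem:per_sa_ep} assumes $\rho^{(t)}(z)\leqslant[C\mu(z)]^p$ and why the proof introduces the exploration time $\tau_h(s,a)=\min\{t:\tilde\rho_h^{(t)}(s,a)\geqslant[C_{\tt cw}\mu_h^\star(s,a)]^p\}$ in Eq.~\eqref{eq:tauh}, handling $t<\tau_h(s,a)$ separately via $|\delta_h^{(t)}|\leqslant 1$ at a total cost $\mathcal{O}(C_{\tt cw}^p)$ per step $h$; in the $L^p$ setting $\rho_h^{\pi}/\mu_h$ need not be bounded in sup-norm, so before burn-in the ratio against your $\bar\mu_t$ is genuinely uncontrolled. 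Second, where $p$ and the side condition of \cref{def:ccw} actually enter: the pointwise consequence of the definition is $\rho_h^{(t)}(s,a)\leqslant C_{\tt cw}^{1/p}[\mu_h^\star(s,a)]^{\frac{p-1}{p}}$ (Eq.~\eqref{eq:ceffstar}), and both the burn-in cost and the ${\tt I_A}$ bound close only through the summability $\sum_{(s,a)}[\mu_h^\star(s,a)]^{\frac{p-1}{p}}<\infty$ guaranteed by the requirement $\sum_{(s,a)}\sqrt{\mu_h(s,a)}<\infty$ in \cref{def:ccw} (Eq.~\eqref{eq:boundedp}). Your H\"older-at-conjugate-exponent device never invokes this condition, yet without it the change of measure against $\mu_h$ does not terminate in a finite constant; any repaired version of your argument will need it, together with the burn-in split, at exactly the points just indicated.
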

{\bf Remark:} We obtain a better regret bound than \citep[Theorem 1]{xie2023role} due to an improved data coverage coefficient in Lemma~\ref{prop:cwcov}.


Our result in \cref{thm:golfcw} demonstrates that if the coverage coefficient $C_{\tt cw}$ is small, the GOLF algorithm can achieve sublinear regret for sample-efficient guarantees without requiring the structure assumption of MDP. 
This is because, the coverage condition in fact implicitly imposes some structural assumptions on the MDP dynamics, see \citep[Theorem 4]{chen2019information} for details.
It is an intrinsic structural property of MDPs that describes the complexity of probability transitions.
This shares a similar spirit with the sub-optimality gap \cite{he2021logarithmic} on describing the complexity of MDPs under probability transitions.
Nevertheless, the condition of the sub-optimality gap is stronger because the reward feedback is also considered.

There appears a natural question on the relationship between coverage conditions and structural assumptions.
Since coverage conditions do not involve additional information, they are often weaker than structural assumptions.
For example, Sequential Exploration Coefficient (SEC) \cite{xie2023role}, as a structural assumption, is a general version of coverage condition, which admits
\begin{equation*}
    {\tt SEC} \lesssim C^{\frac{1}{p}}_{\tt cw} \log T \leqslant C_{\tt cov} \log T\,.
\end{equation*}
Apart from this, the relationship between various structural assumptions and coverage conditions requires a refined analysis but is
beyond the scope of this work.

Nevertheless, coverage conditions are still more general than linear MDP \cite{jin2020provably}.
For example, in the Atari game, the state space (raw pixels) can be very large, but the dynamics is determined by a small number of unobserved latent states.
This can be described as block MDP \cite{du2019provably}, and accordingly the 
coverability coefficient can be small as it scales only with the number of latent states instead of the size of the whole state space.




\subsection{Efficient online RL with exploratory offline data}
\label{sec:offline}
As mentioned before, the GOLF algorithm is not computation efficient due to the constructed ``global" confidence set.
To avoid sophisticated exploration schemes, one typical way is to use offline data for exploration, which is recently popular both empirically \cite{ball2023efficient} and theoretically \cite{song2022hybrid,wagenmaker2022leveraging}. 

Here we use the hybrid-Q algorithm \cite{song2022hybrid} to demonstrate the benefit of our data coverage condition when involving with offline data on the computation efficiency.
This algorithm is based on the classical fitted Q-iteration (FQI) algorithm \cite{ernst2005tree} and uses offline data regarding the distribution $\nu := \{ \nu_h \}_{h=1}^H$ for exploration, and thus the computation complexity of this algorithm is the same as FQI with a least squares regression oracle, refer to Appendix~\ref{app:hybridq} for details of the hybrid-Q algorithm.

Here we aim to demonstrate that without any structural assumption, the all-policy coverage conditions can ensure efficient online RL, both statistically and computationally if exploratory offline data are used.
In the following, we take the all-policy concentrability coefficient $C_{\pi}$ in Eq.~\eqref{eq:Cpi} and our coverage condition $C_{\tt cw}$ in Definition~\ref{def:ccw} as examples to illustrate this, with the proof deferred to Appendix~\ref{app:offccw}.

\begin{proposition}\label{prop:hyper}
    Under Assumptions~\ref{assum:rea} and \ref{assum:bellman}, then for any $\delta \in (0,1)$, $T \in \mathbb{N}$, if we choose $n_\mathrm{off} = T$ in Algorithm~\ref{alg:fqi} and denote $\beta := \log \left( \frac{\mathscr{N}_{\mathcal{F}}(1/T) TH }{ \delta } \right)$, with probability at least $1-\delta$,
\begin{case}
 under the all-policy concentrability coefficient $C_{\pi}$ in Eq.~\eqref{eq:Cpi}, we have
    \begin{equation*}
        {\tt Regret} \lesssim \mathcal{O} \left( C_{\pi} H \sqrt{\beta T} \right)\,.
    \end{equation*}
\end{case}
\begin{case}
    there exists a data distribution $\nu := \{ \nu_h \}_{h=1}^H$ that provides a single-policy concentrability, $\max_{s,a,h} \frac{ \mu_h^{\star}(s,a) }{ \nu^2_h(s,a) } < \widetilde{C}$,
    where $\mu_h^{\star}(s,a)$ realizes the value of the coverage coefficient $C_{\tt cw}$ endowed by $L^p(\mathrm{d}\mu)$ norm with $p \geqslant 1$ in Definition~\ref{def:ccw}, we have
     \begin{equation*}
        {\tt Regret} (T) \lesssim \mathcal{O} \left( {C}_{\tt cw}^{\frac{1}{p}} H \sqrt{ \beta \widetilde{C} T \log T} \right)\,.
    \end{equation*}
\end{case}
\end{proposition}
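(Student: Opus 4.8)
The plan is to follow the regret-analysis framework of \citet{xie2023role} and \citet{song2022hybrid}: reduce the regret to a sum of on-policy average Bellman errors, control those errors through the least-squares guarantee of the FQI-type update, and finally convert the on-policy errors into offline-distribution errors via the appropriate coverage coefficient. I would start from the value-decomposition (simulation) lemma applied to the greedy policy $\pi^t=\pi^{f^t}$, which for the non-optimistic FQI iterates yields
\[
{\tt Regret}(T)\lesssim\sum_{t=1}^{T}\sum_{h=1}^{H}\Big(\big|\mathbb{E}_{\rho_h^{\pi^\star}}[\delta_h^{(t)}]\big|+\big|\mathbb{E}_{\rho_h^{\pi^t}}[\delta_h^{(t)}]\big|\Big),
\]
so the analysis decouples into an error along the optimal occupancy $\rho_h^{\pi^\star}$ and an error along the deployed occupancy $\rho_h^{\pi^t}$.

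Next I would invoke the in-sample generalization guarantee of the hybrid FQI update. Under \cref{assum:rea} and \cref{assum:bellman}, the squared-loss regression oracle with $n_{\mathrm{off}}=T$ offline samples controls the squared Bellman residual on the data distribution, i.e.\ $\mathbb{E}_{\nu_h}[(\delta_h^{(t)})^2]\lesssim\beta/T$ in Case~2 (and $\mathbb{E}_{\mu_h}[(\delta_h^{(t)})^2]\lesssim\beta/T$ in Case~1), where $\beta=\log(\mathscr{N}_{\mathcal{F}}(1/T)TH/\delta)$ absorbs the covering-number complexity through a uniform-convergence/martingale concentration step. Case~1 then closes immediately: by the definition of $C_\pi$ in Eq.~\eqref{eq:Cpi}, each on-policy error satisfies $|\mathbb{E}_{\rho_h^\pi}[\delta_h^{(t)}]|\le C_\pi\sqrt{\mathbb{E}_{\mu_h}[(\delta_h^{(t)})^2]}$ for $\pi\in\{\pi^\star,\pi^t\}$; substituting the $\beta/T$ bound and summing over $t\in[T]$, $h\in[H]$ gives $\sum_{t,h}C_\pi\sqrt{\beta/T}=\mathcal{O}(C_\pi H\sqrt{\beta T})$, with no extra logarithmic factor.

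Case~2 is the technical heart, and here I would establish a coverage-to-offline conversion lemma of the form $|\mathbb{E}_{\rho_h^\pi}[\delta_h^{(t)}]|\lesssim C_{\tt cw}^{1/p}\sqrt{\widetilde{C}\,\mathbb{E}_{\nu_h}[(\delta_h^{(t)})^2]}$. Writing $\rho_h^\pi\delta_h^{(t)}=\big(\rho_h^\pi/(\mu_h^\star)^{(p-1)/p}\big)\big((\mu_h^\star)^{(p-1)/p}\delta_h^{(t)}\big)$ and applying Hölder with conjugate exponents $(p,\tfrac{p}{p-1})$ extracts the factor $\big(\sum_{s,a}(\rho_h^\pi)^p/(\mu_h^\star)^{p-1}\big)^{1/p}\le C_{\tt cw}^{1/p}$ from \cref{def:ccw}, leaving a residual norm of $\delta_h^{(t)}$ in $L^{p/(p-1)}(\mathrm{d}\mu_h^\star)$. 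I would then pass to the offline measure using the single-policy condition $\mu_h^\star\le\widetilde{C}\,\nu_h^2$ together with $\nu_h\le 1$ and the boundedness $|\delta_h^{(t)}|\le 1$, which gives $\mathbb{E}_{\mu_h^\star}[(\delta_h^{(t)})^2]\le\widetilde{C}\,\mathbb{E}_{\nu_h}[(\delta_h^{(t)})^2]$, while the summability $\sum_{s,a}\sqrt{\mu_h^\star(s,a)}<\infty$ from \cref{def:ccw} keeps all constants finite on the infinite state space. Plugging $\mathbb{E}_{\nu_h}[(\delta_h^{(t)})^2]\lesssim\beta/T$ into the conversion and summing over $t,h$ produces $\mathcal{O}(C_{\tt cw}^{1/p}H\sqrt{\beta\widetilde{C}T\log T})$.

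The main obstacle I anticipate is precisely this conversion for general $p\ge 1$: the least-squares oracle only delivers an $L^2(\mathrm{d}\nu_h)$ guarantee, whereas Hölder against the $L^p$-coverability density measures the residual in $L^{p/(p-1)}(\mathrm{d}\mu_h^\star)$, whose exponent equals $2$ only at $p=2$. Reconciling the two for arbitrary $p$ — exploiting $|\delta_h^{(t)}|\le 1$ to interpolate between the exponents and absorbing the mismatch into the additional $\log T$ factor that is absent in Case~1 — is the delicate step. I would additionally need to verify that the current-policy term $\mathbb{E}_{\rho_h^{\pi^t}}[\delta_h^{(t)}]$ remains admissible under the same conversion even though $\rho_h^{\pi^t}$ is not directly covered by the pre-collected offline data, which is exactly where the online deployment of $\pi^t$ and the coverability potential argument of \citet{xie2023role} come into play.
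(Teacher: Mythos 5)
Your Case~1 argument is correct and essentially the paper's own: the definition of $C_\pi$ in Eq.~\eqref{eq:Cpi} converts each on-policy average Bellman error into $\sqrt{\smash[b]{\mathbb{E}_{\nu_h}[(\delta_h^{(t)})^2]}}$, \cref{lem:fqioff} with $n_{\mathrm{off}}=T$ supplies the $\beta/T$ rate, and Cauchy--Schwarz over $t$ gives $C_\pi H\sqrt{\beta T}$; your two-term (non-optimistic) decomposition over $\rho_h^{\pi^\star}$ and $\rho_h^{\pi^t}$ is harmless — if anything more careful than the paper's display — because $C_\pi$ is all-policy.

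The genuine gap is in Case~2, at exactly the step you flag as delicate, and it cannot be absorbed into a $\log T$ factor. Your H\"older step is correct: with $q=p/(p-1)$ it yields
\begin{equation*}
\big|\mathbb{E}_{\rho_h^{\pi}}[\delta_h^{(t)}]\big| \;\leqslant\; C_{\tt cw}^{\frac{1}{p}}\,\big\|\delta_h^{(t)}\big\|_{L^{q}(\mathrm{d}\mu_h^{\star})}\,.
\end{equation*}
For $p\geqslant 2$ this closes: $q\leqslant 2$ and $\mu_h^\star$ is a probability measure, so by monotonicity of $L^q$ norms $\|\delta\|_{L^q(\mathrm{d}\mu_h^\star)}\leqslant\|\delta\|_{L^2(\mathrm{d}\mu_h^\star)}\leqslant\sqrt{\widetilde{C}\,\mathbb{E}_{\nu_h}[\delta^2]}$ (using $\mu_h^\star\leqslant\widetilde{C}\nu_h^2\leqslant\widetilde{C}\nu_h$), and summing gives $C_{\tt cw}^{1/p}H\sqrt{\widetilde{C}\beta T}$, in fact without the $\log T$. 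But for $1\leqslant p<2$ one has $q>2$, and the only interpolation available from $|\delta_h^{(t)}|\leqslant 1$ goes the wrong way: $\|\delta\|_{L^q(\mathrm{d}\mu_h^\star)}\leqslant\big(\mathbb{E}_{\mu_h^\star}[\delta^2]\big)^{1/q}\lesssim(\widetilde{C}\beta/T)^{(p-1)/p}$, whose sum over $t\in[T]$ is of order $T^{1/p}$ — a polynomial excess $T^{1/p-1/2}$ over the claimed $\sqrt{T\log T}$, degenerating to linear regret as $p\to 1$ (where $q=\infty$). A logarithmic factor cannot repair a polynomial mismatch, so as written your proposal proves the statement only for $p\geqslant 2$.

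The paper circumvents this by never invoking duality: it reuses the exploration-phase machinery of \cref{thm:golfcw} — the stopping times $\tau_h(s,a)$ defined against $[C_{\tt cw}\mu_h^\star(s,a)]^p$, then Cauchy--Schwarz against the cumulative unnormalized measure $\tilde\rho_h^{(t)}$ as in Eq.~\eqref{eq:reg_CS}. The ratio term ${\tt I_A}$ is bounded by $C_{\tt cw}^{1/p}\log T$ via \cref{lem:per_sa_ep} (this is where the $\log T$ actually originates), while the weighted in-sample term ${\tt I_B}$ is handled by the \emph{pointwise} domination $\tilde\rho_h^{(t)}(s,a)\leqslant[C_{\tt cw}\mu_h^\star(s,a)]^p+t\,C_{\tt cw}^{1/p}[\mu_h^\star(s,a)]^{(p-1)/p}$ and the change of measure $\sqrt{\mu_h^\star}\leqslant\sqrt{\widetilde{C}}\,\nu_h$, so that the offline oracle of \cref{lem:fqioff} is only ever invoked in $L^2(\mathrm{d}\nu_h)$, independently of $p$. (Caveat: the paper's own reduction $[\mu_h^\star]^{(p-1)/p}\leqslant\sqrt{\mu_h^\star}$, as in Eq.~\eqref{eq:boundedp}, also silently requires $p\geqslant 2$, so the regime $p<2$ is delicate in both treatments.) Finally, your residual worry about the current-policy term is not where the difficulty lies: since $C_{\tt cw}$ is all-policy, your H\"older conversion covers $\rho_h^{\pi^t}$ exactly as it covers $\rho_h^{\pi^\star}$, and no separate coverability-potential argument is needed for the conversion itself.
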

{\bf Remark:} We make the following remarks:\\        
\textit{i):} \citet{song2022hybrid} achieve the regret bound $\widetilde{\mathcal{O}}( C_{\pi^{\star}} H \sqrt{d \beta T} )$, where the single-policy concentrability coefficient $C_{\pi^{\star}}$ is defined in Eq.~\eqref{eq:Cpi}, and $d$ is the Bilinear rank or BE dimension.
Instead, in {\bf Case 1}, the structure assumptions on MDP are not needed to ensure the same regret if the all-policy concentrability coefficient $C_{\pi}$ is employed.\\
\textit{ii):} In {\bf Case 2}, if the all-policy concentrability coefficient ${C}_{\tt cw}^{\frac{1}{p}}$ is used, an extra single-policy concentrability coefficient $\widetilde{C}$ is needed.
As a single-policy version, it is often smaller than ${C}_{\tt cw}^{\frac{1}{p}}$, and can be even a constant if we take $\nu$ to match $\mu^{\star}$.
In this case, the $\widetilde{O}(\sqrt{T})$-regret can be still achieved without structural assumptions on MDP.

\AB{}
        {
                \AB{\cite{song2022hybrid}}
                {
                       single-policy coefficient $C_{\pi^{\star}}$ \\
                        structural assumptions
                }\\
                {\bf Case 1}: all-policy coefficient $C_{\pi}$ \\
                \AB{{\bf Case 2}:}
                {
                         all-policy coefficient ${C}_{\tt cw}^{\frac{1}{p}}$ \\
                        single-policy coefficient $\widetilde{C}$
                } \\
        }
        
{\bf Proofs techniques:}
To prove Propositions~\ref{thm:golfcw} and \ref{prop:hyper}, we follow the proof framework in \citep[Theorem 1]{xie2023role} on the regret analysis and the decomposition of the on-policy average Bellman error.
In \cref{thm:golfcw}, the difference lies in how to estimate the occupancy measure ratio by different coverage conditions. 
Further, in \cref{prop:hyper}, since no exploration scheme is used, we need to 
build the connection between $\rho_h^{(t)}$ and $\nu_h$ by coverage conditions, which is used for the estimation of the in-sample squared Bellman error.

\if 0
\begin{table*}[!htb]
	\centering
	\fontsize{9}{8}\selectfont
	\begin{threeparttable}
		\caption{Comparison with \cite{song2022hybrid} on the hybrid-Q algorithm with exploratory offline data}
		\label{tabbenefit}
		\begin{tabular}{ccccc}
			\toprule
			Results &coverage condition & structural assumption? & regret bound   \cr
			\midrule
			\cite{song2022hybrid} & $C_{\pi^{\star}}$ in Eq.~\eqref{eq:Cpi} & Bilinear class $d$ & $\widetilde{\mathcal{O}}(\sqrt{d C_{\pi^{\star}} T})$  \cr
                   \midrule
			\cref{prop:hyper} & $C_\pi$ in Eq.~\eqref{eq:Cpi}  & \xmark & $\widetilde{\mathcal{O}}(\sqrt{C_{\pi} T})$  \cr
                \midrule
			\cref{prop:hyper} & Def.~\ref{def:ccw}: $C_{\tt cw}$ in $L^{p}(\mathrm{d}\mu)$  & \xmark & $\widetilde{\mathcal{O}}\Big(C^{\frac{1}{p}}_{\tt cw} \sqrt{T}\Big)$  \cr
			\bottomrule
		\end{tabular}
	\end{threeparttable}
\end{table*}
\fi

The results in this warm-up section provide a good justification of the usage of general coverage conditions for efficient online RL.
This will motivate us to study partial/rest coverage trade-off and coverage conditions in linear MDP presented in the next two sections.

\if 0
\begin{proposition}\label{prop:cpie}
    Under Assumptions~\ref{assum:rea} and \ref{assum:bellman}, there exists a data distribution $\nu := \{ \nu_h \}_{h=1}^H$ that admits a concentrability coefficient $C_{\pi}$ in Eq.~\eqref{eq:Cpi}.
    Then for any $\delta \in (0,1)$, $T \in \mathbb{N}$, if we choose $m_\mathrm{off} = T$ in Algorithm~\ref{alg:fqi}, with probability at least $1-\delta$, we have
    \begin{equation*}
        {\tt Regret} (T) \lesssim \mathcal{O} \left( \max_{\pi \in \Pi} {C}_{\pi} H \sqrt{ \beta T \log T} \right)\,.
    \end{equation*}
\end{proposition}
{\bf Remark:} Our result is based on all-policy concentrability which requires a highly exploratory dataset.
As suggested by \citep[Theorem 4]{chen2019information}, it also implicitly imposes structural assumptions on the MDP dynamics.
Accordingly, the following proposition demonstrates that we can get rid of the MDP structure assumption in \cite{song2022hybrid} if we use the same concentrability coefficient.
\fi

\section{Partial/rest coverage trade-off}
\label{sec:disen}

As we know, partial coverage or even single coverage conditions are more realistic in practice, and widely studied in offline RL \cite{xie2021bellman,jin2021pessimism,zhan2022offline}.
However, \citet{xie2023role} point out that ${C_{\tt cov}}$ under a single-policy coverage can not ensure sample-efficient online RL.
Accordingly, in this section, based on our $L^p$ coverage concepts in Section~\ref{sec:lpcw}, we decouple the all-policy coverage condition into a partial-policy coverage condition by some underlying distribution and the rest-policy coverage condition, which is more realistic in practice.
Under this setting, we aim to diagnose the effect of partial/rest coverage condition on the regret bound.

\subsection{Definition of partial/rest coverage condition}

Here we define the partial/rest coverage condition and then study the statistical guarantees of online RL algorithms.

{\bf Definition of partial policy class:} Motivated by $C_{\tt cov}$ in Definition~\ref{def:ccov} that can be regarded as a cumulative area over all possible $\rho_h^{\pi}$, we consider a possible policy class by evaluating how a policy is close to the \emph{reference} policy $\bar{\pi} := \{ \bar{\pi}_h \}_{h=1}^H$.
A nature metric is the total variation (TV) distance\footnote{Here the used metric between two distributions can be general, e.g., Wasserstein distance \cite{ruschendorf1985wasserstein,fournier2015rate}.}, and accordingly, the candidate policy set $\mathcal{M} = \{ \mathcal{M}_h \}_{h=1}^H$ is defined as
\begin{equation*}
    \mathcal{M}_h(\zeta) := \{ \pi_h: | \mathrm{TV} (\rho^{\pi_h}_h, \rho_h^{\bar{\pi}_h}) \leqslant \zeta  \}\,,
\end{equation*}
where the reference policy $\bar{\pi}$ can be set to the optimal policy $\pi^{\star}$ or any possible policy that is controlled by some (unknown) data distribution.
We can see that, $\bar{\pi}$ can be a high-quality policy or a low-quality policy, which is more realistic in practice.
Clearly we have $\zeta \in [0, 2]$ based on the definition of the TV distance.
If $\zeta = 0$, we only have single policy concentrability (i.e., only the reference policy) and if $\zeta = 2$, we can recover the whole policy class $\Pi$.
Hence this policy class $\mathcal{M}_h(\zeta)$ is a partial or incomplete policy class, and then the coverability coefficient defined over this policy class can be denoted as a partial coverage condition, introduced as below.


\begin{definition}
\label{def:ccovp}
The partial coverability coefficient ${P}_{\tt cov}(\zeta)$ is for a (partial) policy class $\mathcal{M}(\zeta)$
\begin{equation*}
\begin{split}
    P_{\tt cov}(\zeta) &:= \inf_{\mu_1,\ldots,\mu_H\in\Delta(\mathcal{S} \times \mathcal{A})}  \sup_{\pi \in \mathcal{M}(\zeta),h\in [H] }\,
  \left\|\frac{\rho_h^\pi}{\mu_h}\right\|_{\infty} \,.
\end{split}
\end{equation*}
\end{definition}
{\bf Remark:} For notional simplicity, we denote $ P_{\tt cov}(\zeta)$, $\mathcal{M}(\zeta)$ by $ P_{\tt cov}$, $\mathcal{M}$ for short.

Denote $ \hat{\mu}_h^\star := \argmin_{ \mu_h \subseteq \Delta(\mathcal{S}\times \mathcal{A} )} \sup_{\pi\in \mathcal{M}}\, \left\| \frac{\rho_h^\pi}{\mu_{h}}\right\|_{\infty}$, we can easily obtain the equivalent definition
$P_{\tt cov} =P_{\tt cr} := \max_{h\in [H] }  \sum_{(s,a) \in \mathcal{S} \times \mathcal{A}}\sup_{\pi \in \mathcal{M}} \rho_h^\pi(s,a) $, refer to the proof in Appendix~\ref{app:partialequ}.

Clearly $P_{\tt cov} \geqslant 1$, and the single policy concentrability implies $P_{\tt cov} = 1$ by taking $\mu_h:= \rho^{\bar{\pi}_h}_h$.
That means, $P_{\tt cov}$ looses the ability to represent the complexity of state transition in MDPs, and thus is insufficient to ensure sample-efficient learning in online RL.
In this case, we need to introduce extra conditions that aid for sufficient learning.

{\bf Coverage condition outside $\mathcal{M}$:} 
We give the definition of the rest coverage condition related to the policy $\hat{\mu}_h^\star$ over some policies outside $\mathcal{M}$, i.e., its complementary set $\bar{\mathcal{M}}$. 

\begin{definition}\label{def:ppar}
For any $(s,a) \in \mathcal{S} \times \mathcal{A}$, the state-action pair set ${\mathcal{B}^{\bar{\mathcal{M}}}} := \{ {\mathcal{B}}^{\bar{\mathcal{M}}}_h \}_{h=1}^H$ is denoted as
\begin{equation*}
    {\mathcal{B}}^{\bar{\mathcal{M}}}_h := \left\{ (s,a) \mid \rho_h^{\pi}(s,a) > c_1 P_{\tt cov} \hat{\mu}^{\star}_h(s,a), \forall \pi \in \bar{\mathcal{M}} \right\} \,,
\end{equation*}
with some constant $c_1 \geqslant 1$, then the partial coverage condition outside $\mathcal{M}$ is 
\begin{equation*}
    P_{\tt out}(\zeta) := \max_{h \in [H], \pi \notin \mathcal{M}} \left\| \frac{\rho_h^{\pi}}{\hat{\mu}^{\star}_h} \mathbbm{1}_{{\mathcal{B}}^{\bar{\mathcal{M}}}_h} \right\|_{L^2}^{\frac{1}{2}} \,,
\end{equation*}
defined in the $L^2$ space, and the indicator function $\mathbbm{1}_{{\mathcal{B}}^{\bar{\mathcal{M}}}_h} = 1$ if $(s,a) \in {\mathcal{B}}^{\bar{\mathcal{M}}}_h$, and otherwise is zero.
\end{definition}
{\bf Remark:} This quantity $P_{\tt out}$ defined in the $L^2$ space is also related to $\zeta$ due to $\mathcal{M}(\zeta)$, and we also omit it for notational simplicity. 

Clearly $P_{\tt out} \geqslant 0$, and there exists a trade-off between $P_{\tt cov}(\zeta)$ and $P_{\tt out}(\zeta)$ that depends on $\zeta$. If $\zeta$ increases, $P_{\tt cov}$ increases but $P_{\tt out}$ decreases. 
For example, if $\zeta = 2$, we have $P_{\tt cov} = C_{\tt cov}$ and $P_{\tt out} = 0$; if $\zeta = 0$, we have $P_{\tt cov} = 1$.
Accordingly, in this case $P_{\tt out}$ is used to measure the structural information of MDPs. Here we explain this a bit.

\if 0
\begin{definition}\label{def:pparstar}
Denote the state-action pair set $\widetilde{\mathcal{B}}^{(t)} := \{ {\mathcal{B}}^{(t)}_h \}_{h=1}^H$ with $t \in [T]$ and
\begin{equation*}
    \widetilde{\mathcal{B}}_h := \left\{ (s,a) \in \mathcal{S} \times \mathcal{A} \mid \rho_h^{(t)}(s,a) > c_1 {\rho}^{\pi^\star}_h(s,a) \right\} \,,
\end{equation*}
with some constant $c_1 \geqslant 1$, then the partial coverage condition related to $\pi^{\star}$ outside $\mathcal{M}$ is 
\begin{equation*}
    \widetilde{P}_{\tt par} := \max_{h \in [H]} \left\| \frac{\rho^{\pi}_h}{\rho^{\pi^\star}_h} \right\|_{\infty}, \quad  \forall (s,a) \in \bigcup_{t = \hat{\tau}_h}^T \widetilde{\mathcal{B}}_h^{(t)},~~ \pi \notin \mathcal{M} \,.
\end{equation*}
\end{definition}
\fi

In our proof (\emph{c.f.} Appendix~\ref{app:partialpcov}), we also show that if we take the reference policy $\bar{\pi}:= \pi^{\star}$ and $c_1$ large enough, e.g., $c_1 = \Omega(H)$, the probability that the optimal policy $\pi^{\star}$ visits this state-action pair set ${\mathcal{B}^{\bar{\mathcal{M}}}} $ is very small. 
That means, $P_{\tt out}$ can be regarded as the distribution shift between a policy $\pi$ and $\pi^{\star}$ on some low-probability set.
We can see, it describes the ability that an algorithm overcomes the difficult state-action pairs in MDPs, which can be also regarded as an instance-based metric.


\subsection{Sublinear regret bound}

Based on our definition on $P_{\tt cov}$ and $P_{\tt out}$, we have the following theorem that demonstrates how the partial/rest coverage condition affects the regret bound, with the proof deferred to Appendix~\ref{app:partialpcov}.
\begin{theorem}\label{thm:golfpcov}
    Under Assumptions~\ref{assum:rea} and \ref{assum:bellman}, there exists a constant $c_1$, the partial coverage coefficient $P_{\tt cov}$ in Definition~\ref{def:ccovp} and $P_{\tt out}$ in Definition~\ref{def:ppar}, then
    for any $\delta \in (0,1)$, $T \in \mathbb{N}$, if we choose $\beta = c \log \left( \frac{\mathscr{N}_{\mathcal{F}}(1/T) TH }{ \delta } \right)$ in GOLF, with probability at least $1-\delta$, we have
    \begin{equation*}
    \begin{split}
        {\tt Regret} & \lesssim \mathcal{O} \left( H\Big( \sqrt{c_1 P_{\tt cov} }  + \frac{ P_{\tt out} }{\sqrt{ P_{\tt cov} } } \Big)  \sqrt{\beta T \log T} \right)\,.
    \end{split}
    \end{equation*}
    Specifically, there always exists a proper $\zeta^{\star} \in [0,2]$ such that $P_{\tt out}(\zeta^{\star}) = \sqrt{c_1} P_{\tt cov}(\zeta^{\star})$, the above regret bound can be improved to
\begin{equation}\label{eq:pcovopt}
\begin{split}
    {\tt Regret} & \lesssim \mathcal{O} \left( H \sqrt{c_1^{1/2} \beta T P_{\tt out}(\zeta^{\star}) \log T } \right)\,.
\end{split}
\end{equation}
which admits $P_{\tt out}(\zeta^{\star}) \leq C_{\tt cov}$. 
\end{theorem}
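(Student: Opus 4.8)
The plan is to follow the regret-analysis framework of \citet{xie2023role} (exactly as in \cref{thm:golfcw}) and to modify only the single step where the on-policy Bellman error is converted into the controlled in-sample error, since that is where the partial/rest split enters. First I would establish optimism: under Assumptions~\ref{assum:rea} and~\ref{assum:bellman}, the GOLF confidence set with radius $\beta$ contains $Q^\star$ with probability at least $1-\delta$, so the greedy value functions are optimistic and the regret telescopes into a sum of on-policy Bellman errors,
\[
{\tt Regret}(T)\;\lesssim\;\sum_{t=1}^T\sum_{h=1}^H\mathbb{E}_{\rho_h^{(t)}}\!\left[\delta_h^{(t)}\right].
\]
Simultaneously the same confidence set yields the historical in-sample control $\sum_{s<t}\mathbb{E}_{\rho_h^{(s)}}[(\delta_h^{(t)})^2]\lesssim\beta$ for every selected $f^{(t)}$. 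Both facts are inherited verbatim from the GOLF analysis and are independent of which coverage notion is used.

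The core new step is to bound the right-hand side by splitting each occupancy measure on the bad set $\mathcal{B}_h^{\bar{\mathcal{M}}}$ of \cref{def:ppar} and its complement,
\[
\mathbb{E}_{\rho_h^{(t)}}\!\left[\delta_h^{(t)}\right]=\mathbb{E}_{\rho_h^{(t)}}\!\left[\delta_h^{(t)}\mathbbm{1}_{\overline{\mathcal{B}_h^{\bar{\mathcal{M}}}}}\right]+\mathbb{E}_{\rho_h^{(t)}}\!\left[\delta_h^{(t)}\mathbbm{1}_{\mathcal{B}_h^{\bar{\mathcal{M}}}}\right].
\]
On the complement the ratio $\rho_h^{(t)}/\hat\mu_h^\star$ is at most $c_1 P_{\tt cov}$: policies in $\mathcal{M}$ never exceed $P_{\tt cov}\hat\mu_h^\star$ by \cref{def:ccovp}, and outside $\mathcal{B}_h^{\bar{\mathcal{M}}}$ the defining threshold is by construction not crossed. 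With this uniformly bounded ratio I can feed the complement term into the coverability/pigeonhole lemma of \citet{xie2023role} exactly as in \cref{thm:golfcw}, obtaining a contribution of order $H\sqrt{c_1 P_{\tt cov}\,\beta T\log T}$.

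The hard part is the bad-set term, where $\rho_h^{(t)}/\hat\mu_h^\star$ is unbounded and the coverability lemma no longer applies; this is precisely the ``unbounded on-policy measure'' difficulty listed among the technical contributions. I would apply Cauchy--Schwarz against $\hat\mu_h^\star$ so that the density-ratio factor becomes exactly the $L^2$ quantity defining $P_{\tt out}$ in \cref{def:ppar}, and then exploit the defining inequality $\hat\mu_h^\star(s,a)<\rho_h^{(t)}(s,a)/(c_1 P_{\tt cov})$, valid on $\mathcal{B}_h^{\bar{\mathcal{M}}}$, to trade the smallness of $\hat\mu_h^\star$ there for a factor $1/\sqrt{P_{\tt cov}}$, while re-expressing the residual squared error as an in-sample quantity that the GOLF budget $\beta$ controls. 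Matching the power of $P_{\tt out}$ produced by the measure change against the $L^2$ normalization in its definition, and transferring the historical error budget onto the bad set without double counting, is the delicate bookkeeping here; it yields a bad-set contribution of order $H\,\tfrac{P_{\tt out}}{\sqrt{P_{\tt cov}}}\sqrt{\beta T\log T}$. Summing the two pieces gives the first displayed bound.

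Finally, for the optimized form I would treat $P_{\tt cov}(\zeta)$ and $P_{\tt out}(\zeta)$ as functions of $\zeta\in[0,2]$, with $P_{\tt cov}$ nondecreasing and $P_{\tt out}$ nonincreasing, since enlarging $\mathcal{M}(\zeta)$ covers more policies and leaves fewer outside. At $\zeta=2$ one has $P_{\tt out}=0<\sqrt{c_1}\,P_{\tt cov}=\sqrt{c_1}\,C_{\tt cov}$, whereas at $\zeta=0$ the single-policy regime forces $P_{\tt cov}=1$ with $P_{\tt out}>0$, so $P_{\tt out}(\zeta)-\sqrt{c_1}\,P_{\tt cov}(\zeta)$ changes sign and the intermediate value theorem supplies $\zeta^\star$ with $P_{\tt out}(\zeta^\star)=\sqrt{c_1}\,P_{\tt cov}(\zeta^\star)$. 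At this balancing point the two summands coincide, each equal to $c_1^{1/4}\sqrt{P_{\tt out}(\zeta^\star)}$, and substituting into the first bound produces \eqref{eq:pcovopt}. The concluding comparison $P_{\tt out}(\zeta^\star)\le C_{\tt cov}$, which certifies the improvement over the full-coverage bound of \citet{xie2023role}, follows from the monotone trade-off together with the boundary identity $P_{\tt cov}(2)=C_{\tt cov}$ relating the bad-set $L^2$ ratio back to the global coverability at the threshold.
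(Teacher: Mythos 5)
Your high-level blueprint (optimism, telescoping to on-policy Bellman errors, the in-sample budget from \cref{lem:jin2021}, a good/bad split, and an intermediate-value argument for $\zeta^\star$) matches the paper, and your balancing computation $c_1^{1/4}\sqrt{P_{\tt out}(\zeta^\star)}$ is correct. But the bad-set step, as you have structured it, does not deliver the claimed exponent on $P_{\tt out}$, and this is a genuine gap rather than bookkeeping. You apply Cauchy--Schwarz \emph{per episode}, writing $\mathbb{E}_{\rho_h^{(t)}}[\delta_h^{(t)}\mathbbm{1}_{\mathcal{B}}] \leqslant \|\rho_h^{(t)}/\hat\mu_h^\star\,\mathbbm{1}_{\mathcal{B}}\|_{L^2}\cdot(\text{residual})$. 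Since \cref{def:ppar} defines $P_{\tt out}$ as the \emph{square root} of that $L^2$ norm, your density-ratio factor equals $P_{\tt out}^2$, and it enters the regret linearly --- yielding a bound of order $H\,P_{\tt out}^2\sqrt{\beta T}/\sqrt{P_{\tt cov}}$, quadratically worse in $P_{\tt out}$ than the theorem. The paper gets the first power only because the ratio never meets the squared error episode-by-episode: the regret is bounded once and globally by $\sqrt{{\tt I_A}}\cdot\sqrt{{\tt I_B}}$ (Eq.~\eqref{eq:stablepcov}), ${\tt I_B}\lesssim \beta T$ comes straight from \cref{lem:jin2021} because it is weighted exactly by $\tilde\rho_h^{(t)}$, and inside ${\tt I_A}$ the bad-set contribution is tamed by two devices your sketch lacks: the linear growth $\tilde\rho_h^{(t)}(s,a)\gtrsim (t-\hat\tau_h+1)\,P_{\tt cov}\hat\mu_h^\star(s,a)$ on the bad set (giving harmonic decay and the $\log T$), and the Cauchy--Schwarz of Eq.~\eqref{eq:pcovcauchy} with $\sum_{(s,a)}[\rho_h^{(t)}(s,a)]^2\leqslant 1$, so the $L^2$ ratio norm appears \emph{linearly inside} ${\tt I_A}$ and is halved by the outer square root, producing $P_{\tt out}/\sqrt{P_{\tt cov}}$.

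Two further defects in the same step. First, your transfer of the residual ``as an in-sample quantity that the GOLF budget controls'' requires changing measure from $\hat\mu_h^\star$ to past occupancies via $\hat\mu_h^\star \leqslant \rho_h^{\pi}/(c_1P_{\tt cov})$; that inequality holds on $\mathcal{B}$ only for $\pi\notin\mathcal{M}$, and nothing guarantees that past iterates $\pi^{(i)}$, $i<t$, lie outside $\mathcal{M}$ --- when they lie inside, $\rho_h^{(i)}\leqslant P_{\tt cov}\hat\mu_h^\star$ points in the \emph{wrong} direction and the transfer fails. The paper never changes measure on the squared-error factor at all, which is precisely how it avoids this. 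Second, your claim that on the complement of $\mathcal{B}_h^{\bar{\mathcal{M}}}$ the current ratio satisfies $\rho_h^{(t)}/\hat\mu_h^\star \leqslant c_1P_{\tt cov}$ does not follow from \cref{def:ppar}: that set is defined with a universal quantifier over $\pi\in\bar{\mathcal{M}}$, so its complement only says \emph{some} policy outside $\mathcal{M}$ stays below the threshold, not that $\pi^{(t)}$ does. The paper instead splits pointwise on the current occupancy ($\rho_h^{(t)}\leqslant P_{\tt cov}\hat\mu_h^\star$, the band up to $c_1P_{\tt cov}\hat\mu_h^\star$ handled via a telescoping-log comparison through the reference policy $\bar\pi$, and the per-episode set $\mathcal{B}_h^{(t)}$), which is what your decomposition would need to adopt.
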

{\bf Remark:} One can choose $c_1$ to some constant up to $H$, so we remain $c_1$ in our bound. We make the following remarks.\\
\textit{i)}
If we only consider the single policy in $\mathcal{M}$, which implies $P_{\tt cov} = 1$, and our result is still applicable to ensure sample-efficient learning estimated by $P_{\tt out}$.
If we consider the whole policy class such that $P_{\tt cov} = C_{\tt cov}$ and then $P_{\tt out} = 0$, we can recover the result of \cite{xie2023role}.\\
\textit{ii)} Clearly, there exists a trade-off between $P_{\tt cov}(\zeta)$ and $P_{\tt out}(\zeta)$ that depends on $\zeta$.
That means, there always exists a proper $\zeta^{\star}$ such that Eq.~\eqref{eq:pcovopt} holds and $P_{\tt out}(\zeta^{\star}) \leqslant C_{\tt cov}$ by the property of the function $x + {c}/{x}$ for some constant $c$.
This demonstrates a better regret bound than \cite{xie2023role} by a good trade-off between $P_{\tt cov}$ and $P_{\tt out}$.

\cref{prop:hyper} extends the application scope of the hybrid-Q algorithm in the view of coverage conditions instead of structural assumptions, which provides a good justification on the study of coverage condition.

{\bf Proof sketch of \cref{thm:golfpcov}:} 
In our proof, the on-policy average Bellman error can be transformed to the occupancy measure ratio and the in-sample squared Bellman error.
The technical difficulty is to control the ratio when the on-policy occupancy measure is unbounded. The in-sample squared Bellman error can be directly estimated by \cite{jin2021bellman}.
To handle the ratio, we split the on-policy occupancy measure into two cases: 1) $\rho_h^{(t)}(s,a) \leqslant c_1 P_{\tt cov} \hat{\mu}^{\star}_h(s,a)$ and 2) $(s,a) \in {\mathcal{B}}^{\bar{\mathcal{M}}}$ which means $\rho_h^{(t)}$ is unbound by some (scaling) probability measure.
In the first case, it is upper bounded by $P_{\tt cov}\log T$;   
In the second case, $\rho_h^{(t)}$ cannot be controlled by previous occupancy measures $\{ \rho_h^{(i)} \}_{i=1}^{t-1}$ in terms of Bellman residual.
We build the connection between $\rho_h^{(t)}$ and $\hat{\mu}_h^{\star}$, introduce $P_{\tt out}$ to control such distribution shift, and trade-off $P_{\tt cov}$ and $P_{\tt out}$ for a better regret bound.





\section{Coverage conditions help linear MDP}
\label{sec:linearmdp}
Till now we have already demonstrated that, without explicit structure assumptions on MDP, the new devised coverage conditions are able to ensure sample-efficient online RL in \cref{thm:golfcw} and~\cref{thm:golfpcov}, respectively.
By general coverage conditions, we are able to achieve better regret bound than \cite{xie2023role}.
Here we are also interested in
\begin{center}
    \emph{If the structural assumption is given, what can we still benefit from coverage conditions?}
\end{center}
In this section, we take the classical linear MDP using the LSVI-UCB algorithm \cite{jin2020provably} as an example, and demonstrate that a faster regret bound than $\widetilde{\mathcal{O}}(\sqrt{T})$ or even $\mathcal{O}(\log T)$ can be achieved if extra coverage conditions are employed.

For ease of description, we give some notations here.
Details about the LSVI-UCB algorithm can be found in Appendix~\ref{app:lsvi}.
Denote the feature mapping $\phi(s,a) \in \mathbb{R}^d$ in linear MDP \cite{jin2020provably} satisfying $\| \phi(s,a)\|_2 \leqslant 1$, and 
$\Lambda_h^t$ constructed by the standard LSVI-UCB algorithm with the regularization parameter $\lambda$, i.e., 
    \begin{equation}\label{eq:lambdaht}
    \Lambda_h^t := \lambda I + \sum_{i=1}^{t-1} \phi(s_h^i,a_h^i) \phi(s_h^i,a_h^i)^{\!\top} \,.
\end{equation}

We assume that the underlying data distribution $\mu$ satisfies the following condition.
\if 0
\begin{assumption}\label{assump:linearmdp}
    There exists a underlying distribution $\mu:=\{ \mu_h \}_{h=1}^H$ such that for any $(s,a) \sim \mu$
\begin{itemize}
    \item (A). $\lambda_{\min}(\mathbb{E}_{\mu} [\phi(s,a) \phi(s,a)^{\!\top}]) \geqslant \gamma > 0$.
    \item (B). There exists some constant $\alpha > 1$, the variance admits
     $   \mathbb{V} [\| \phi_h (s,a) \|^2_{(\Lambda_h^t)^{-1}} ] \lesssim \frac{1}{\lambda^{2\alpha}}$.
\end{itemize}
\end{assumption}
{\bf Remark:} We make the following remarks.\\
\textit{i}): Regarding (A), this assumption shares the similar spirit with the ``uniformly excited feature" assumption \cite{abbasi2019politex} but is much weaker than them as they require the minimum eigenvalue lower bounded under any occupancy measure. 
Instead, our assumption (A) only requires the validity under one measure.
This can be easily achieved, e.g., $\mathbb{E} [\bm x \bm x^{\!\top}] \succ 0$ in statistics for linear feature mapping. Besides, another typical example is that, the minimum eigenvalue of neural tangent kernel \cite{jacot2018neural} can be lower bounded by a positive constant \cite{nguyen2021tight}.\\
\textit{ii}): Regarding (B), one can derive that $\| \phi_h \|^2_{(\Lambda_h^t)^{-1}} \leqslant 1/\lambda$. As a bounded random variable, its variance admits $\mathbb{V} [\| \phi_h (s_h,a_h) \|^2_{(\Lambda_h^t)^{-1}} ] \leqslant \frac{1}{4\lambda^{2}}$.
That means, Assumption (B) is a natural extension of this result.
\fi


\begin{assumption}\label{assum:uef}
   \citep[feature coverage condition]{wang2021statistical} There exists a underlying distribution $\mu:=\{ \mu_h \}_{h=1}^H$ such that $\lambda_{\min}(\mathbb{E}_{\mu} [\phi(s,a) \phi(s,a)^{\!\top}]) \geqslant \gamma > 0$.
\end{assumption}
{\bf Remark:} We make the following remarks.\\
\textit{i)}: This assumption shares the similar spirit with the ``uniformly excited feature" assumption \cite{abbasi2019politex,papini2021reinforcement} but is much weaker than them as they require the minimum eigenvalue lower bounded under \emph{any} occupancy measure. Our assumption only requires the validity under the \emph{single} measure.\\ 
\textit{ii)}: This assumption can be easily achieved, e.g., $\mathbb{E} [\bm x \bm x^{\!\top}] \succ 0$ in statistics for linear feature mapping \cite{wainwright2019high}. Besides, another typical example is that, the minimum eigenvalue of neural tangent kernel \cite{jacot2018neural} can be lower bounded by a positive constant \cite{nguyen2021tight}.

\if 0
\begin{proposition}\label{prop:asylsvi}
       Consider linear MDP with $\| \phi(s,a)\|_2 \leqslant 1$, under Assumption~\ref{assum:uef} with $\gamma > 0$, using LSVI-UCB, with high probability, we have
    \begin{equation*}
  {\tt Regret}(T) \lesssim \widetilde{\mathcal{O}} \left(  \frac{dH^2}{\gamma} \sqrt{d \log T} \right)\,, \quad \mbox{when $T \rightarrow \infty$}\,.
\end{equation*}
\end{proposition}
{\bf Remark:} Without Assumption~\ref{assum:uef}, LSVI-UCB for linear MDP still achieves $\mathcal{O}(\sqrt{T})$ regret instead of $\mathcal{O}(\log T)$ regret even in the \emph{asymptotic} regime due to the estimation of $\sum_{h=1}^H \sum_{t=1}^T \| \phi_h^t \|_{(\Lambda_h^t)^{-1}}$.
\fi

Based on our discussion, we can see Assumption~\ref{assum:uef} is much weaker than the ``uniformly excited feature" assumption and can be easily achieved in practice.
That means, this assumption might be not enough to ensure better results for linear MDP.
In this case, we need to strength the condition on the underlying distribution $\mu$ as below.
\begin{assumption}[low variance condition]\label{assum:lowv}
    For the LSVI-UCB algorithm with the empirical covariance matrix $\Lambda_h^t$ defined in Eq.~\eqref{eq:lambdaht}, there exists the underlying distribution $\mu = \{ \mu_h \}_{h=1}^H$ with $(s_h,a_h) \sim \mu_h$ such that 
    \begin{equation*}
        \mathbb{V} \left[\| \phi_h (s_h,a_h) \|^2_{(\Lambda_h^t)^{-1}} \right] \lesssim \frac{1}{\lambda^{2\alpha}}\,, \quad \alpha > 1\,.
    \end{equation*}
\end{assumption}
{\bf Remark:} We make the following remarks.\\
\textit{i)} Under the standard linear MDP setting, we always have the bounded random variable $\| \phi_h(s,a) \|^2_{(\Lambda_h^t)^{-1}} \leqslant 1/\lambda$, and thus its variance admits $\mathbb{V} [\| \phi_h (s_h,a_h) \|^2_{(\Lambda_h^t)^{-1}} ] \leqslant \frac{1}{4\lambda^{2}}$.
That means, Assumption~\ref{assum:lowv} always holds with $\alpha=1$ for any distribution.\\
\textit{ii)} Our assumption requires $\alpha > 1$, and in fact requires that the data distribution $\mu$ make $\| \phi_h (s_h,a_h) \|^2_{(\Lambda_h^t)^{-1}}$ concentrate around its mean, i.e., a low variance condition. 
It shares similar spirit with \cite{du2019provably,wang2021exponential} that characterizes the anti-concentration of a distribution $\mu$.\\
\textit{iii)} We give an example here by denoting $\bm x:=(s_h,a_h)$ for short, and the upper/lower bound of $\| \phi_h (\bm x) \|^2_{(\Lambda_h^t)^{-1}}$ as $M$ and $m$.
Accordingly, we have
\begin{equation}\label{eq:boundvar}
    \frac{1}{\lambda + t - 1} \leqslant m \leqslant \| \phi_h (\bm x) \|^2_{(\Lambda_h^t)^{-1}} \leqslant M \leqslant \frac{1}{\lambda}\,,
\end{equation}
by the Weyl inequality and $\bm a^{\!\top} \bm A \bm a \geqslant \lambda_{\min}(\bm A) \| \bm a \|_2^2$ for any PSD matrix $\bm A$.
Since Eq.~\eqref{eq:boundvar} holds for any distribution $\bm x \sim \mu$.
There exists some certain distributions $\mu$ such that the random variable $\| \phi_h (\bm x) \|^2_{(\Lambda_h^t)^{-1}}$ concentrates, i.e., $M-m$ is small.
For example, taking $M:=1/\lambda$, $m:= 1/\lambda - 1/\lambda^2$ such that $M- m \leq 1/\lambda^2$.
That means, under a certain distribution, the feature mapping $\phi_h$ has the similar (semi)-norm in the $(\Lambda_h^t)^{-1}$-(semi)-norm based space.
Then, by Popoviciu's inequality on variances, we have $\mathbb{V}[\| \phi_h (s_h,a_h) \|^2_{(\Lambda_h^t)^{-1}}] \leqslant \frac{1}{4(M-m)^2} \leqslant \frac{1}{4\lambda^4}$, which implies $\alpha = 2$, and thus our assumption holds.

Based on the above two assumptions, we are ready to improve the regret in linear MDP from $\widetilde{\mathcal{O}}(\sqrt{T})$ to faster rate and even in the logarithmic order by the following proposition, with the proof deferred to Appendix~\ref{app:lsvi}.
\begin{theorem}\label{thm:lsvi}
    For linear MDP using the LSVI-UCB algorithm, under Assumption~\ref{assum:uef} with $\gamma > 0$ and Assumption~\ref{assum:lowv} with $\alpha > 1$, taking the regularization parameter $\lambda := T^{\eta}$ with $ \eta \in (0,1]$ and the bonus parameter $\beta = \widetilde{\mathcal{O}} \left( \sqrt{\lambda} H(d+ \sqrt{\log \frac{1}{\delta}}) \right)$ for any $\delta \in (0,1)$, with probability at least $1 - \delta$, we have
    \begin{equation*}
    \begin{split}
   {\tt Regret}(T) & \lesssim  \left( \frac{H^2 d^2}{\gamma} \log T + \frac{H^2d \lambda \sigma}{\gamma} \sqrt{T} \right) \log \left( \frac{4}{ \delta} \right)  \\
    & = \left\{ \begin{array}{rcl}
				\begin{split}
					& \!\!  \mathcal{O} \left(\frac{d^2 H^2}{\gamma} \log T\right) ,~\mbox{if $\eta (\alpha - 1) \geqslant 1/2$} \\
					& \!\! \mathcal{O} \left( \frac{d H^2}{\gamma}  T^{\frac{1}{2}- \eta (\alpha - 1)} \right) ,~\mbox{if $\eta (\alpha \!-\! 1) \in (0,\frac{1}{2})$}\,.  \\
				\end{split}
			\end{array} \right. 
    \end{split}
\end{equation*}
\end{theorem}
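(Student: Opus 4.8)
The plan is to start from the standard LSVI-UCB optimism analysis of \citet{jin2020provably}, which reduces the regret to a cumulative-bonus sum. Concretely, on a good event of probability at least $1-\delta/2$ (on which the learned value functions are optimistic and the ridge estimates concentrate, which fixes the stated $\beta = \widetilde{\mathcal{O}}(\sqrt{\lambda}H(d+\sqrt{\log(1/\delta)}))$), one has ${\tt Regret}(T) \lesssim \beta \sum_{h=1}^H \sum_{t=1}^T \| \phi(s_h^t,a_h^t) \|_{(\Lambda_h^t)^{-1}}$ with $\Lambda_h^t$ as in \eqref{eq:lambdaht}. Writing $X_t^h := \| \phi(s_h^t,a_h^t) \|^2_{(\Lambda_h^t)^{-1}}$, the whole task reduces to bounding $\sum_t \sqrt{X_t^h}$ for each fixed $h$ strictly better than the crude Cauchy--Schwarz estimate $\sqrt{T \sum_t X_t^h} = \widetilde{\mathcal{O}}(\sqrt{dT})$, which is exactly what produces the vanilla $\sqrt{T}$ rate. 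The two assumptions are what let us break this barrier.

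The core of the argument, and where I would spend most of the effort, is to connect the \emph{on-policy} bonus $X_t^h$, evaluated at the trajectory point $(s_h^t,a_h^t)$, to its average $\bar X_t^h := \mathbb{E}_{\mu}[\|\phi\|^2_{(\Lambda_h^t)^{-1}}] = \mathrm{tr}((\Lambda_h^t)^{-1}\mathbb{E}_\mu[\phi\phi^{\!\top}])$ under the underlying distribution $\mu$, since only the latter is governed by Assumptions~\ref{assum:uef} and~\ref{assum:lowv}. Feature coverage (Assumption~\ref{assum:uef}) enters here: because $\mathbb{E}_\mu[\phi\phi^{\!\top}] \succeq \gamma I$ while $\lambda_{\max}(\Lambda_h^t) \leq \lambda + t - 1$ (each rank-one update has operator norm $\leq 1$ since $\|\phi\|_2 \leq 1$), we obtain a two-sided control $\tfrac{d\gamma}{\lambda+t-1} \leq \bar X_t^h \leq \tfrac{d}{\lambda}$, so the $\mu$-averaged bonus is bounded below and never collapses. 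This well-conditioning is precisely what allows one to trade the square root for a log-determinant potential on the ``principal'' part of the sum, which I expect to yield the first term $\tfrac{H^2 d^2}{\gamma}\log T$ via the elliptical-potential bound $\sum_t X_t^h \lesssim d \log(1 + T/(\lambda d))$.

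The ``fluctuation'' part, $\sum_t (\sqrt{X_t^h} - \sqrt{\bar X_t^h})$ or its appropriately reweighted version, is where the low-variance condition (Assumption~\ref{assum:lowv}) does the work. I would center $X_t^h$ around its conditional $\mu$-mean, note that $\{X_t^h - \bar X_t^h\}$ forms a bounded martingale-difference-type sequence with predictable variance $\mathbb{V}[X_t^h] \lesssim \lambda^{-2\alpha} =: \sigma^2$, and invoke a Freedman/Bernstein inequality together with the bound \eqref{eq:boundvar}. The variance scaling $\sigma \lesssim \lambda^{-\alpha}$ with $\alpha > 1$ is what shrinks this contribution to the second term $\tfrac{H^2 d\lambda\sigma}{\gamma}\sqrt{T}$, which is genuinely smaller than $\sqrt{T}$ because $\lambda\sigma \lesssim \lambda^{1-\alpha}$ decays. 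Finally, substituting $\lambda = T^\eta$ and $\sigma \lesssim \lambda^{-\alpha}$ makes the second term $\tfrac{H^2 d}{\gamma} T^{1/2 - \eta(\alpha-1)}$, and comparing this exponent against the logarithmic term gives the stated dichotomy: it is absorbed into $\tfrac{d^2 H^2}{\gamma}\log T$ when $\eta(\alpha-1) \geq 1/2$, and dominates otherwise.

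The main obstacle I anticipate is the on-policy-to-$\mu$ transfer itself: the samples $(s_h^t,a_h^t)$ are generated by the executed policies, not drawn from $\mu$, so the feature-coverage and low-variance statements (both phrased under $\mu$) cannot be applied to the realized bonus directly. Making the decomposition into a log-determinant term plus a low-variance martingale term go through therefore requires a careful change-of-measure step that keeps the $1/\sqrt{\bar X_t^h}$ reweighting from reintroducing a $\sqrt{T}$ factor; this is exactly the ``connection between the on-policy measure and the underlying distribution'' flagged in the technical contributions, and it is the step I would verify most carefully, paying attention to the $\gamma$- and $\lambda$-dependence so that the constants match the claimed $d^2/\gamma$ and $d\lambda\sigma/\gamma$ prefactors.
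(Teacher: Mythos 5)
Your proposal gets the endgame right (the roles of the two assumptions, the $\lambda\sigma \lesssim \lambda^{1-\alpha}$ arithmetic, and the final dichotomy), and you correctly flag the on-policy-to-$\mu$ transfer as the crux. But the mechanism you sketch for the transfer does not work, and the idea that actually closes the gap is missing. Your plan is to keep the square root, split $\sum_t \sqrt{X_t^h}$ into a ``principal'' log-determinant part plus a fluctuation part, and control the fluctuations by Freedman's inequality applied to $X_t^h - \bar X_t^h$. Two problems: (i) no elliptical-potential argument can make the principal part logarithmic while the square root is still present --- even in the benign regime $X_t^h \asymp d/t$ one gets $\sum_t \sqrt{X_t^h} \asymp \sqrt{dT}$, and your proposed reweighting by $1/\sqrt{\bar X_t^h} \lesssim \sqrt{(\lambda+t)/(d\gamma)}$ grows like $\sqrt{t}$ and reintroduces exactly the $\sqrt{T}$ factor you are trying to avoid; (ii) $\{X_t^h - \bar X_t^h\}$ is not a martingale difference sequence, because the conditional law of $(s_h^t,a_h^t)$ given the past is $\rho_h^{(t)}$, not $\mu_h$, and Assumption~\ref{assum:lowv} bounds the variance \emph{only under $\mu$}; so Freedman/Bernstein cannot be invoked on the trajectory sequence, which is precisely the obstacle you name in your last paragraph but leave unresolved.

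The paper's resolution is structurally different and is the one idea your proposal lacks: the square root is \emph{eliminated before} any potential argument, via the quadratic change-of-measure in Lemma~\ref{lem:rhomu}. Starting from the expected-bonus decomposition ${\tt Regret}(T) \lesssim \beta \sum_{h,t} \mathbb{E}_{\rho_h^{(t)}}\|\phi\|_{(\Lambda_h^t)^{-1}}$ (expectations over occupancy measures, not realized bonuses), Jensen gives $\mathbb{E}_{\rho_h^{(t)}}\|\phi\|_{(\Lambda_h^t)^{-1}} \leqslant \big(\mathbb{E}_{\rho_h^{(t)}}[\phi^{\!\top}(\Lambda_h^t)^{-1}\phi]\big)^{1/2}$, and Assumption~\ref{assum:uef} yields the comparison $\mathbb{E}_{\rho_h^{(t)}}[\phi^{\!\top}(\Lambda_h^t)^{-1}\phi] \leqslant \frac{(d+\lambda)^2}{d^2\gamma^2\lambda}\big(\mathbb{E}_{\mu_h}[\phi^{\!\top}(\Lambda_h^t)^{-1}\phi]\big)^2$, obtained by crudely bounding $\mathbb{E}_{\rho_h^{(t)}}[\|\phi\|_2^2] \leqslant 1 \leqslant \frac{1}{d^2\gamma^2}(\mathbb{E}_{\mu}[\|\phi\|_2^2])^2$ and $\lambda_{\min}[(\Lambda_h^t)^{-1}] \geqslant \frac{1}{d+\lambda}$. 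Taking square roots, the regret becomes \emph{linear} in $\mathbb{E}_{\mu_h}[\phi^{\!\top}(\Lambda_h^t)^{-1}\phi]$, and only then does concentration enter: Bernstein is applied to an \emph{auxiliary i.i.d.\ sample from $\mu_h$} (so Assumption~\ref{assum:lowv} applies directly, with $\sigma^2 \lesssim \lambda^{-2\alpha}$), and the resulting empirical mean is bounded by $\frac{2d^2}{T\lambda}\log(T+1)$ through the auxiliary-covariance elliptical potential of Lemma~\ref{lem:changemea}. In short, all fine-grained concentration happens on the $\mu$ side with i.i.d.\ draws, while the on-policy side is dispatched by the one-shot quadratic inequality; your proposal inverts this and consequently stalls at the step you yourself identified as the one to verify most carefully.
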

{\bf Remark:} We make the following remarks.\\
\textit{i)} If we take $\alpha=1$, Assumption~\ref{assum:lowv} always holds.
Since Assumption~\ref{assum:uef} easily holds for a underlying distribution $\mu$, we can recover the $\widetilde{\mathcal{O}}(\sqrt{T})$-regret in \cite{jin2020provably}.
\\
\textit{ii)} If $ \eta (\alpha - 1) \geqslant 1/2 $, that means, $\alpha$ can be large, the regret enjoys the logarithmic order of $T$. If $ 0< \eta (\alpha - 1) < 1/2 $, we have a sublinear $\mathcal{O}(T^{\frac{1}{2}- \eta (\alpha - 1)})$ regret, faster than the classical $\mathcal{O}(\sqrt{T})$ regret. \\
\textit{iii)} The regularization parameter $\lambda$ decreases with the increasing $T$ though we use $\lambda := \mathcal{O}(T^{\eta})$ with $\eta \in (0,1]$.
This is because, the ``true" regularization parameter is $\lambda/T$ as we need to scale LSVI with the number of the involved state-action pairs. 
The regularization parameter decaying with the number of samples is fair and commonly used in learning theory \cite{cucker2007learning}.
Besides, taking $\eta = 0$ in the regularization parameter $\lambda$ is able to improve the regret rate under a slight changes of Assumption~\ref{assum:lowv}.
Detailed discussion can be found in Appendix~\ref{app:regu}. \\
\textit{iv)} Instance-dependent regret bound has been widely studied for linear MDP with the logarithmic-order regret \cite{he2021logarithmic} under the minimal sub-optimality gap (strictly larger than zero) and further improved to the constant regret \cite{papini2021reinforcement}.
This requires a separation between the optimal action and the rest ones; while our assumptions focus on a ``distinct" feature mapping under certain distributions.

\if 0
\begin{table}[h]
	\centering
	\fontsize{9}{8}\selectfont
	\begin{threeparttable}
		\caption{Comparison with \cite{song2022hybrid} on the hybrid-Q algorithm with exploratory offline data}
		\label{tabbenefit}
		\begin{tabular}{ccccc}
			\toprule
			Results & Assumption & Regret   \cr
			\midrule
			\citet{he2021logarithmic} &  minimal sub-optimality gap  & logarithmic   \cr
                   \midrule
			\cref{prop:hyper} & \xmark & $\widetilde{\mathcal{O}}(\sqrt{C_{\pi} T})$  \cr
                \midrule
			\cref{prop:hyper} & \xmark & $\widetilde{\mathcal{O}}\Big(C^{\frac{1}{p}}_{\tt cw} \sqrt{T}\Big)$  \cr
			\bottomrule
		\end{tabular}
	\end{threeparttable}
\end{table}
\fi

{\bf Proof sketch:} We provide a new proof framework on LSVI-UCB for linear MDPs to achieve faster regret bound.
By a telescoping lemma \cite{jiang2022notes}, the regret can be upper bounded by $\| \phi_h (s_h,a_h) \|^2_{(\Lambda_h^t)^{-1}}$ over the on-policy measure $\rho_h^{(t)}$. 
The key challenge is, if we directly apply change-of-measure: from $\rho_h^{(t)}$ to the underlying distribution $\mu_h$, the elliptical potential lemma is invalid. In this case, in our analysis, we build the connection between $\mathbb{E}_{\rho_h^{(t)}}\| \phi_h (s_h,a_h) \|^2_{(\Lambda_h^t)^{-1}} $ and $\mathbb{E}_{\mu_h}\| \phi_h (s_h,a_h) \|^2_{(\Lambda_h^t)^{-1}} $ by our coverage condition in Assumption~\ref{assum:uef}.
Accordingly, the regret can be bounded by $\mathbb{E}_{\mu_h}\| \phi_h (s_h,a_h) \|^2_{(\Lambda_h^t)^{-1}} $ and thus improved if $\mu$ has a lower variance in Assumption~\ref{assum:lowv}.

\section{Conclusion}
\label{sec:conclusion}

Our work focuses on an interesting question for efficient online RL: \emph{what can online RL benefit from coverage conditions?} 
In our setting, the standard structural assumptions on MDPs are substituted by coverage conditions in online RL.
We answer this question in three folds: sample efficient guarantees of GOLF by various coverage conditions, the sample- and computation- efficiency guarantees of hybrid-Q, and faster regret bound of LSVI-UCB in linear MDP.  
Our results provide more possibilities of digging the potential and the utility of various coverage conditions.
We believe that the relationship between coverage conditions and structural assumptions is always an interesting and important direction in general function approximation in RL, both empirically and theoretically, which requires more refined analysis in the future.

\if 0
We focus on function approximation in online RL based on various coverage conditions instead of classical structural assumptions on MDPs, and demonstrate the benefits from three folds:
1) Various coverage conditions based on the $L^p$ variant of concentrability, the density ratio realizability, trade-off between partial/rest coverage condition can ensure sample-efficient online RL;
2) structural assumptions in the hybrid-Q algorithm can be substituted by our coverage conditions, and hybrid-Q under this setting still achieves both sample- and computation- efficiency for online RL;
3) LSVI-UCB is able to achieve faster regret bound than $\widetilde{\mathcal{O}}(\sqrt{T})$ on linear MDP if the underlying distribution admits a low-variance condition.
\fi

\section*{Acknowledgement}
The authors would like to thank anonymous reviewers for their constructive suggestions.

Fanghui is supported by SNF project – Deep Optimisation of the Swiss National Science Foundation (SNSF) under grant number 200021\_205011; Luca is funded in part through a PhD fellowship of the Swiss Data Science Center, a joint venture between EPFL and ETH Zurich; 
Volkan is supported by the European Research Council (ERC) under the European Union's Horizon 2020 research and innovation programme (grant agreement n°725594 - time-data).

\bibliography{refs}
\bibliographystyle{abbrvnat}



\newpage
\appendix
\onecolumn

\begin{center}
\vspace{7pt}
{\Large \fontseries{bx}\selectfont Appendix}
\end{center}

\renewcommand{\contentsname}{Table of Contents}
\etocdepthtag.toc{mtappendix}
\etocsettagdepth{mtchapter}{none}
\etocsettagdepth{mtappendix}{subsection}
\tableofcontents

\newpage

\if 0
The Appendix is organized as follows:
\begin{itemize}
    \item In ~\cref{app:golf}, we include the GOLF \cite{jin2021bellman} in Algorithm~\ref{alg:golf} for self-completeness.
    \item In \cref{app:seclp}, we present the proof under our $L^p$ coverage condition in Section~\ref{sec:lpcw}.
    \item In~\cref{sec:finitely_width}, we extend the results of infinitely width to finite-width and provide the proof for them.
    \item In~\cref{sec:Relationship_NTK_Generalization}, we prove~\cref{thm:NTK_Generalization}.
    \item In~\cref{sec:discussion}, we discussion some key points of the proof and the motivation of the analysis.
    \item In \cref{sec:additional_experiments}, we detail our experimental settings, our Eigen-NAS algorithm as used in~\cref{ssec:NAS_201_experiment}. We conduct additional numerical validations.
    \item Finally, in~\cref{sec:nas_societal_impact}, we discuss the societal impact of this work.
\end{itemize}
\fi

\section{Flowchart of GOLF}
\label{app:golf}

For self-completeness, we include the flowchart of GOLF \cite{jin2021bellman} in Algorithm~\ref{alg:golf} here.
This is a typical general function approximation algorithm in online RL, and yields sample-efficient guarantees if the BE dimension is small.
The key step is line 7: optimization based exploration under the constraint of an identified confidence region $\mathcal{F}^{(t)}$ with a confidence parameter $\beta$. 
The quantity $\mathcal{L}_h^{(t)}\left(f, f^{\prime}\right)$ can be regarded as an approximation of the squared Bellman error at step $h$.

\begin{algorithm}[h] 
\caption{GOLF \cite{jin2021bellman}}
\begin{algorithmic}[1] 
\STATE{\textbf{Input:} Function class: $\mathcal{F}$, confidence parameter $\beta$}
\STATE Initialize $\mathcal{F}^{(0)} \leftarrow \mathcal{F}$, $\mathcal{D}^{(0)}_h = \emptyset\;\;\forall h \in [H]$
\FOR{$t = 1, \dots, T$}  
\STATE Let $\pi^{t}$ be the greedy policy w.r.t. \(f^{t}\) i.e., $f^{t} = \argmax_{f \in \mathcal{F}^{(t-1)}} f(s_1, \pi_{f,1}(s_1))$. 
\STATE{For each $h \in [H]$, execute $\pi^{t}$ and obtain a trajectory $\{  (s_h^{t}, a_h^{t}, r_h^{t}) \}_{h=1}^H$ }
\STATE{For each $h \in [H]$, dataset augment: $\mathcal{D}_h^{(t)} \leftarrow \mathcal{D}_h^{(t-1)} \bigcup \{ (s_h^{t}, a_h^{t}, r_h^{t}, s_{h+1}^{t}) \}$. }
\STATE{Update the confidence set with $f_{H+1}=0$:
  \begin{equation*}
  \begin{split}
      \mathcal{F}^{(t)} \leftarrow & \bigg\{f \in \mathcal{F}: \mathcal{L}_h^{(t)}\left(f_h, f_{h+1}\right) - \min _{f_h^{\prime} \in \mathcal{F}_h} \mathcal{L}_h^{(t)}\left(f_h^{\prime}, f_{h+1}\right) \leq \beta, \quad \forall h \in[H]\bigg\} 
      \end{split}
  \end{equation*}
where $\mathcal{L}_h^{(t)}\left(f, f^{\prime}\right):=\sum_{(s, a, r, s^{\prime}) \in \mathcal{D}_h^{(t)}}\Big[ f(s, a)-r-\max _{a^{\prime} \in \mathcal{A}} f^{\prime}\left(s^{\prime}, a^{\prime}\right)\Big]^2, \forall f, f^{\prime} \in \mathcal{F}$.
}
\ENDFOR
\STATE{\textbf{Output:} a policy uniformly sampled from $\{ \pi^{t} \}_{t=1}^T$} 
\end{algorithmic}\label{alg:golf} 
\end{algorithm}

\section{Proofs for Section~\ref{sec:cwrl}} 
\label{app:seclp}
In this section, we provide the proofs in Section~\ref{sec:lpcw} that the $L^p$ coverage conditions are identified to ensure sample efficient online RL.
\cref{app:cwcov} gives the proof of \cref{prop:cwcov}, and the proof of \cref{thm:golfcw} can be found in \cref{app:thmgolfcw}.

\subsection{Proof of \cref{prop:cwcov}}
\label{app:cwcov}
\begin{proof}
According to Definition~\ref{def:ccw}, the formulation of $C_{\tt cw}$ endowed by the $L^p(\mathrm{d}\mu_h)$ norm implies
\begin{equation}\label{eq:ccwcov}
    \begin{split}
        C_{\tt cw} & := \inf_{\mu_1, \cdots, \mu_H \in \Delta(\mathcal{S} \times \mathcal{A})} \sup_{\pi \in \Pi, h \in [H]}\,
  \left\| \frac{\rho_h^\pi}{\mu_h} \right\|^p_{L^p(\mathrm{d}\mu_h)} \leqslant \inf_{\mu_1, \cdots, \mu_H \in \Delta(\mathcal{S} \times \mathcal{A})} \sum_{(s,a)} \sup_{\pi \in \Pi, h \in [H]} \frac{[ \rho_h^{\pi}(s,a)]^p}{[\mu_h(s,a)]^{p-1}} \\
  & :=  \inf_{\mu_{\tilde{h}} \in \Delta(\mathcal{S} \times \mathcal{A})}  \sum_{(s,a)} \sup_{\pi \in \Pi} \frac{ [\rho_{\tilde{h}}^{\pi}(s,a)]^p}{[\mu_{\tilde{h}}(s,a)]^{p-1}} \quad \mbox{for a certain $\tilde{h} \in [H]$}\\
  & =  \inf_{\mu_{\tilde{h}} \in \Delta(\mathcal{S} \times \mathcal{A})}  \sum_{(s,a)} \left( \frac{\sup_{\pi} \rho_{\tilde{h}}^{\pi}(s,a)}{\mu_{\tilde{h}}(s,a)} \right)^{p-1} \sup_{\pi} \rho_{\tilde{h}}^{\pi}(s,a) \\
  & \leqslant \inf_{\mu_{\tilde{h}} \in \Delta(\mathcal{S} \times \mathcal{A})}  \left(   \max_{(s,a)} \frac{\sup_{\pi} \rho_{\tilde{h}}^{\pi}(s,a)}{\mu_{\tilde{h}}(s,a)} \right)^{p-1} \sum_{(s,a)} \sup_{\pi} \rho_{\tilde{h}}^{\pi}(s,a) \,,
  \end{split}
\end{equation}
where the first inequality holds by Jensen inequality for a convex function $\sup$.

Based on the formulation of $C_{\tt cov}$ in Definition~\ref{def:ccov}, we have
\begin{equation*}
    C_{\tt cov} =  \max_{h\in [H] }  \sum_{(s,a) \in \mathcal{S} \times \mathcal{A}}\sup_{\pi \in \Pi} \rho_h^\pi(s,a) \geqslant  \sum_{(s,a) \in \mathcal{S} \times \mathcal{A}}\sup_{\pi \in \Pi} \rho_{\tilde{h}}^\pi(s,a) \,,
\end{equation*}
which implies
\begin{equation*}
    \begin{split}
        C_{\tt cw} 
  & \leqslant \inf_{\mu_{\tilde{h}} \in \Delta(\mathcal{S} \times \mathcal{A})}  \left(   \max_{(s,a)} \frac{\sup_{\pi} \rho_{\tilde{h}}^{\pi}(s,a)}{\mu_{\tilde{h}}(s,a)} \right)^{p-1} C_{\tt cov} \\
  & = C_{\tt cov} \left( \inf_{\mu_{\tilde{h}} \in \Delta(\mathcal{S} \times \mathcal{A})}  \sup_{\pi \in \Pi}  \left\| \frac{ \rho_{\tilde{h}}^{\pi}}{\mu_{\tilde{h}}} \right\|_{\infty} \right)^{p-1} \\
  & \leqslant C_{\tt cov}^p\,,
    \end{split}
\end{equation*}
where the second equality uses Definition~\ref{def:ccov} for $C_{\tt cov}$ and the involved functions are monotonic w.r.t $p$. Finally we finish the proof.
\end{proof}

\subsection{Proof of \cref{thm:golfcw}}
\label{app:thmgolfcw}

Our proof framework follows \citep[Theorem 1]{xie2023role}, and there is only one slight difference involved with the weaker data coverage coefficient $C_{\tt cw}$, which leads to a different ``exploration" phase based on $C_{\tt cw}$.
For self-completeness, we present the detailed proof here, which is also helpful to our remaining results.

\begin{proof}[Proof of \cref{thm:golfcw}]

For every step $h$, denote
\begin{equation*}
    \mu_h^\star := \argmin_{ \mu_h \subseteq \Delta(\mathcal{S}\times \mathcal{A} )} \sup_{\pi\in\Pi}\, \left\| \frac{\rho_h^\pi}{\mu_{h}}\right\|^p_{L^p(\mathrm{d}\mu_{h})}\,,
\end{equation*}
we have
\begin{equation}\label{eq:ceffstar}
    C_{\tt cw} = \sup_{\pi \in \Pi, h \in [H]} \sum_{(s,a)} \frac{[\rho_h^{\pi}(s,a)]^p}{[\mu_h^{\star}(s,a)]^{p-1}} \geqslant \frac{[\rho_h^{(t)}(s,a)]^p}{[\mu_h^{\star}(s,a)]^{p-1}}\,, \forall t, h, (s,a) \,.
\end{equation}

For notational simplicity, we adopt the shorthand $\rho_h^{(t)} := \rho_h^{\pi^{(t)}}$, and define
\begin{equation*}
\label{eq:def_dbar}
  \tilde{\rho}_h^{(t)} (s,a) := \sum_{i = 1}^{t - 1} \rho_h^{(i)} (s,a)\,.
\end{equation*}
which is the summation of all previous occupancy measure before episode $t$.
Note that $\tilde{\rho}_h^{(t)}$ is not a probability measure because it is unnormalized.
Accordingly, we introduce the notion of an ``exploration'' phase for each state-action pair $(s,a)\in \mathcal{S} \times \mathcal{A}$ based on $C_{\tt cw}\mu^{\star}_h$ such that
\begin{equation}\label{eq:tauh}
    \tau_h(s,a) = \min\left\{t \mid \tilde{\rho}_h^{(t)}(s,a) \geqslant [C_{\tt cw}\mu^{\star}_h(s,a)]^p \right\}\,,
\end{equation}
which describes the earliest time at which $(s,a)$ has been explored.
We refer to $t < \tau_h(s,a)$ as the exploration phase for $(s,a)$.

In the next, following \cite{xie2023role} on the regret decomposition, denoting $\delta_h^{(t)}(s,a) := f_h^{(t)}(s,a)-(\mathcal{T}_h f_{h+1}^{(t)})(s,a)$, we have
\begin{equation*}
    \begin{split}
          {\tt Regret} & \leqslant \sum_{t = 1}^{T} \left( f_1^{(t)}(s_1,\pi_{f^{(t)}_1,1}(s_1)) -J(\pi^{(t)}) \right) 
  = \sum_{t = 1}^{T} \sum_{h=1}^{H} \mathbb{E}_{(s,a)\sim{}\rho_h^{(t)}}\big[f_h^{(t)}(s,a)-(\mathcal{T}_h f_{h+1}^{(t)})(s,a) \big]  \\
  & = \sum_{t = 1}^{T} \sum_{h=1}^{H} \mathbb{E}_{(s,a)\sim{}\rho_h^{(t)}} \left[ \delta_h^{(t)}(s,a) \mathbbm{1}[t < \tau_h(s,a)] \right] + \sum_{t = 1}^{T} \sum_{h=1}^{H} \mathbb{E}_{(s,a)\sim{}\rho_h^{(t)}} \left[ \delta_h^{(t)}(s,a) \mathbbm{1}[t \geqslant \tau_h(s,a)] \right] \,,
    \end{split}
\end{equation*}
where the first term is the ``exploration" phase and the second term is the stable phase.

In particular, for the ``exploration" phase, we use $|\delta_h^{(t)}| \leqslant 1$ to bound
\begin{equation*}
\begin{split}
  \sum_{t = 1}^{T} \mathbb{E}_{(s,a)\sim{}\rho_h^{(t)}}\left[\delta_h^{(t)}(s,a)\mathbbm{1}[t < \tau_h(s,a)]\right] & \leqslant
  \sum_{(s,a)}\sum_{t<\tau_h(s,a)} \rho_h^{(t)}(s,a)=
  \sum_{(s,a)}\tilde{\rho}_h^{(\tau_h(s,a))}(s,a) \\
  & = \sum_{(s,a)} [\tilde{\rho}_h^{(\tau_h(s,a) - 1)}(s,a) + \rho_h^{(\tau_h(s,a) - 1)}(s,a)] \\
  & \leqslant \sum_{(s,a)} [C_{\tt cw} \mu^\star_h(s,a)]^p + \sum_{(s,a)} C_{\tt cw}^{\frac{1}{p}} [\mu^\star_h(s,a)]^{\frac{p-1}{p}} \\
  & \leqslant C_{\tt cw}^p + C_{\tt cw}^{\frac{1}{p}} \sum_{s,a} [\mu^\star_h(s,a)]^{\frac{p-1}{p}} \\
  & \lesssim C_{\tt cw}^p \,,
  \end{split}
\end{equation*}
where the second inequality holds by Eqs.~\eqref{eq:ceffstar},~\eqref{eq:tauh}, and the last inequality holds by
\begin{equation}\label{eq:boundedp}
    \sum_{(s,a)} [\mu^\star_h(s,a)]^{\frac{p-1}{p}} \leqslant \sum_{(s,a)} \sqrt{\mu^\star_h(s,a)} < C \,,
\end{equation}
for some constant $C$.

For the stable phase, by change-of-measure, we have
\begin{align}
\nonumber
&~ \sum_{t = 1}^{T} \mathbb{E}_{(s,a)\sim{}\rho_h^{(t)}}\left[\delta_h^{(t)}(s,a)\mathbbm{1}[t \geqslant \tau_h(s,a)]\right]
\\
\nonumber
&= ~ \sum_{t = 1}^{T} \sum_{(s,a)} \rho_h^{(t)}(s,a) \left( \frac{\tilde{\rho}_h^{(t)}(s,a)}{\tilde{\rho}_h^{(t)}(s,a)} \right)^{\frac{1}{2}} \delta_h^{(t)}(s,a) \mathbbm{1}[t \geqslant \tau_h(s,a)]
\\
\label{eq:reg_CS}
& \leqslant \sqrt{\underbrace{\sum_{t = 1}^{T} \sum_{(s,a)} \frac{\left( \mathbbm{1}[t \geqslant \tau_h(s,a)] \rho_h^{(t)}(s,a) \right)^2}{\tilde{\rho}_h^{(t)}(s,a)} }_{:= {\tt I_A}}} \cdot  \sqrt{\underbrace{\sum_{t = 1}^{T} \sum_{(s,a)} \tilde{\rho}_h^{(t)}(s,a) \left(\delta_h^{(t)}(s,a)\right)^2\mathbbm{1}[t \geqslant \tau_h(s,a)]}_{:= {\tt I_B}}},
\end{align}
where the last inequality is an application of Cauchy-Schwarz inequality. 

We bound the first term ${\tt I_A}$ in Eq.~\eqref{eq:reg_CS} with
\begin{equation}\label{eq:term1}
\begin{split}
    {\tt I_A} := \sum_{t = 1}^{T} \sum_{(s,a)} \frac{\left( \mathbbm{1}_{ \{t \geqslant \tau_h(s,a) \} } \rho_h^{(t)}(s,a) \right)^2}{\tilde{\rho}_h^{(t)}(s,a)} & \leqslant 2 \sum_{t = 1}^{T} \sum_{(s,a)} \frac{\left( \mathbbm{1}_{ \{t \geqslant \tau_h(s,a) \} } \rho_h^{(t)}(s,a) \right)^2}{[C_{\tt cw} \mu^{\star}_h(s,a)]^p + \tilde{\rho}_h^{(t)}(s,a)} \\
    & \lesssim \sum_{t = 1}^{T} \sum_{(s,a)} \rho_h^{(t)}(s,a) \frac{ \rho_h^{(t)}(s,a) }{[C_{\tt cw} \mu^{\star}_h(s,a)]^p + \tilde{\rho}_h^{(t)}(s,a)} \\
    & \leqslant \sum_{t = 1}^{T} \sum_{(s,a)} C_{\tt cw}^{\frac{1}{p}} [\mu^{\star}_h(s,a)]^{\frac{p-1}{p}} \frac{ \rho_h^{(t)}(s,a) }{[C_{\tt cw} \mu^{\star}_h(s,a)]^p + \tilde{\rho}_h^{(t)}(s,a)} \\
    & \lesssim C_{\tt cw}^{\frac{1}{p}} \sum_{(s,a)} [\mu^{\star}_h(s,a)]^{\frac{p-1}{p}} \log T \quad \mbox{[using \cref{lem:per_sa_ep}]} \\
    & \lesssim C_{\tt cw}^{\frac{1}{p}} \log T \quad \mbox{[using Eq.~\eqref{eq:boundedp}]}\,,
\end{split}
\end{equation}
where the first inequality uses $\widetilde{\rho}_h^{(t)}(s,a) \geqslant \frac{1}{2}\widetilde{\rho}_h^{(t)}(s,a) + \frac{1}{2}[C_{\tt cw} \mu^{\star}(s,a)]^p$ and the third inequality holds by Eq.~\eqref{eq:ceffstar}.

For the second term ${\tt I_B}$ in Eq.~\eqref{eq:reg_CS}, we can directly employ the result of \cite{jin2021bellman}, see \cref{lem:jin2021}.
By taking $\beta = c \log \left( \frac{\mathscr{N}_{\mathcal{F}}(1/T) TH }{ \delta } \right)$ for some constant $c$ and $\delta \in (0,1)$, the quantity ${\tt I_B}$ holds with probability at least $1 - \delta$
\begin{equation*}
    {\tt I_B} \lesssim \mathcal{O}(\beta T)\,.
\end{equation*}
Combining the results of  the ``exploration" phase and the stable phase, our regret bound holds with probability at least $1 - \delta$
\begin{equation*}
   {\tt Regret} \leqslant \sum_{t=1}^T \sum_{h=1}^H \mathbb{E}_{(s,a)\sim{}\rho_h^{(t)}} [\delta_h^{(t)}(s,a)] \lesssim \mathcal{O} \left(HC_{\tt cw}^p + H \sqrt{C_{\tt cw}^{\frac{1}{p}} \beta T \log T} \right) = \mathcal{O} \left( H \sqrt{C_{\tt cw}^{\frac{1}{p}} \beta T \log T} \right)\,,
\end{equation*}
which concludes the proof.
\end{proof}

\section{Proof for Section~\ref{sec:offline}}
\label{app:offline}

In this section, we firstly include the flowchart of the hybrid-Q algorithm in \cref{app:hybridq} for self-completeness, and then present the proof of \cref{prop:hyper} in \cref{app:offccw}.

\subsection{Flowchart of the hybrid-Q algorithm}
\label{app:hybridq}

We include the flowchart of hybrid-Q in Algorithm~\ref{alg:fqi} here for self-completeness.
The idea of the hybrid-Q algorithm is intuitive.
It is based on the classical fitted Q-iteration (FQI) algorithm on the offline dataset $\mathcal{D}_h^{\nu}$ and on-policy trajectory generated by the current policy interacting with the environment.
This algorithm avoids sophisticated exploration schemes in online RL but uses offline data for exploration, and thus the computation complexity of this algorithm is the same as FQI with a least square regression oracle.

\begin{algorithm}[t] 
\caption{The Hybrid-Q algorithm using both offline and online data \cite{song2022hybrid}}
\begin{algorithmic}[1] 
\STATE{\textbf{Input:} Value function class: $\mathcal{F}$, offline dataset \(\mathcal{D}^{\nu}_h\) of size \(n_\mathrm{off}\) for \(h \in [H-1]\)}
\STATE Initialize \(f_h^1(s, a) = 0\). 
\FOR{episode $t = 1, \dots, T$}  
\STATE Let $\pi^t$ be the greedy policy w.r.t. \(f^t\) i.e., $\pi_h^t(s) = \argmax_a f^t_h(s,a).$ 
\STATE{For each $h$, sample $s_h \sim \rho_h^{\pi^t}$, $a_h \sim \pi^{t}(\cdot|s_h,a_h)$, and  $\mathcal{D}_h^{(t)} \leftarrow \mathcal{D}_h^{(t-1)} \bigcup \{ (s_h^{t}, a_h^{t}, r_h^{t}, s_{h+1}^{t}) \}$.   {// Online collection} }
 \vspace{1mm} 
 \vspace{1mm}
 \label{line:online_sample}
\STATE Set $f_H^{t+1}(s,a) = 0$.  \\ 
\FOR{$h = H-1, \dots, 0$}  \label{line:fqi_iteration}
\STATE Estimate \(f_h^{t+1}\) using FQI on both offline and online data by defining $\varrho_h^t := [f(s, a)-r-\max _{a^{\prime} \in \mathcal{A}} f_{h+1}^{t+1} \left(s^{\prime}, a^{\prime}\right) ]^2$: 
\begin{equation*}\label{eq:alg_regression} 
\begin{split}
 f_{h}^{t+1} \! \leftarrow \! \argmin_{f\in \mathcal{F}_h}   \bigg\{ 
 \sum_{(s,a,r,s') \in \mathcal{D}^\mu_{h}} \!\! \varrho_h^t \! + \!  \sum_{\left(s, a, r, s^{\prime}\right) \in \mathcal{D}_h^{(t)}} \!\varrho_h^t \bigg\}\,.
\end{split}
\end{equation*} 
\ENDFOR
\ENDFOR
\end{algorithmic}\label{alg:fqi} 
\end{algorithm} 

\subsection{Proof of \cref{prop:hyper}}
\label{app:offccw}

In this section, we aim to prove that, using $C_{\pi}$ in Eq.~\eqref{eq:Cpi} or $C_{\tt cw}$  is able to ensure Algorithm~\ref{alg:fqi} statistically and computationally efficient.

\begin{proof}[Proof of \cref{prop:hyper} ]

We firstly prove {\bf Case 1} and then {\bf Case 2}.

{\bf Proof of Case 1:}

Lemma~\ref{lem:fqioff} implies that, for any $\delta \in (0,1)$, by taking $\beta = c \log \left( \frac{\mathscr{N}_{\mathcal{F}}(1/T) TH }{ \delta } \right)$ for some constant $c$, with probability at least $1 - \delta$, we have
\begin{equation*}
    \sum_{t=1}^T \mathbb{E}_{\nu_h} [\delta_h^{(t)}(s,a)]^2 \lesssim \frac{ \beta T}{n_\mathrm{off}}\,.
\end{equation*}
Accordingly, we have
\begin{equation*}
    \begin{split}
        \sum_{t = 1}^{T} \mathbb{E}_{(s,a)\sim{}\rho_h^{(t)}}\left[\delta_h^{(t)}(s,a) \right] & \leqslant
\sum_{t = 1}^{T} \mathbb{E}_{\rho_h^{(t)}} \delta_h^{(t)}(s,a) \left( \frac{\mathbb{E}_{\nu_h} [\delta_h^{(t)}(s,a)]^2 }{\mathbb{E}_{\nu_h} [\delta_h^{(t)}(s,a)]^2} \right)^{\frac{1}{2}} \\
& \leqslant \sqrt{\sum_{t=1}^T  \frac{[\mathbb{E}_{\rho_h^{(t)}} \delta_h^{(t)}(s,a)]^2}{\mathbb{E}_{\nu_h} [\delta_h^{(t)}(s,a)]^2} } \sqrt{\sum_{t=1}^T \mathbb{E}_{\nu_h} [\delta_h^{(t)}(s,a)]^2} \\
& \lesssim \sqrt{\sum_{t=1}^T  \frac{[\mathbb{E}_{\rho_h^{(t)}} \delta_h^{(t)}(s,a)]^2}{\mathbb{E}_{\nu_h} [\delta_h^{(t)}(s,a)]^2} }  \sqrt{ \frac{ \beta T}{n_\mathrm{off}}} \\
& \leqslant \sqrt{T \max_{t \leqslant T} \frac{[\mathbb{E}_{\rho_h^{(t)}} \delta_h^{(t)}(s,a)]^2}{\mathbb{E}_{\nu_h} [\delta_h^{(t)}(s,a)]^2} }
\sqrt{ \frac{ \beta T}{n_\mathrm{off}}} \\
& \leqslant C_{\pi} \sqrt{ \frac{ \beta T^2}{n_\mathrm{off}}}\,,
    \end{split}
\end{equation*}
where the second inequality uses the Cauchy-Schwartz inequality and the last inequality holds by the following result
\begin{equation*}
   \forall \pi \in \Pi,~~ \sqrt{\max_{t \leqslant T} \frac{[\mathbb{E}_{\rho_h^{(t)}} \delta_h^{(t)}(s,a)]^2}{\mathbb{E}_{\nu_h} [\delta_h^{(t)}(s,a)]^2} } \leqslant \max_{f \in \mathcal{F}} \frac{|[\mathbb{E}_{\rho_h^{\pi}} \delta_h(s,a)]|}{\sqrt{\mathbb{E}_{\nu_h} [\delta_h(s,a)]^2}} = C_{\pi} \,,
\end{equation*}
defined by Eq.~\eqref{eq:Cpi}.
In our setting, we take $n_\mathrm{off} = T$ for achieving $\mathcal{O}(\sqrt{T})$ regret.

Finally, by taking $\beta = c \log \left( \frac{\mathscr{N}_{\mathcal{F}}(1/T) TH }{ \delta } \right)$ for some constant $c$ and $\delta \in (0,1)$, the regret bound of Algorithm~\ref{alg:fqi} holds with probability at least $1 - \delta$
\begin{equation*}
\begin{split}
      {\tt Regret} & \leqslant \sum_{t=1}^T \sum_{h=1}^H \mathbb{E}_{(x,a)\sim{}\rho_h^{(t)}} [\delta_h^{(t)}(s,a)] \lesssim \mathcal{O} \left( C_{\pi} H \sqrt{\beta T} \right)\,.
\end{split}
\end{equation*}

{\bf Proof of Case 2:}

Our proof differs from that of \cref{thm:golfcw} in how to estimate the in-sample squared Bellman error under Algorithm~\ref{alg:fqi} without the structural assumption. This is also the technical challenge in this work when compared to \cite{song2022hybrid}.

Recall the definition of $\tau_h(s,a)$ in Eq.~\eqref{eq:tauh}, we have
\begin{itemize}
    \item if $t \leqslant \tau_h(s,a)$, we have $\tilde{\rho}_h^{(t)}(s,a) \leqslant [C_{\tt cw}\mu^{\star}_h(s,a)]^p$.
    \item if $t > \tau_h(s,a)$, we have $\tilde{\rho}_h^{(t)}(s,a) > [C_{\tt cw}\mu^{\star}_h(s,a)]^p$ and $\rho^{(t)}(s,a) < C_{\tt cw}^{\frac{1}{p}} [\mu_h^{\star}(s,a)]^{\frac{p-1}{p}}$ in Eq.~\eqref{eq:ceffstar}.
\end{itemize}
Based on this, when $t > \tau_h(s,a)$, the unnormalized measure $\tilde{\rho}_h^{(t)}$ can be upper bounded by 
\begin{equation*}
\begin{split}
      \tilde{\rho}_h^{(t)} & = \tilde{\rho}_h^{(t)} \mathbbm{1}_{\{ {t \leqslant \tau_h(s,a)} \}} + \tilde{\rho}_h^{(t)} \mathbbm{1}_{\{ {t > \tau_h(s,a)} \}}
       \leqslant [C_{\tt cw} \mu_h^{\star}(s,a)]^p + \sum_{i= \tau_h(s,a) + 1}^{t-1} {\rho}_h^{(i)} \\
       & \leqslant [C_{\tt cw} \mu_h^{\star}(s,a)]^p + \sum_{i= \tau_h(s,a) + 1}^{t-1} C_{\tt cw}^{\frac{1}{p}} [\mu_h^{\star}(s,a)]^{\frac{p-1}{p}} \,.
\end{split}
\end{equation*}

Following Eq.~\eqref{eq:reg_CS}, the result on ${\tt I_A}$ can be directly obtained by Eq.~\eqref{eq:term1} in the proof of \cref{thm:golfcw} such that ${\tt I_A}  \lesssim C_{\tt cw}^{\frac{1}{p}} \log T $, and our main effort here is to estimate the in-sample squared Bellman error related to ${\tt I_B}$.
We split it into two terms
   \begin{equation}\label{eq:sbr12}
   \begin{split}
      &    \sqrt{\sum_{t = 1}^{T} \sum_{(s,a)} \tilde{\rho}_h^{(t)}(s,a) \left(\delta_h^{(t)}(s,a)\right)^2 \mathbbm{1}[t \geqslant \tau_h(s,a)]} \\
      & \leqslant \underbrace{\sqrt{\sum_{t = 1}^{T} \sum_{(s,a)} [C_{\tt cw} \mu_h^{\star}(s,a)]^p \left(\delta_h^{(t)}(s,a)\right)^2 }}_{:= {\tt I_{B1}}} + \underbrace{\sqrt{\sum_{t = 1}^{T} \sum_{(s,a)} \sum_{i= \tau_h(s,a) + 1}^t C_{\tt cw}^{\frac{1}{p}} [\mu_h^{\star}(s,a)]^{\frac{p-1}{p}} \left(\delta_h^{(i)}(s,a)\right)^2}}_{:= {\tt I_{B2}}}\,,
   \end{split} 
   \end{equation}
   where we use $\sqrt{a+b} \leqslant \sqrt{a} + \sqrt{b}$ for $a,b \geqslant 0$.

For the first term ${\tt I_{B1}}$ in Eq.~\eqref{eq:sbr12}, using Lemma~\ref{lem:fqioff}, for any $\delta \in (0,1)$, by taking $\beta = c \log \left( \frac{\mathscr{N}_{\mathcal{F}}(1/T) TH }{ \delta } \right)$ for some constant $c$, with probability at least $1 - \delta$, we have
\begin{equation*}
\begin{split}
    {\tt I_{B1}} & \leqslant C_{\tt cw}^{\frac{p}{2}} \sqrt{\sum_{t = 1}^{T} \sum_{(s,a)} \mu_h^{\star}(s,a) \left(\delta_h^{(t)}(s,a)\right)^2 } = C_{\tt cw}^{\frac{p}{2}} \sqrt{\sum_{t = 1}^{T} \sum_{(s,a)} \nu_h(s,a) \frac{\mu_h^{\star}(s,a)}{\nu_h(s,a)} \left(\delta_h^{(t)}(s,a)\right)^2 } \\
    & \lesssim  C_{\tt cw}^{\frac{p}{2}} \sqrt{ \frac{\widetilde{C} \beta T}{n_\mathrm{off}}} \,, 
\end{split}
\end{equation*}
where we use the coverage condition $ \max_{s,a,h} \frac{ \mu_h^{\star}(s,a) }{ \nu_h(s,a) } \leqslant \widetilde{C} $.
At the end of the proof, we discuss the choice of the offline distribution $\nu$.

Similarly, for the second term $ {\tt I_{B2}}$ in Eq.~\eqref{eq:sbr12}, we have
\begin{equation*}
    \begin{split}
         {\tt I_{B2}} & \leqslant C_{\tt cw}^{\frac{1}{2p}} \sqrt{\sum_{t = 1}^{T} \sum_{(s,a)} \sum_{i= 1}^t [\mu_h^{\star}(s,a)]^{\frac{p-1}{p}} \left(\delta_h^{(i)}(s,a)\right)^2} = C_{\tt cw}^{\frac{1}{2p}} \sqrt{\sum_{t = 1}^{T} t \sum_{(s,a)} [\mu_h^{\star}(s,a)]^{\frac{p-1}{p}} \left(\delta_h^{(i)}(s,a)\right)^2} \\
        & \leqslant C_{\tt cw}^{\frac{1}{2p}} \sqrt{\sum_{t = 1}^{T} t \sum_{(s,a)} \sqrt{\mu_h^{\star}(s,a)} \left(\delta_h^{(i)}(s,a)\right)^2} \,.
    \end{split}
\end{equation*}
Using Lemma~\ref{lem:fqioff} and the coverage condition $ \max_{s,a,h} \frac{ \mu_h^{\star}(s,a) }{ \nu^2_h(s,a) } \leqslant \widetilde{C} $, with the same probability as conducted in  ${\tt I_{B1}}$, we have
\begin{equation*}
    \sum_{(s,a)} \sqrt{\mu_h^{\star}(s,a)} \left(\delta_h^{(t)}(s,a)\right)^2 \lesssim \sqrt{\widetilde{C}} \sum_{(s,a)} \nu_h(s,a) \left(\delta_h^{(t)}(s,a)\right)^2 \lesssim \frac{\beta \sqrt{\widetilde{C}}}{n_{\mathrm{off}}}\,.
\end{equation*}
which implies
\begin{equation*}
   {\tt I_{B2}} \lesssim C_{\tt cw}^{\frac{1}{2p}} T \sqrt{\frac{\beta \widetilde{C}^{\frac{1}{2}}} {n_\mathrm{off}}} \leqslant C_{\tt cw}^{\frac{1}{2p}} T \sqrt{\frac{\beta \widetilde{C}} {n_\mathrm{off}}} \,,
\end{equation*}
where we use $\widetilde{C} \geqslant 1$.

Combining the estimation of $ {\tt I_{B1}}$ and $ {\tt I_{B2}}$ into Eq.~\eqref{eq:sbr12}, the in-sample squared Bellman error related to $ {\tt I_{B}}$ can be estimated with probability at least $1 - \delta$
\begin{equation*}
    \sqrt{\sum_{t = 1}^{T} \sum_{(s,a)} \tilde{\rho}_h^{(t)}(s,a) \left(\delta_h^{(t)}(s,a)\right)^2 \mathbbm{1}[t \geqslant \tau_h(s,a)]} \lesssim C_{\tt cw}^{\frac{p}{2}} \sqrt{\frac{\beta T \widetilde{C}}{n_\mathrm{off}}} + C_{\tt cw}^{\frac{1}{2p}} T \sqrt{\frac{\beta \widetilde{C}}{n_\mathrm{off}}}\,.
\end{equation*}

Accordingly, for any $\delta \in (0,1)$, by taking $\beta = c \log \left( \frac{\mathscr{N}_{\mathcal{F}}(1/T) TH }{ \delta } \right)$ for some constant $c$, the regret bound of Algorithm~\ref{alg:fqi} holds with probability at least $1 - \delta$
\begin{equation*}
\begin{split}
      {\tt Regret} & \leqslant \sum_{t=1}^T \sum_{h=1}^H \mathbb{E}_{(x,a)\sim{}d_h^{(t)}} [\delta_h^{(t)}(s,a)] \lesssim \mathcal{O} \left(HC_{\tt cw}^p + H C_{\tt cw}^{\frac{1}{2p}} \sqrt{\log T} \left[ C_{\tt cw}^{\frac{p}{2}} \sqrt{\frac{\beta T \widetilde{C}}{n_\mathrm{off}}} + C_{\tt cw}^{\frac{1}{2p}} T \sqrt{\frac{\beta \widetilde{C}}{n_\mathrm{off}}} \right] \right) \\
      & \lesssim \mathcal{O} \left( C_{\tt cw}^{\frac{1}{p}} H \sqrt{ \frac{ \beta T^2 \widetilde{C} \log T}{n_\mathrm{off}}} \right)\,.
\end{split}
\end{equation*}
If taking $n_\mathrm{off} := T$, we conclude the proof.

\end{proof}

\section{Proofs for Section~\ref{sec:disen}}
\label{app:partial}

In this section, we mainly focus on the proof of \cref{thm:golfpcov} that provides the sample-efficient guarantees of the GOLF algorithm under our partial/rest coverage condition. 
The key difficulty is how to tackle the issue that some occupancy measures cannot be upper bounded by some (scaling) distribution.
Before our proof, we require the following result on the equivalence for $P_{\tt cov}$. 

\subsection{Proof on the equivalence}
\label{app:partialequ}
Based on our definition, it can be easily found that  
$P_{\tt cov} =P_{\tt cr} := \max_{h\in [H] }  \sum_{(s,a) \in \mathcal{S} \times \mathcal{A}}\sup_{\pi \in \mathcal{M}} \rho_h^\pi(s,a) $. 
The proof can be easily given from \cite{xie2023role}, and we present it here just for self-completeness.

\begin{proof}
    For every step $h$, denote
\begin{equation}\label{eq:hatmuh}
    \hat{\mu}_h^\star := \argmin_{ \mu_h \subseteq \Delta(\mathcal{S}\times \mathcal{A} )} \sup_{\pi\in \mathcal{M}}\, \left\| \frac{\rho_h^\pi}{\mu_{h}}\right\|_{\infty}\,,
\end{equation}
we have, one hand
\begin{equation}\label{eq:pcovcr}
\begin{split}
    \sum_{(s,a)} \sup_{\pi \in \mathcal{M}} \rho_h^{\pi} (s,a) & = \sum_{(s,a)} \frac{\max_{\pi \in \mathcal{M}}  \rho_h^{\pi} (s,a) }{\hat{\mu}_h^\star(s,a)} \hat{\mu}_h^\star(s,a) \\
    & \leqslant \sum_{(s,a)} \left( \max_{(s,a)} \frac{\max_{\pi \in \mathcal{M}}  \rho_h^{\pi} (s,a) }{\hat{\mu}_h^\star(s,a)} \right) \hat{\mu}_h^\star(s,a) \\
    & \leqslant \sum_{(s,a)} P_{\tt cov} \hat{\mu}_h^\star(s,a) = P_{\tt cov}\,.
\end{split}
\end{equation}
On the other hand, for any $\pi \in \mathcal{M}$, take $\mu_h \propto \max_{\pi \in \mathcal{M}} \rho_h^{\pi}$, we have
\begin{equation*}
    \frac{\rho_h^{\pi}(s,a)}{\mu_h(s,a)} = \frac{\rho_h^{\pi}(s,a) \sum_{(s',a')} \max_{\pi' \in \mathcal{M}} \rho_{h}^{\pi'}(s',a') }{\max_{\pi'' \in \mathcal{M}} \rho_h^{\pi''}(s,a)} \leqslant \sum_{(s',a')} \max_{\pi' \in \mathcal{M}} \rho_h^{\pi'}(s',a') = P_{\tt cr}\,,
\end{equation*}
which implies $P_{\tt cov} \leqslant P_{\tt cr}$.
Combining with Eq.~\eqref{eq:pcovcr}, we conclude $P_{\tt cov} = P_{\tt cr}$.
\end{proof}

\subsection{Proof of \cref{thm:golfpcov}}
\label{app:partialpcov}
Here we give the proof of sample-efficient guarantees of the GOLF algorithm under the coverage condition regarding the partial/rest policy class.
The key difficulty is how to tackle the issue that some occupancy measures cannot be upper bounded by $P_{\tt cov} \hat{\mu}^{\star}_h$.
We need to build the connection between $\rho_h^{(t)}$ and $\hat{\mu}_h^{\star}$ and introduce $P_{\tt out}$ to control such distribution shift.
\begin{proof}[Proof of \cref{thm:golfpcov}]
    Similar to \cref{thm:golfcw}, the ``exploration'' phase for each state-action pair $(s,a)\in \mathcal{S} \times \mathcal{A}$ based on our partial coverage $P_{\tt cov}$ is defined as
\begin{equation}\label{eq:tauhpcov}
   \hat{\tau}_h(s,a) = \min\left\{t \mid \tilde{\rho}_h^{(t)}(s,a) \geqslant P_{\tt cov}\hat{\mu}^{\star}_h(s,a) \right\}\,.
\end{equation}

Regarding the ``exploration'' phase, we use that $|\delta_h^{(t)}| \leqslant 1$ to bound
\begin{equation*}
\begin{split}
  \sum_{t = 1}^{T} \mathbb{E}_{(s,a)\sim{}\rho_h^{(t)}}\left[\delta_h^{(t)}(s,a)\mathbbm{1}_{\{ t < \hat{\tau}_h(s,a) \} } \right] & \leqslant
  \sum_{(s,a)}\sum_{t<\hat{\tau}_h(s,a)} \rho_h^{(t)}(s,a)=
  \sum_{(s,a)}\tilde{\rho}_h^{(\hat{\tau}_h(s,a))}(s,a) \\
  & = \sum_{(s,a)} [\tilde{\rho}_h^{(\hat{\tau}_h(s,a) - 1)}(s,a) + \rho_h^{(\hat{\tau}_h(s,a) - 1)}(s,a)] \\
  & \leqslant \sum_{(s,a)} P_{\tt cov} \hat{\mu}^{\star}_h(s,a) + 1\\
  & \leqslant 2 P_{\tt cov} \,,
  \end{split}
\end{equation*}
where we use Eq.~\eqref{eq:tauhpcov} in the second inequality.

In the stable phase, we have $\widetilde{\rho}_h^{(t)}(s,a) \geqslant P_{\tt cov} \hat{\mu}^{\star}_h(s,a)$.
Similar to Eq.~\eqref{eq:reg_CS}, we aim to estimate the following quantity 
\begin{equation}\label{eq:stablepcov}
\begin{split}
     & \sum_{t = 1}^{T} \mathbb{E}_{(s,a)\sim{}\rho_h^{(t)}} \left[ \delta_h^{(t)}(s,a) \mathbbm{1}[t \geqslant \tau_h(s,a)] \right] \\ & \leqslant \sqrt{\underbrace{\sum_{t = 1}^{T} \sum_{(s,a)} \frac{\left( \mathbbm{1}[t \geqslant \tau_h(s,a)] \rho_h^{(t)}(s,a) \right)^2}{\tilde{\rho}_h^{(t)}(s,a)} }_{:= {\tt I_A}}} \cdot \sqrt{\underbrace{\sum_{t = 1}^{T} \sum_{(s,a)} \tilde{\rho}_h^{(t)}(s,a) \left(\delta_h^{(t)}(s,a)\right)^2\mathbbm{1}[t \geqslant \tau_h(s,a)]}_{:= {\tt I_B}}}\,,
\end{split}
\end{equation}
where the inequality holds by the Cauchy-Schwarz inequality and ${\tt I_B} \lesssim \mathcal{O}(\beta T)$ w.h.p by \cref{lem:jin2021} from the result of \cite{jin2021bellman}.
Our main effort in this proof is to bound ${\tt I_A}$.

{\bf Bound ${\tt I_A}$:}
If the current policy $\pi_h^{(t)}$ generating $\rho_h^{(t)}(s,a)$ belongs to $\mathcal{M}_h$, according to Eq.~\eqref{eq:hatmuh}, we have
\begin{equation*}
    P_{\tt cov} = \sup_{\pi \in \mathcal{M}, h \in [H]} \left\| \frac{\rho_h^{\pi}}{\hat{\mu}^{\star}_h} \right\|_{\infty} \geqslant
    \max_{(s,a), h \in [H]} \frac{\rho_h^{(t)}(s,a)}{\hat{\mu}^{\star}_h(s,a)} \,,
\end{equation*}
which implies that for any $(s,a) \in \mathcal{S \times A}$, we have $\rho_h^{(t)}(s,a) \leqslant P_{\tt cov} \hat{\mu}^{\star}_h(s,a)$ if $\pi_h^{(t)} \in \mathcal{M}_h$.
Nevertheless, in online RL, we can not ensure $\pi_h^{(t)} \in \mathcal{M}_h$ such that
$\rho_h^{(t)}(s,a) \leqslant P_{\tt cov} \hat{\mu}^{\star}_h(s,a)$, which leads to the main difficulty: how to bound the first term in Eq.~\eqref{eq:stablepcov} if $\pi_h^{(t)} \notin \mathcal{M}_h$.
Accordingly, we split ${\tt I_A}$ into two cases: $\rho_h^{(t)}(s,a) \leqslant P_{\tt cov} \hat{\mu}^{\star}_h(s,a)$ and $\rho_h^{(t)}(s,a) > P_{\tt cov} \hat{\mu}^{\star}_h(s,a)$ as below

\begin{equation}\label{eq:term1pcov}
\begin{split}
    & {\tt I_A} := \sum_{t = 1}^{T} \sum_{(s,a)} \frac{\left( \mathbbm{1}_{ \{t \geqslant \hat{\tau}_h(s,a) \} } \rho_h^{(t)}(s,a) \right)^2}{\tilde{\rho}_h^{(t)}(s,a)}  \leqslant 2 \sum_{t = 1}^{T} \sum_{(s,a)} \frac{\left( \mathbbm{1}_{ \{t \geqslant \hat{\tau}_h(s,a) \} } \rho_h^{(t)}(s,a) \right)^2}{P_{\tt cov} \hat{\mu}^{\star}_h(s,a) + \tilde{\rho}_h^{(t)}(s,a)} \\
    & = 2 \underbrace{\sum_{t = \hat{\tau}_h}^{T} \sum_{(s,a)} \rho_h^{(t)}(s,a) \frac{ \rho_h^{(t)}(s,a)  \mathbbm{1}_{ \big\{ \rho_h^{(t)}(s,a) \leqslant P_{\tt cov} \hat{\mu}^{\star}_h(s,a) \big\} } }{P_{\tt cov} \hat{\mu}^{\star}_h(s,a) + \tilde{\rho}_h^{(t)}(s,a)} }_{{\tt I_{A1}}} + 2 \underbrace{\sum_{t = \hat{\tau}_h}^{T} \sum_{(s,a)} \rho_h^{(t)}(s,a) \frac{ \rho_h^{(t)}(s,a)  \mathbbm{1}_{\big\{ \rho_h^{(t)}(s,a) > P_{\tt cov} \hat{\mu}^{\star}_h(s,a) \big\} } }{P_{\tt cov} \hat{\mu}^{\star}_h(s,a) + \tilde{\rho}_h^{(t)}(s,a)} }_{{\tt I_{A2}}} \,.
\end{split}
\end{equation}

{\bf Bound ${\tt I_{A1}}$:} Since $\rho_h^{(t)}(s,a) \leqslant P_{\tt cov} \hat{\mu}^{\star}_h(s,a)$ satisfies the condition in Lemma~\ref{lem:per_sa_ep},
similar to Eq.~\eqref{eq:term1}, term ${\tt I_{A1}}$ can be estimated by
\begin{equation*}
\begin{split}
    {\tt I_{A1}} & \lesssim \sum_{(s,a)} \max_{i \leqslant T} \rho_h^{(i)}(s,a) \sum_{t=1}^T \frac{ \rho_h^{(t)}(s,a)  \mathbbm{1}_{ \big\{ \rho_h^{(t)}(s,a) \leqslant P_{\tt cov} \hat{\mu}^{\star}_h(s,a) \big\} } }{P_{\tt cov} \hat{\mu}^{\star}_h(s,a) + \tilde{\rho}_h^{(t)}(s,a)} \\
    & \lesssim \sum_{(s,a)} P_{\tt cov} \hat{\mu}^{\star}_h(s,a)  \log T \\
    & = P_{\tt cov}  \log T\,.
\end{split} 
\end{equation*}

{\bf Bound ${\tt I_{A2}}$:} We cast the regime $\rho_h^{(t)}(s,a) > P_{\tt cov} \hat{\mu}^{\star}_h(s,a)$ into two cases: 
\begin{itemize}
    \item {\bf Case 1:} $\rho_h^{(t)}(s,a) \leqslant c_1 P_{\tt cov} \hat{\mu}^{\star}_h(s,a)$ for any $(s,a) \in \mathcal{S} \times \mathcal{A}$ and some constant $c_1 \geqslant 1$.
    \item {\bf Case 2:} $\rho_h^{(t)}(s,a) > c_1 P_{\tt cov} \hat{\mu}^{\star}_h(s,a)$ for all potential $(s,a)$.
\end{itemize}
Recall the definition of ${\mathcal{B}^{\bar{\mathcal{M}}}}$ in Definition~\ref{def:ppar}, we consider a special case 
\begin{equation*}
    \mathcal{B}^{(t)}_h := \left\{ (s,a) \in \mathcal{S} \times \mathcal{A} \mid \rho_h^{(t)}(s,a) > c_1 P_{\tt cov} \hat{\mu}^{\star}_h(s,a) \right\}\,, \quad h \in [H]\,.
\end{equation*}

{\bf Case 1:} $\rho_h^{(t)}(s,a) \leqslant c_1 P_{\tt cov} \hat{\mu}^{\star}_h(s,a)$. \\
We split term ${\tt I_{A2}}$ into two parts
\begin{equation*}
    {\tt I_{A2}} = \underbrace{\sum_{t = \hat{\tau}_h}^{T} \sum_{(s,a)} \rho_h^{(t)}(s,a) \frac{ \rho_h^{\bar{\pi}}(s,a)  \mathbbm{1}_{\big\{ \rho_h^{(t)}(s,a) > P_{\tt cov} \hat{\mu}^{\star}_h(s,a) \big\} } }{P_{\tt cov} \hat{\mu}^{\star}_h(s,a) + \tilde{\rho}_h^{(t)}(s,a)} }_{(II_1)} + \underbrace{\sum_{t = \hat{\tau}_h}^{T} \sum_{(s,a)} \rho_h^{(t)}(s,a) \frac{ [\rho_h^{(t)}(s,a) - \rho_h^{\bar{\pi}}(s,a) ]  \mathbbm{1}_{\big\{ \rho_h^{(t)}(s,a) > P_{\tt cov} \hat{\mu}^{\star}_h(s,a) \big\} } }{P_{\tt cov} \hat{\mu}^{\star}_h(s,a) + \tilde{\rho}_h^{(t)}(s,a)} }_{(II_2)} \,.
\end{equation*}
For $(II_1)$, we know $\rho_h^{\bar{\pi}}(s,a) \leqslant P_{\tt cov} \hat{\mu}^{\star}_h(s,a)$ due to $\bar{\pi} \in \mathcal{M}$, we have
\begin{equation}\label{eq:newbound}
    \begin{split}
        \sum_{t=1}^T \frac{ \rho_h^{\bar{\pi}}(s,a)  }{P_{\tt cov} \hat{\mu}^{\star}_h(s,a) + \tilde{\rho}_h^{(t)}(s,a)} & \leqslant 2 \sum_{t=1}^T \log \left( 1 + \frac{ \rho_h^{\bar{\pi}}(s,a)  }{P_{\tt cov} \hat{\mu}^{\star}_h(s,a) + \tilde{\rho}_h^{(t)}(s,a)} \right) \leqslant 2 \sum_{t=1}^T \log \left( 1 + \frac{ \rho_h^{{(t)}}(s,a)  }{P_{\tt cov} \hat{\mu}^{\star}_h(s,a) + \tilde{\rho}_h^{(t)}(s,a)} \right) \\
        & = 2 \log \left( \prod_{t=1}^T \frac{P_{\tt cov} \hat{\mu}^{\star}_h(s,a) + \sum_{i=1}^t \rho_h^{(i)}(s,a) }{ P_{\tt cov} \hat{\mu}^{\star}_h(s,a) + \sum_{i=1}^{t-1} \rho_h^{(i)}(s,a)} \right) = 2 \log \left( 1 +  \frac{ \sum_{i=1}^T \rho_h^{(i)}(s,a) }{ P_{\tt cov} \hat{\mu}^{\star}_h(s,a) } \right) \\
        & \leqslant 2 \log (1+ c_1 T)\,,
    \end{split}
\end{equation}
where in the first inequality we use $x \leqslant 2 \log(1+x)$ for any $x \in [0,1]$; the second inequality holds by $\rho_h^{\bar{\pi}}(s,a) < P_{\tt cov} \hat{\mu}^{\star}_h(s,a) < \rho_h^{(t)}(s,a)$ and the last inequality uses the condition of {\bf Case 1}.
Based on this result, we can upper bound term $(II_1)$ such that
\begin{equation*}
\begin{split}
    (II_1) & \lesssim \sum_{(s,a)} \max_{i \leqslant T} \rho_h^{(i)}(s,a) \sum_{t=1}^T \frac{ \rho_h^{\bar{\pi}}(s,a)  }{P_{\tt cov} \hat{\mu}^{\star}_h(s,a) + \tilde{\rho}_h^{(t)}(s,a)} \\
    & \lesssim \sum_{(s,a)} [c_1 P_{\tt cov} \hat{\mu}^{\star}_h(s,a) ] \log (1 + c_1 T ) \\
    & \lesssim c_1 P_{\tt cov} \log T\,.
\end{split} 
\end{equation*}

For $(II_2)$, since $\rho_h^{(t)}(s,a) - \rho_h^{\bar{\pi}}(s,a) \leqslant c_1 P_{\tt cov} \hat{\mu}^{\star}_h(s,a)$, similar to Eq.~\eqref{eq:newbound}, we have
\begin{equation*}
    \sum_{t = 1}^{T} \frac{ [\rho_h^{(t)}(s,a) - \rho_h^{\bar{\pi}}(s,a) ]   }{P_{\tt cov} \hat{\mu}^{\star}_h(s,a) + \tilde{\rho}_h^{(t)}(s,a)} \leqslant \sum_{t = 1}^{T} \frac{c_1 P_{\tt cov} \hat{\mu}^{\star}_h(s,a)   }{P_{\tt cov} \hat{\mu}^{\star}_h(s,a) + \tilde{\rho}_h^{(t)}(s,a)} \leqslant 2c_1 \log(1+ c_1 T)\,,
\end{equation*}
which implies $(II_2) \lesssim c_1 P_{\tt cov} \log T$.

{\bf Case 2:} $(s,a) \in \mathcal{B}^{(t)}_h$.
Note that if we choose the reference policy $\bar{\pi} := \pi^{\star}$, according to the definition of $\mathcal{B}^{(t)}_h$ under this case, we have
\begin{equation}\label{eq:pilowp}
    1 \geqslant \sum_{(s,a) \in \mathcal{B}^{(t)}_h}\rho_h^{(t)}(s,a) > \sum_{(s,a) \in \mathcal{B}^{(t)}_h} c_1 P_{\tt cov} \hat{\mu}^{\star}_h(s,a) \geqslant \sum_{(s,a) \in \mathcal{B}^{(t)}_h} c_1 \rho_h^{\pi^{\star}}(s,a)\,,
\end{equation}
which implies that the probability that $\pi_h^{\star}$ visits this state-action pair set $\mathcal{B}^{(t)}_h$ is smaller than $1/c_1$.
That means, we can still identify the optimal policy $\pi^{\star}$ with probability at least $(1 - \frac{1}{c_1})^H \geqslant 1 - \frac{H}{c_1}$ for a proper $c_1$ even though we do not consider $\mathcal{B}^{(t)} := \{ \mathcal{B}^{(t)}_h \}_{h=1}^H$.

For general reference policy $\bar{\pi}$, we have the following result.
According to the definition of $\tilde{\rho}_h^{(t)}$ for any $t > \hat{\tau}_h$, each component at episode $t$ in term $(II)$ admits
\begin{equation*}
 \sum_{(s,a)} \rho_h^{(t+1)}(s,a) \frac{ \rho_h^{(t+1)}(s,a) }{P_{\tt cov} \hat{\mu}^{\star}_h(s,a) + \tilde{\rho}_h^{(t+1)}(s,a)} \leqslant \sum_{(s,a)} \rho_h^{(t+1)}(s,a) \frac{ \rho_h^{(t+1)}(s,a) }{P_{\tt cov} \hat{\mu}^{\star}_h(s,a) + c_1 P_{\tt cov} \hat{\mu}^{\star}_h(s,a) +\tilde{\rho}_h^{(t)}(s,a)}\,,
\end{equation*}
due to $\rho_h^{(t)}(s,a) > c_1 P_{\tt cov} \hat{\mu}^{\star}_h(s,a)$.

Accordingly, for {\bf Case 2}, $\forall (s,a) \in \mathcal{B}_h^{(t)}$, term ${\tt I_{A2}}$ can be estimated by 
\begin{equation}\label{eq:IIlowp1}
    \begin{split}
        {\tt I_{A2}} & \leqslant \sum_{t = \hat{\tau}_h}^{T} \sum_{(s,a) \in \mathcal{B}_h^{(t)}}{\rho_h^{(t)}}\frac{ \rho_h^{(t)}(s,a)  }{P_{\tt cov} \hat{\mu}^{\star}_h(s,a) + \tilde{\rho}_h^{(t)}(s,a)} \leqslant \sum_{t = \hat{\tau}_h}^{T} \sum_{(s,a) \in \mathcal{B}_h^{(t)}}{\rho_h^{(t)}}\frac{ \rho_h^{(t)}(s,a)  }{P_{\tt cov} \hat{\mu}^{\star}_h(s,a) + \tilde{\rho}_h^{(\hat{\tau}_h)}(s,a) + (t - \hat{\tau}_h) P_{\tt cov} \hat{\mu}^{\star}_h(s,a) }  \\
        & \leqslant \sum_{t = \hat{\tau}_h}^{T} \sum_{(s,a) \in \mathcal{B}_h^{(t)}}{\rho_h^{(t)}}\frac{ \rho_h^{(t)}(s,a)  }{  (t - \hat{\tau}_h + 1) P_{\tt cov} \hat{\mu}^{\star}_h(s,a) }\,. 
    \end{split}
\end{equation}
By the Cauchy–Schwarz inequality, we have
\begin{equation}\label{eq:pcovcauchy}
    \begin{split}
        \sum_{(s,a) \in \mathcal{B}_h^{(t)}}{\rho_h^{(t)}}\frac{ \rho_h^{(t)}(s,a)  }{  (t - \hat{\tau}_h + 1) P_{\tt cov} \hat{\mu}^{\star}_h(s,a) } & \leqslant \frac{1}{P_{\tt cov}}\sqrt{\sum_{(s,a) \in \mathcal{B}_h^{(t)}}\frac{ [\rho_h^{(t)}(s,a)]^2  }{  [ \hat{\mu}^{\star}_h(s,a)]^2 }} \cdot \sqrt{\sum_{(s,a) \in \mathcal{B}_h^{(t)}}\frac{ [\rho_h^{(t)}(s,a)]^2  }{  [t - \hat{\tau}_h + 1]^2 }} \\
        & \leqslant \frac{1}{P_{\tt cov} (t - \hat{\tau}_h + 1) }\sqrt{\sum_{(s,a) \in \mathcal{B}_h^{(t)}}\frac{ [\rho_h^{(t)}(s,a)]^2  }{  [ \hat{\mu}^{\star}_h(s,a)]^2 }} \\
        & = \frac{1}{P_{\tt cov} (t - \hat{\tau}_h + 1) } \left\| \frac{\rho_h^{(t)}}{\hat{\mu}^{\star}_h} \mathbbm{1}_{{\mathcal{B}}^{(t)}_h} \right\|_{L^2}\,,
    \end{split}
\end{equation}
where the indicator function $\mathbbm{1}_{{\mathcal{B}}^{(t)}_h} = 1$ if $(s,a) \in {\mathcal{B}}^{(t)}_h$, and otherwise is zero.
Accordingly, taking this equation back to Eq.~\eqref{eq:IIlowp1}, we have
\begin{equation*}
    \begin{split}
        {\tt I_{A2}} & \leqslant \sum_{t=\hat{\tau}_h}^T \frac{1}{P_{\tt cov} (t - \hat{\tau}_h + 1) } \left\| \frac{\rho_h^{(t)}}{\hat{\mu}^{\star}_h} \mathbbm{1}_{{\mathcal{B}}^{(t)}_h} \right\|_{L^2} \lesssim \frac{\log T}{P_{\tt cov}} \max_{t \leqslant T} \left\| \frac{\rho_h^{(t)}}{\hat{\mu}^{\star}_h} \mathbbm{1}_{{\mathcal{B}}^{(t)}_h} \right\|_{L^2} \,.
    \end{split}
\end{equation*}

Accordingly, combining the results of ${\tt I_{A1}}$ and ${\tt I_{A2}}$ into Eq.~\eqref{eq:term1pcov}, under the definition of $\mathcal{B}^{(t)}_h$, we have
\begin{equation*}
   {\tt I_{A}} = \sum_{t = 1}^{T} \sum_{(s,a)} \frac{\left( \mathbbm{1}_{ \{t \geqslant \hat{\tau}_h(s,a) \} } \rho_h^{(t)}(s,a) \right)^2}{\tilde{\rho}_h^{(t)}(s,a)}  \lesssim  \left( c_1 P_{\tt cov} + \frac{1}{P_{\tt cov}} \max_{\hat{\tau}_h \leqslant t \leqslant T} \left\| \frac{\rho_h^{(t)}}{\hat{\mu}^{\star}_h} \mathbbm{1}_{{\mathcal{B}}^{(t)}_h} \right\|_{L^2} \right) \log T\,.
\end{equation*}

Following Eq.~\eqref{eq:stablepcov}, by taking $\beta = c \log \left( \frac{\mathscr{N}_{\mathcal{F}}(1/T) TH }{ \delta } \right)$ for some constant $c$ and $\delta \in (0,1)$, combining the results of ${\tt I_{A}}$ and ${\tt I_{A}}$, the result for the stable phase holds with probability at least $1 - \delta$
\begin{equation*}
\begin{split}
&~ \sum_{t = 1}^{T} \mathbb{E}_{(s,a)\sim{}\rho_h^{(t)}}\left[\delta_h^{(t)}(s,a)\mathbbm{1}[t \geqslant \hat{\tau}_h(s,a)]\right]
\\
& \leqslant \sqrt{\sum_{t = 1}^{T} \sum_{(s,a)} \frac{\left( \mathbbm{1}[t \geqslant \hat{\tau}_h(s,a)] \rho_h^{(t)}(s,a) \right)^2}{\tilde{\rho}_h^{(t)}(s,a)} } \cdot  \sqrt{\sum_{t = 1}^{T} \sum_{(s,a)} \tilde{\rho}_h^{(t)}(s,a) \left(\delta_h^{(t)}(s,a)\right)^2\mathbbm{1}[t \geqslant \hat{\tau}_h(s,a)]} \\
& \lesssim \mathcal{O} \left(\sqrt{ c_1 P_{\tt cov} + \frac{1}{P_{\tt cov}} \max_{\hat{\tau}_h \leqslant t \leqslant T} \left\| \frac{\rho_h^{(t)}}{\hat{\mu}^{\star}_h} \mathbbm{1}_{{\mathcal{B}}^{(t)}_h} \right\|_{L^2} } \sqrt{\beta T \log T} \right) \\
& \lesssim \mathcal{O} \left( \left( \sqrt{ c_1 P_{\tt cov} }  + \frac{1}{\sqrt{P_{\tt cov}}} \max_{\hat{\tau}_h \leqslant t \leqslant T} \left\| \frac{\rho_h^{(t)}}{\hat{\mu}^{\star}_h} \mathbbm{1}_{{\mathcal{B}}^{(t)}_h} \right\|_{L^2}^{\frac{1}{2}} \right)  \sqrt{\beta T \log T} \right)\,,
\end{split}
\end{equation*}
where the last inequality uses $\sqrt{a+b} \leqslant \sqrt{a} + \sqrt{b}$ for any $a,b \geqslant 0$.
Besides, we can set the quantity to $\max \left\{1, \max_{\hat{\tau}_h \leqslant t \leqslant T} \left\| \frac{\rho_h^{(t)}}{\hat{\mu}^{\star}_h} \mathbbm{1}_{{\mathcal{B}}^{(t)}_h} \right\|_{L^2} \right\}$ such that the square root operator can be taken into the $\max$.
If this quantity is smaller than 1, that means $ \sqrt{ c_1 P_{\tt cov} } $ dominates the result and thus the second can be omitted.
Recall the definition of $P_{\tt out}$ in Definition~\ref{def:ppar}, we have 
\begin{equation*}
    \max_{h \in [H],\pi \notin \mathcal{M}} \left\| \frac{\rho_h^{(t)}}{\hat{\mu}^{\star}_h} \mathbbm{1}_{{\mathcal{B}}^{(t)}_h} \right\|_{L^2}^{\frac{1}{2}} \leqslant P_{\tt out} \,.
\end{equation*}
Accordingly, our regret bound holds with probability at least $1 - \delta$
\begin{equation}\label{eq:regretthm2}
\begin{split}
    {\tt Regret} & \leqslant \sum_{t=1}^T \sum_{h=1}^H \mathbb{E}_{(x,a)\sim{}\rho_h^{(t)}} [\delta_h^{(t)}(s,a)] \lesssim \mathcal{O} \left(HP_{\tt cov} + H\Big( \sqrt{c_1 P_{\tt cov} }  + \frac{ P_{\tt out} }{\sqrt{ P_{\tt cov} } } \Big)  \sqrt{\beta T \log T} \right) \\
    & = \mathcal{O} \left( H\Big( \sqrt{c_1 P_{\tt cov} }  + \frac{ P_{\tt out} }{\sqrt{ P_{\tt cov} } } \Big)  \sqrt{\beta T \log T} \right) \,.
\end{split}
\end{equation}
If $\mathcal{B}_h$ is an empty set for some $h$, it means that $\rho_h^{(t)}(s,a) < c_1 P_{\tt cov} \hat{\mu}^{\star}_h(s,a)$ always holds for any $(s,a) \in \mathcal{S} \times \mathcal{A}$, which falls into the $\mathcal{M} = \Pi$ case.
In this case, we have $P_{\tt cov} = C_{\tt cov}$, and the second term with $P_{\tt out} = 0$ in the above equation is discarded.
Hence we can recover the result of \cite{xie2023role}.

Clearly, there exists a trade-off between $P_{\tt cov}(\zeta)$ and $P_{\tt out}(\zeta)$ that depends on $\zeta$.
That means, there exists a proper $\zeta^{\star}$ such that  $P_{\tt out}(\zeta) = \sqrt{c_1} P_{\tt cov}(\zeta^{\star}) $ by the property of the function $x + {c}/{x}$ for some constant $c$. Accordingly, the regret bound in Eq.~\eqref{eq:regretthm2} can be improved to
\begin{equation*}
\begin{split}
    {\tt Regret} & \lesssim  \mathcal{O} \left( H \sqrt{c^{\frac{1}{2}}_1 \beta T P_{\tt out}(\zeta^{\star}) \log T } \right)
\end{split}
\end{equation*}
which admits $P_{\tt out}(\zeta^{\star}) \leq C_{\tt cov}$. 
This demonstrates a better regret bound than \cite{xie2023role} by a good trade-off between $P_{\tt cov}$ and $P_{\tt out}$.
Finally, we conclude the proof.
\end{proof}

\section{Proof for Section~\ref{sec:linearmdp}}

In this section, we first prove \cref{thm:lsvi} in \cref{app:lsvi} and then discuss the choice of the regularization parameter in \cref{app:regu}.

\subsection{Proof of \cref{thm:lsvi}}
\label{app:lsvi}

To prove our result, we need the following notations and lemmas to aid our proof.
For self-completeness, we include the LSVI-UCB algorithm \cite{jin2020provably} for linear MDP, see Algorithm~\ref{algo:lsviucb} for details.

\begin{algorithm}[thb]
		\caption{LSVI-UCB for linear MDP \cite{jin2020provably}}\label{algo:lsviucb}
		\begin{algorithmic}[1]
  \STATE{\textbf{Input:} The regularization parameter $\lambda$ and confidence parameter $\beta$.}
			\FOR{episode $t = 1, \ldots, T$}
			\STATE{Receive the initial state $s^t_1$ and set $V_{H+1}^t$ as the zero function.}
			\FOR{step $h = H, \ldots, 1$}
			\STATE Obtain $\Lambda^t_h \leftarrow \sum_{\tau =1}^t [\phi(s_h^{\tau}, a_h^{\tau}) \phi(s_h^{\tau}, a_h^{\tau})^{\!\top} ] + \lambda I$ 
			\STATE Obtain $\widehat{\bm w}_h^t \leftarrow (\Lambda^t_h)^{-1} \sum_{\tau =1}^t \phi(s_h^{\tau}, a_h^{\tau}) [r_h(s_h^{\tau}, a_h^{\tau}) + \max_{a \in \mathcal{A}} Q_{h+1}^t(s_{h+1}^{\tau},a)]$ and  $\widehat{Q}^t_h(\cdot,\cdot) = \langle \phi(\cdot,\cdot), \widehat{\bm w}_h^{t} \rangle$
			\STATE 
   Obtain $Q_h^t(\cdot,\cdot) \leftarrow \min\{ \widehat{Q}^t_h(\cdot,\cdot) + \beta [\phi(\cdot,\cdot)^{\!\top} (\Lambda^t_h)^{-1} \phi(\cdot,\cdot) ]^{1/2}, H  \}$
			\ENDFOR
   \FOR{step $h = 1, \ldots, H$}
			\STATE Take action $a^t_h \gets  \argmax_{a \in \mathcal{A} } Q_h^t (s^t_h, a)$ and obtain $V_h^t (\cdot) = \max_{a\in \mathcal{A}} Q_h^t(\cdot, a)$. 
			\STATE Observe the reward $r_h(s_h^t, a_h^t)$ and the next state $s^t_{h+1}$. 
			\ENDFOR
			\ENDFOR
		\end{algorithmic}
	\end{algorithm}	

In LSVI-UCB, the estimator is given by solving a regularized least squares problem as below.
 \begin{equation}\label{eq:lsvi}
 \widehat{\bm w}_h^t \leftarrow \argmin_{\bm w \in \mathbb{R}^d} \sum_{\tau =1}^{t-1} [r_h(s_h^{\tau}, a_h^{\tau}) + \max_{a \in \mathcal{A}} Q_{h+1}^t(s_{h+1}^{\tau},a) - \langle \bm w, \phi(s_h^{\tau}, a_h^{\tau}) \rangle ]^2 + \lambda \| \bm w \|_2^2 \,,
 \end{equation}
where the feature mapping $\phi(s,a) \in \mathbb{R}^d$ satisfies $\| \phi(s,a)\|_2 \leqslant 1$ and $\lambda \geqslant 1$ is the regularization parameter.
For notational simplicity, denote 
    \begin{equation}\label{eq:lambdaht}
    \Lambda_h^t := \lambda I + \sum_{i=1}^{t-1} \phi(s_h^i,a_h^i) \phi(s_h^i,a_h^i)^{\!\top} := \lambda I + (\Phi_h^t)^{\!\top} \Phi_h^t\,, \quad \mbox{with}~(s^i_h,a^i_h) \sim \rho_h^{(t)}\,,
\end{equation}
where $\Phi_h^t = [\phi_h(s_h^1,a_h^1), \cdots, \phi_h(s_h^{t-1},a_h^{t-1})]^{\!\top} \in \mathbb{R}^{(t-1) \times d}$, and accordingly we can easily obtain an estimation of eigenvalues of $(\Lambda_h^t)^{-1}$ such that
\begin{equation}\label{eq:lambdamin}
   \frac{1}{\lambda} \geqslant \lambda_{\max} [(\Lambda_h^t)^{-1}] \geqslant \lambda_{\min} [(\Lambda_h^t)^{-1}] = \frac{1}{\lambda_{\max} [ (\Phi_h^t)^{\!\top} \Phi_h^t + \lambda I] } \geqslant \frac{1}{\lambda_{\max} [ (\Phi_h^t)^{\!\top} \Phi_h^t] + \lambda } \geqslant \frac{1}{d+\lambda}\,,
\end{equation}
where the last inequality holds by $\| \phi(s,a) \|_2 \leqslant 1$ and the fact 
$\| \bm A \|_2 \leqslant \sqrt{mn} \max_{i,j} A_{ij}$ where $\bm A \in \mathbb{R}^{m \times n}$. 

In the next, we have the following lemmas.

\begin{lemma}\label{lem:changemea}
For the intermediate quantity $\phi(s^i_h,a^i_h)^{\!\top} (\Lambda_h^t)^{-1} \phi(s^i_h,a^i_h)$ with $i \in [T]$, where $\Lambda_h^t$ defined by Eq.~\eqref{eq:lambdaht} realized by the occupancy measure $\rho_h^{(t)}$, and the feature mapping $\phi(s^i_h,a^i_h)$ is assumed to admit $ (s^i_h,a^i_h) \overset{\text{i.i.d}}{\sim} \mu_h$ for a underlying distribution $\mu_h$, then we have
\begin{equation*}
     \frac{1}{T} \sum_{i=1}^T [  \phi(s^i_h,a^i_h)^{\!\top} (\Lambda_h^t)^{-1} \phi(s^i_h,a^i_h) ] \leqslant \frac{2d^2}{T \lambda} \log (T+1) \,.
\end{equation*}
\end{lemma}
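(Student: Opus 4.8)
The plan is to treat $\Lambda_h^t$ as the single \emph{fixed} positive definite matrix it actually is, and to avoid any telescoping or running-covariance argument entirely: the bound will follow from one exact trace identity together with the spectral estimate already recorded in \eqref{eq:lambdamin}. The entry point is the cyclic property of the trace. Since $\phi^{\!\top}(\Lambda_h^t)^{-1}\phi = \mathrm{tr}\big((\Lambda_h^t)^{-1}\phi\phi^{\!\top}\big)$, summing the quadratic form over the $T$ sampled pairs gives
\[
\sum_{i=1}^T \phi(s_h^i,a_h^i)^{\!\top}(\Lambda_h^t)^{-1}\phi(s_h^i,a_h^i)
= \mathrm{tr}\!\left((\Lambda_h^t)^{-1}\sum_{i=1}^T \phi(s_h^i,a_h^i)\phi(s_h^i,a_h^i)^{\!\top}\right).
\]
The one delicate point is to recognize the Gram matrix inside this trace as $\Lambda_h^t-\lambda I$: the features $\{\phi(s_h^i,a_h^i)\}_{i=1}^T$ at which the quadratic form is evaluated are precisely the pairs accumulated into $\Lambda_h^t$ via its definition \eqref{eq:lambdaht}, so that $\sum_{i=1}^T \phi(s_h^i,a_h^i)\phi(s_h^i,a_h^i)^{\!\top} = (\Phi_h^t)^{\!\top}\Phi_h^t = \Lambda_h^t-\lambda I$. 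I stress that this keeps $\Lambda_h^t$ fixed and requires no change of measure or concentration inequality, in contrast to a running-matrix elliptical-potential argument.

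Given that identification the sum collapses deterministically, and I then substitute the spectral bound from \eqref{eq:lambdamin}, namely $\lambda_{\min}[(\Lambda_h^t)^{-1}]\geqslant \tfrac{1}{d+\lambda}$, i.e. every eigenvalue of $\Lambda_h^t$ is at most $\lambda+d$:
\[
\mathrm{tr}\!\left((\Lambda_h^t)^{-1}(\Lambda_h^t-\lambda I)\right)
= d - \lambda\,\mathrm{tr}\!\left((\Lambda_h^t)^{-1}\right)
\leqslant d - \lambda\cdot\frac{d}{\lambda+d}
= \frac{d^2}{\lambda+d}\leqslant \frac{d^2}{\lambda},
\]
where the middle inequality uses that $\mathrm{tr}[(\Lambda_h^t)^{-1}]$ is the sum of $d$ reciprocal eigenvalues, each at least $\tfrac{1}{\lambda+d}$. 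Dividing by $T$ yields $\frac1T\sum_{i=1}^T \phi(s_h^i,a_h^i)^{\!\top}(\Lambda_h^t)^{-1}\phi(s_h^i,a_h^i)\leqslant \frac{d^2}{T\lambda}$, and since $2\log(T+1)\geqslant 2\log 2 > 1$ for every $T\geqslant 1$, this is dominated by the claimed $\frac{2d^2}{T\lambda}\log(T+1)$, which closes the argument (the logarithmic factor is slack here and is retained only for uniformity with the other displays in \cref{app:lsvi}).

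I expect the main obstacle to be bookkeeping rather than analysis. First, I must make the identification $\sum_i \phi_i\phi_i^{\!\top}=\Lambda_h^t-\lambda I$ airtight despite the notational tension between ``realized by the occupancy measure $\rho_h^{(t)}$'' in \eqref{eq:lambdaht} and ``$(s_h^i,a_h^i)\overset{iid}{\sim}\mu_h$'' in the statement: the resolution is that this lemma evaluates the quadratic form on exactly the sample that constitutes $\Lambda_h^t$, so the equality above is an identity and the $\tfrac1T$ appearing on both sides of the claim is genuine (an independent $\mu_h$-draw would instead only give the much weaker $\mathrm{tr}((\Lambda_h^t)^{-1}\mathbb{E}_{\mu_h}[\phi\phi^{\!\top}])\lesssim d/\lambda$, with no $T$ in the denominator). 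Second, I must flag that it is essential to invoke the sharpened bound $\lambda_{\max}[(\Phi_h^t)^{\!\top}\Phi_h^t]\leqslant d$ from \eqref{eq:lambdamin}: replacing it by the generic $\lambda_{\max}(\Lambda_h^t)\leqslant \lambda+T$ would collapse the estimate to the weaker $d/T$ and destroy the $d^2/\lambda$ scaling that drives the improved regret in \cref{thm:lsvi}.
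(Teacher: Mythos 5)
There is a genuine gap, and it sits exactly at the point you flagged as ``the one delicate point.'' The identification $\sum_{i=1}^T \phi(s_h^i,a_h^i)\phi(s_h^i,a_h^i)^{\!\top} = (\Phi_h^t)^{\!\top}\Phi_h^t = \Lambda_h^t - \lambda I$ is false in the setting of the lemma, for two independent reasons. First, the lemma explicitly takes the evaluation points to be \emph{fresh} i.i.d.\ draws from the underlying distribution $\mu_h$, while $\Lambda_h^t$ in Eq.~\eqref{eq:lambdaht} is built from the on-policy trajectory data drawn via the occupancy measures $\rho_h^{(t)}$ collected by LSVI-UCB; these are different data sets, and the distinction is the entire reason the lemma exists. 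In the proof of \cref{thm:lsvi} it is invoked \emph{after} a Bernstein step that replaces $\mathbb{E}_{\mu_h}[\phi^{\!\top}(\Lambda_h^t)^{-1}\phi]$ by its empirical average over the $\mu_h$-sample; if that sample coincided with the data inside $\Lambda_h^t$, the concentration step would be circular and the change of measure in \cref{lem:rhomu} pointless. Second, even ignoring the distributions, the index ranges do not match: $\Lambda_h^t$ contains only $t-1$ rank-one terms whereas your sum has $T$ terms for every fixed $t\in[T]$, so the Gram matrix inside your trace cannot equal $\Lambda_h^t-\lambda I$. Consequently your deterministic collapse to $\mathrm{tr}\big((\Lambda_h^t)^{-1}(\Lambda_h^t-\lambda I)\big)\leqslant d^2/\lambda$ proves a different, in-sample statement that cannot be substituted into the regret analysis; you are in fact in the ``independent $\mu_h$-draw'' scenario that you yourself dismiss as yielding only $\mathcal{O}(d/\lambda)$ with no $1/T$ decay.

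The paper recovers the $1/T$ factor by a route you explicitly tried to avoid: it introduces an auxiliary running covariance $\widetilde{\Lambda}_h^{(i-1)} = \lambda I + \sum_{j<i}\phi(s_h^j,a_h^j)\phi(s_h^j,a_h^j)^{\!\top}$ built from the same i.i.d.\ $\mu_h$ samples, writes each summand as $\mathrm{Tr}\big(\phi_i\phi_i^{\!\top}(\widetilde{\Lambda}_h^{(i-1)})^{-1}\,[\widetilde{\Lambda}_h^{(i-1)}(\Lambda_h^t)^{-1}]\big)$, pulls out the operator factor via $\mathrm{Tr}(\bm A\bm B)\leqslant \mathrm{Tr}(\bm A)\|\bm B\|_2$ with $\|\widetilde{\Lambda}_h^{(i-1)}(\Lambda_h^t)^{-1}\|_2\leqslant d/\lambda$, and then applies the elliptical potential lemma to the $\mu_h$-sample to get $\sum_{i=1}^T \phi_i^{\!\top}(\widetilde{\Lambda}_h^{(i-1)})^{-1}\phi_i \leqslant 2d\log(T+1)$. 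So the $\log(T+1)$ in the bound is not slack ``retained for uniformity'' --- it is the elliptical-potential contribution, and the telescoping happens on the fresh $\mu_h$-sample rather than on the on-policy data inside $\Lambda_h^t$. Your remark that the logarithm is superfluous is a symptom of having proved the wrong statement rather than a sharpening of the lemma.
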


\begin{proof}
   We introduce an auxiliary variable $\widetilde{\Lambda}_h^t \in \mathbb{R}^{d \times d}$ such that
\begin{equation*}
    \widetilde{\Lambda}_h^t = \lambda I + \sum_{j=1}^{t-1} \phi(s_h^j,a_h^j) \phi(s_h^j,a_h^j)^{\!\top} \quad \mbox{with}~(s^j_h,a^j_h) \overset{\text{i.i.d}}{\sim} \mu_h\,,
\end{equation*}
then we have
\begin{equation*}
    \begin{split}
        \frac{1}{T} \sum_{i=1}^T [  \phi(s^i_h,a^i_h)^{\!\top} (\Lambda_h^t)^{-1} \phi(s^i_h,a^i_h) ] & =  \frac{1}{T} \sum_{i=1}^T \left(  \phi(s^i_h,a^i_h)^{\!\top} (\widetilde{\Lambda}_h^{(i-1)})^{-1} [\widetilde{\Lambda}_h^{(i-1)} (\Lambda_h^t)^{-1} ] \phi(s^i_h,a^i_h) \right) \\
        & = \frac{1}{T} \sum_{i=1}^T \mathrm{Tr} \left( \phi(s^i_h,a^i_h) \phi(s^i_h,a^i_h)^{\!\top} (\widetilde{\Lambda}_h^{(i-1)})^{-1} [\widetilde{\Lambda}_h^i (\Lambda_h^t)^{-1} ]  \right) \\
        & \overset{(a)}{\leqslant} \frac{1}{T} \sum_{i=1}^T \mathrm{Tr} \left( \phi(s^i_h,a^i_h) \phi(s^i_h,a^i_h)^{\!\top} (\widetilde{\Lambda}_h^{(i-1)})^{-1}   \right) \| \widetilde{\Lambda}_h^{(i-1)} (\Lambda_h^t)^{-1} \|_2 \\
        & \overset{(b)}{\leqslant} \frac{d}{T \lambda} \sum_{i=1}^T \left(  \phi(s^i_h,a^i_h)^{\!\top} (\widetilde{\Lambda}_h^{(i-1)})^{-1} \phi(s^i_h,a^i_h) \right) \\
        & \overset{(c)}{\leqslant} \frac{2d^2}{T \lambda} \log (T+1)\,,
    \end{split}
\end{equation*}
where $(a)$ uses $\mathrm{Tr}(\bm A \bm B) \leqslant \mathrm{Tr}(\bm A) \| \bm B \|_2$; $(b)$ uses $\| (\Lambda_h^t)^{-1} \|_2 \leqslant \frac{1}{\lambda}$, $\| \widetilde{\Lambda}_h^i \|_2 \leqslant d $ via $\| \bm A \|_2 \leqslant \sqrt{mn} \max_{i,j} A_{ij}$ where $\bm A \in \mathbb{R}^{m \times n}$; and (c) uses the elliptical potential lemma with $U_t = U_{t-1} + X_t X_t^{\!\top} \in \mathbb{R}^{d \times d}$, $U_0 = \lambda I$, and $\| X_t \|_2 \leqslant 1$ such that 
\begin{equation*}
    \sum_{t=1}^T X_t^{\!\top} U_{t-1} X_t \leqslant 2 d \log \left(1+ \frac{T}{\lambda d} \right)\,.
\end{equation*}
\end{proof}

\begin{lemma}\label{lem:rhomu}
    Under Assumption~\ref{assum:uef} with $\gamma > 0$ and the feature mapping $\phi(s,a) \in \mathbb{R}^d$ in linear MDP satisfies $\| \phi(s,a)\|_2 \leqslant 1$, we have 
    \begin{equation*}
        \mathbb{E}_{\rho_h^{(t)}} \big[ \phi(s_h,a_h)^{\!\top} (\Lambda_h^t)^{-1} \phi(s_h,a_h) \big]  \leqslant \frac{(d+\lambda)^2}{d^2 \gamma^2 \lambda} \left( \mathbb{E}_{\mu_h} [  \phi(s_h,a_h)^{\!\top} (\Lambda_h^t)^{-1} \phi(s_h,a_h) ] \right)^2 \,.
    \end{equation*}
\end{lemma}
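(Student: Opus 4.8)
The plan is to bound each side of the inequality separately against the common value $1/\lambda$, using the eigenvalue estimates of $(\Lambda_h^t)^{-1}$ from Eq.~\eqref{eq:lambdamin} together with the feature-coverage Assumption~\ref{assum:uef}; the constant on the right-hand side is engineered precisely so that the two comparisons meet. Throughout I would write $A := (\Lambda_h^t)^{-1}$, which is symmetric positive definite with $\frac{1}{d+\lambda} \leqslant \lambda_{\min}(A) \leqslant \lambda_{\max}(A) \leqslant \frac{1}{\lambda}$ by Eq.~\eqref{eq:lambdamin}, and set $N := \mathbb{E}_{\rho_h^{(t)}}[\phi \phi^{\!\top}]$ and $M := \mathbb{E}_{\mu_h}[\phi \phi^{\!\top}]$, both positive semidefinite (PSD). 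The two expected quadratic forms are then the traces $\mathbb{E}_{\rho_h^{(t)}}[\phi^{\!\top} A \phi] = \mathrm{Tr}(AN)$ and $\mathbb{E}_{\mu_h}[\phi^{\!\top} A \phi] = \mathrm{Tr}(AM)$.

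First I would upper bound the left-hand side. Using the elementary inequality $\mathrm{Tr}(AN) \leqslant \lambda_{\max}(A)\,\mathrm{Tr}(N)$ valid for PSD matrices (which follows by diagonalizing $A$ and using $N \succeq 0$), together with $\mathrm{Tr}(N) = \mathbb{E}_{\rho_h^{(t)}}[\|\phi\|_2^2] \leqslant 1$ and $\lambda_{\max}(A) \leqslant 1/\lambda$, I obtain $\mathbb{E}_{\rho_h^{(t)}}[\phi^{\!\top} A \phi] \leqslant 1/\lambda$.

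Next I would lower bound the inner quantity on the right-hand side. By the companion inequality $\mathrm{Tr}(AM) \geqslant \lambda_{\min}(A)\,\mathrm{Tr}(M)$, the eigenvalue bound $\lambda_{\min}(A) \geqslant \frac{1}{d+\lambda}$, and the trace lower bound $\mathrm{Tr}(M) = \sum_i \lambda_i(M) \geqslant d\,\lambda_{\min}(M) \geqslant d\gamma$ supplied by Assumption~\ref{assum:uef}, I get $\mathbb{E}_{\mu_h}[\phi^{\!\top} A \phi] \geqslant \frac{d\gamma}{d+\lambda} > 0$. Squaring this strictly positive lower bound and multiplying by the prefactor yields $\frac{(d+\lambda)^2}{d^2 \gamma^2 \lambda}\big(\mathbb{E}_{\mu_h}[\phi^{\!\top} A \phi]\big)^2 \geqslant \frac{(d+\lambda)^2}{d^2 \gamma^2 \lambda}\cdot\frac{d^2 \gamma^2}{(d+\lambda)^2} = \frac{1}{\lambda}$. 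Chaining the two estimates gives $\mathbb{E}_{\rho_h^{(t)}}[\phi^{\!\top} A \phi] \leqslant 1/\lambda \leqslant$ the right-hand side, which is exactly the claim.

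The argument is essentially a matching-constants computation rather than one requiring a delicate relationship between the on-policy measure $\rho_h^{(t)}$ and the data distribution $\mu_h$: the only place where $\mu_h$ enters nontrivially is through the minimum-eigenvalue guarantee $\lambda_{\min}(M) \geqslant \gamma$, which is precisely what prevents $\mathrm{Tr}(AM)$ from degenerating. The main points to be careful about are the directions of the trace inequalities for non-commuting PSD matrices and the fact that squaring preserves the lower bound only because $\frac{d\gamma}{d+\lambda}$ is strictly positive; both are routine. The square on the right-hand side is not needed for this lemma in isolation, but is what later allows the regret to be controlled by the low-variance quantity of Assumption~\ref{assum:lowv}.
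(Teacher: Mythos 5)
Your proof is correct and is essentially the paper's own argument reorganized: the paper likewise bounds the left side by $\frac{1}{\lambda}\,\mathbb{E}_{\rho_h^{(t)}}[\|\phi(s_h,a_h)\|_2^2] \leqslant \frac{1}{\lambda}$ using $\|(\Lambda_h^t)^{-1}\|_2 \leqslant \frac{1}{\lambda}$, and lower-bounds $\mathbb{E}_{\mu_h}\big[\phi(s_h,a_h)^{\!\top}(\Lambda_h^t)^{-1}\phi(s_h,a_h)\big]$ via $\lambda_{\min}[(\Lambda_h^t)^{-1}] \geqslant \frac{1}{d+\lambda}$ from Eq.~\eqref{eq:lambdamin} together with $\mathbb{E}_{\mu_h}[\|\phi\|_2^2] \geqslant d\gamma$ from Assumption~\ref{assum:uef}, which is exactly your two-sided comparison written as a single chain (with $C_e = \frac{1}{d^2\gamma^2}$ playing the role of your engineered constant). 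Your closing observation that both sides simply meet at $\frac{1}{\lambda}$, so the on-policy measure $\rho_h^{(t)}$ enters only trivially, is an accurate reading of what the paper's displayed chain does implicitly.
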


\begin{proof}

Assumption~\ref{assum:uef} yields
$\mathbb{E}_{\mu} [\| \phi(s,a) \|_2^2] \geqslant d \gamma$, by taking $C_e := \frac{1}{d^2 \gamma^2}$, we have
\begin{equation}\label{eq:dgamma}
    \mathbb{E}_{\rho_h^{(t)}} [ \| \phi(s,a) \|_2^2 ]  \leqslant 1 \leqslant C_e \left( \mathbb{E}_{\mu} [\| \phi(s,a) \|_2^2] \right)^2\,.
\end{equation}

Using the linearity of the trace operator and expectation, we have
\begin{equation*}
\begin{split}
       \mathbb{E}_{\rho_h^{(t)}} \big[ \phi(s_h,a_h)^{\!\top} (\Lambda_h^t)^{-1} \phi(s_h,a_h) \big] & = \mathrm{Tr} \left( \mathbb{E}_{\rho_h^{(t)}} \big[ \phi(s_h,a_h) \phi(s_h,a_h)^{\!\top} (\Lambda_h^t)^{-1} \big] \right) \\
       & \leqslant \frac{1}{\lambda} \mathrm{Tr} \left( \mathbb{E}_{\rho_h^{(t)}} \big[ \phi(s_h,a_h) \phi(s_h,a_h)^{\!\top} \big] \right) = \frac{1}{\lambda} \mathbb{E}_{\rho_h^{(t)}} [ \| \phi(s,a) \|_2^2 ] \,,
\end{split}
\end{equation*}
where we use $\| (\Lambda_h^t)^{-1} \|_2 \leqslant 1/\lambda $.
Accordingly, we have
\begin{equation*}
    \begin{split}
        \mathbb{E}_{\rho_h^{(t)}} \big[ \phi(s_h,a_h)^{\!\top} (\Lambda_h^t)^{-1} \phi(s_h,a_h) \big]
        & \leqslant \frac{C_e}{\lambda}  \left( \mathbb{E}_{\mu_h} [ \| \phi(s_h,a_h) \|_2^2 ] \right)^2 \quad \mbox{[using Eq.~\eqref{eq:dgamma}]} \\
        & = \frac{(d+\lambda)^2 C_e}{\lambda} \left( \frac{1}{d+\lambda} \mathbb{E}_{\mu_h} [ \| \phi(s_h,a_h) \|_2^2 ] \right)^2 \\
        & \overset{(a)}{\leqslant}  \frac{(d+\lambda)^2 C_e}{\lambda}  \left( \mathbb{E}_{\mu_h} [ \lambda_{\min}[(\Lambda_h^t)^{-1}] \| \phi(s_h,a_h) \|_2^2 ] \right)^2 \\
        & \overset{(b)}{\leqslant}  \frac{(d+\lambda)^2 C_e}{\lambda}  \left( \mathbb{E}_{\mu_h} \mathrm{Tr} [  \phi(s_h,a_h) \phi(s_h,a_h)^{\!\top} (\Lambda_h^t)^{-1} ] \right)^2 \\
        & = \frac{(d+\lambda)^2}{d^2 \gamma^2 \lambda}  \left( \mathbb{E}_{\mu_h} [  \phi(s_h,a_h)^{\!\top} (\Lambda_h^t)^{-1} \phi(s_h,a_h) ] \right)^2\,,
    \end{split}
\end{equation*}
where $(a)$ uses Eq.~\eqref{eq:lambdamin} and
$(b)$ uses the fact that $\mathrm{Tr}(\bm A \bm B) \geqslant \lambda_{\min} (\bm A) \mathrm{Tr}(\bm B)$ for two PSD matrices $\bm A$ and $\bm B$.
\end{proof}

\begin{lemma}[regret decomposition]
\label{lem:regretdec}
Consider linear MDP with the feature mapping $\phi(s,a) \in \mathbb{R}^d$ satisfying $\| \phi(s,a)\|_2 \leqslant 1$, under Assumption~\ref{assum:uef} with $\gamma > 0$, using LSVI-UCB with the regularization parameter $\lambda$ and a bonus parameter $\beta := \widetilde{\mathcal{O}} \left( \sqrt{\lambda}H(d+ \sqrt{\log \frac{1}{\delta}}) \right)$ with $0 < \delta < 1$, then with probability at least $1 - \delta$, for a underlying distribution $\mu$, the regret admits
\begin{equation*}
    {\tt Regret}(T) \leqslant \frac{2\beta (d+\lambda)}{d \gamma \sqrt{\lambda}} \sum_{h=1}^H \sum_{t=1}^T \mathbb{E}_{\mu_h} [  \phi(s_h,a_h)^{\!\top} (\Lambda_h^t)^{-1} \phi(s_h,a_h) ] \,,
\end{equation*}
where $(s_h,a_h)$ is iid sampled from $\mu_h$.
\end{lemma}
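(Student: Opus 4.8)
The plan is to combine the standard optimistic regret decomposition of LSVI-UCB with the change-of-measure inequality already established in Lemma~\ref{lem:rhomu}. First I would invoke the concentration machinery behind LSVI-UCB: with the confidence radius $\beta = \widetilde{\mathcal{O}}(\sqrt{\lambda}H(d + \sqrt{\log(1/\delta)}))$, a self-normalized concentration bound for the regularized least-squares estimator $\widehat{\bm w}_h^t$ in Eq.~\eqref{eq:lsvi} guarantees, on an event of probability at least $1-\delta$, both optimism ($Q_h^t(s,a) \geqslant Q_h^\star(s,a)$ for all $h,t,(s,a)$) and a pointwise bound on the Bellman residual $Q_h^t - \mathcal{T}_h V_{h+1}^t$ by the bonus $2\beta\|\phi(s,a)\|_{(\Lambda_h^t)^{-1}}$. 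This is the standard ``good event'' argument of \cite{jin2020provably}, which I would cite rather than re-derive.

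Next, on this event I would apply the telescoping (value-difference) lemma of \cite{jiang2022notes}. Optimism gives ${\tt Regret}(T) \leqslant \sum_{t=1}^T [V_1^t(s_1) - V_1^{\pi^t}(s_1)]$, and telescoping rewrites each summand as the on-policy sum of Bellman residuals along the trajectory of $\pi^t$. Bounding each residual by the bonus yields
\begin{equation*}
    {\tt Regret}(T) \leqslant 2\beta \sum_{h=1}^H \sum_{t=1}^T \mathbb{E}_{(s_h,a_h)\sim\rho_h^{(t)}}\big[ \|\phi(s_h,a_h)\|_{(\Lambda_h^t)^{-1}} \big]\,,
\end{equation*}
so the regret is now controlled by the expected bonus under the on-policy occupancy measure $\rho_h^{(t)}$.

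The crux is the change of measure from $\rho_h^{(t)}$ to the underlying distribution $\mu_h$, which is exactly where Assumption~\ref{assum:uef} enters through Lemma~\ref{lem:rhomu}. Since the bonus is the square root of the quadratic form, I would first apply Jensen's inequality (concavity of $\sqrt{\cdot}$) to get $\mathbb{E}_{\rho_h^{(t)}}[\|\phi\|_{(\Lambda_h^t)^{-1}}] \leqslant (\mathbb{E}_{\rho_h^{(t)}}[\phi^\top (\Lambda_h^t)^{-1}\phi])^{1/2}$, and then substitute Lemma~\ref{lem:rhomu} to bound the right-hand side by $\frac{d+\lambda}{d\gamma\sqrt{\lambda}}\,\mathbb{E}_{\mu_h}[\phi^\top (\Lambda_h^t)^{-1}\phi]$; note how the square on $\mathbb{E}_{\mu_h}[\cdot]$ appearing in Lemma~\ref{lem:rhomu} is precisely what the square root cancels, producing a quantity that is \emph{linear} in $\mathbb{E}_{\mu_h}[\cdot]$ and hence matches the target form. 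Summing over $h\in[H]$ and $t\in[T]$ and reinstating the factor $2\beta$ gives the claimed bound.

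The main obstacle, and the only genuinely new step, is this change-of-measure inequality itself: the elliptical-potential lemma that drives the standard $\widetilde{\mathcal{O}}(\sqrt{T})$ analysis does not survive a naive swap of $\rho_h^{(t)}$ for $\mu_h$, because $\Lambda_h^t$ is accumulated from on-policy samples rather than from $\mu_h$. Lemma~\ref{lem:rhomu} resolves this via the feature coverage condition $\lambda_{\min}(\mathbb{E}_\mu[\phi\phi^\top]) \geqslant \gamma$, at the price of the quadratic form on its right-hand side, and the Jensen step above is exactly what renders that quadratic harmless. With this in hand, the remaining work is the bookkeeping of the standard LSVI-UCB high-probability event, which I would import wholesale.
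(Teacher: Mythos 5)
Your proposal is correct and follows essentially the same route as the paper's proof: both import the high-probability optimistic telescoping decomposition (via \cite{jiang2022notes}, with the stated choice of $\beta$) to reduce the regret to $2\beta\sum_{h,t}\mathbb{E}_{\rho_h^{(t)}}[\|\phi(s_h,a_h)\|_{(\Lambda_h^t)^{-1}}]$, then apply Jensen's inequality for the concave square root and substitute Lemma~\ref{lem:rhomu}, whose squared $\mathbb{E}_{\mu_h}[\cdot]$ term is exactly cancelled by the square root. Your observation that the Jensen step is what renders the quadratic right-hand side of Lemma~\ref{lem:rhomu} harmless is precisely the mechanism used in the paper's derivation.
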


\begin{proof}
Recall the definition of $\beta$ in LSVI-UCB \cite{jin2020provably} with $0 < \delta < 1$
\begin{equation*}
    \beta := \widetilde{\mathcal{O}} \left( \sqrt{\lambda}H \left(d+ \sqrt{\log \frac{1}{\delta}} \right) \right)\,,
\end{equation*}
then according to \cite{jiang2022notes}, with probability at least $1 - \delta$, we have the following regret decomposition
    \begin{equation*}
        {\tt Regret}(T)  \leqslant \sum_{h=1}^H \sum_{t=1}^T \mathbb{E}_{\rho_h^{(t)}} \big[2 \beta \| \phi(s_h,a_h) \|_{(\Lambda_h^t)^{-1}} \big] \,,
    \end{equation*}
where $(s_h,a_h)$ is sampled from the occupancy measure $\rho_h^{(t)}$.
In the next, we conduct the change-of-measure from $\rho_h^{(t)}$ to $\mu_h$, i.e.
\begin{equation*}
    \begin{split}
        {\tt Regret}(T) & \leqslant \sum_{h=1}^H \sum_{t=1}^T \mathbb{E}_{\rho_h^{(t)}} \big[2 \beta \| \phi(s_h,a_h) \|_{(\Lambda_h^t)^{-1}} \big] \\
        & = \sum_{h=1}^H \sum_{t=1}^T 2 \beta \mathbb{E}_{\rho_h^{(t)}} \sqrt{ \phi(s_h,a_h)^{\!\top} (\Lambda_h^t)^{-1} \phi(s_h,a_h) } \\
        & \overset{(a)}{\leqslant} \sum_{h=1}^H \sum_{t=1}^T 2 \beta  \sqrt{ \mathbb{E}_{\rho_h^{(t)}} \big[ \phi(s_h,a_h)^{\!\top} (\Lambda_h^t)^{-1} \phi(s_h,a_h) \big] } \\
        & = \sum_{h=1}^H \sum_{t=1}^T 2 \beta  \sqrt{ \mathrm{Tr} \left( \mathbb{E}_{\rho_h^{(t)}} \big[ \phi(s_h,a_h) \phi(s_h,a_h)^{\!\top} (\Lambda_h^t)^{-1} \big] \right) } \\
        & \overset{(b)}{\leqslant} \frac{2\beta (d+\lambda)}{d \gamma \sqrt{\lambda}} \sum_{h=1}^H \sum_{t=1}^T \mathbb{E}_{\mu_h} [  \phi(s_h,a_h)^{\!\top} (\Lambda_h^t)^{-1} \phi(s_h,a_h) ] \,,
    \end{split}
\end{equation*}
where $(a)$ uses Jensen inequality for the square-root function (concave); $(b)$ uses Lemma~\ref{lem:rhomu}.
\end{proof}

Now we are ready to  prove \cref{thm:lsvi}. 
\begin{proof}

Considering the iid sampling $(s^i_h,a^i_h) \sim \mu_h$ and $0 \leqslant \phi(s^i_h,a^i_h)^{\!\top} (\Lambda_h^t)^{-1} \phi(s^i_h,a^i_h) \leqslant \frac{1}{\lambda}$, denote $\sigma^2 := \mathbb{V}[\phi(s^i_h,a^i_h)^{\!\top} (\Lambda_h^t)^{-1} \phi(s^i_h,a^i_h)] \leqslant \frac{1}{4 \lambda^2}$, then by Bernstein inequality \cite{wainwright2019high}, we have
\begin{equation*}
    \mathrm{Pr} \left[ \left| \frac{1}{T} \sum_{i=1}^T [  \phi(s^i_h,a^i_h)^{\!\top} (\Lambda_h^t)^{-1} \phi(s^i_h,a^i_h) ] - \mathbb{E}_{\mu_h} [  \phi(s_h,a_h)^{\!\top} (\Lambda_h^t)^{-1} \phi(s_h,a_h) ] \right| \geqslant \epsilon \right] \leqslant 2 \exp \left( - \frac{T \epsilon^2}{2(\sigma^2 + \epsilon/\lambda)} \right) \,.
\end{equation*}
That means, with probability at least $1-\delta_1$, we have
\begin{equation}\label{eq:hoeffdiff}
    \mathbb{E}_{\mu_h} [  \phi(s_h,a_h)^{\!\top} (\Lambda_h^t)^{-1} \phi(s_h,a_h) ] \leqslant \frac{1}{T} \sum_{i=1}^T [  \phi(s^i_h,a^i_h)^{\!\top} (\Lambda_h^t)^{-1} \phi(s^i_h,a^i_h) ] +  4 \sqrt{\frac{\sigma^2 \log (2/ \delta_1)}{T}} + \frac{4 \log(2/\delta_1)}{T \lambda}  \,.
\end{equation}

Combining Eq.~\eqref{eq:hoeffdiff} and \cref{lem:changemea} into \cref{lem:regretdec}, for any $\delta \in (0,1)$ and taking $\delta_1 := \delta/2$ and $\beta := \widetilde{\mathcal{O}}( \sqrt{\lambda} d H \log (2/\delta) )$, with probability at least $1 - \delta $, we have 
\begin{equation}\label{eq:regretlambda}
    \begin{split}
        {\tt Regret}(T) & \lesssim \frac{\beta (d+\lambda)}{d \gamma \sqrt{\lambda}}\sum_{h=1}^H \sum_{t=1}^T \left( \frac{d^2}{T\lambda} \log(T+1) +  4 \sqrt{\frac{\sigma^2 \log (4/ \delta)}{T}} + \frac{4 \log(4/\delta)}{T \lambda} \right) \\
        & \lesssim \left( \frac{d+ \lambda}{\gamma \lambda} d^2H^2  \log T   + \frac{d+ \lambda}{\gamma} H^2 \sum_{t=1}^T \sqrt{\frac{\sigma^2}{T}} \right) \log \left( \frac{4}{ \delta} \right)\,.
    \end{split}
\end{equation}
Using $\sigma^2 \lesssim \frac{1}{\lambda^{2\alpha}}$ with $\alpha > 1$ in Assumption~\ref{assum:lowv} and taking $\lambda := T^{\eta}$ with $\eta \in (0,1]$ back to the above regret bound, with probability at least $1 - \delta$, we have
\begin{equation*}
\begin{split}
    {\tt Regret}(T) & \lesssim  \left( \frac{H^2 d^2}{\gamma} \log T + \frac{H^2 \lambda \sigma}{\gamma} \sqrt{T} + \frac{H^2 d}{\gamma} \sigma \sqrt{T} \right) \log \left( \frac{4}{ \delta} \right) \\
    & \lesssim \mathcal{O} \left( \frac{d H^2 }{\gamma} \left(d \log T +  T^{\frac{1}{2}- \eta (\alpha - 1)} \right) \right) \\
    & = \left\{ \begin{array}{rcl}
				\begin{split}
					& \!\!  \mathcal{O} \left(\frac{d^2 H^2}{\gamma} \log T \right) ,~\mbox{if $\eta (\alpha - 1) \geqslant 1/2$} \\
					& \!\! \mathcal{O} \left( \frac{d H^2}{\gamma}  T^{\frac{1}{2}- \eta (\alpha - 1)} \right) ,~\mbox{if $\eta (\alpha \!-\! 1) \in (0,\frac{1}{2})$}  \\
				\end{split}
			\end{array} \right. 
\end{split}
\end{equation*}
which concludes the proof.
\end{proof}

\subsection{Discussion on the regularization parameter}
\label{app:regu}
Recall the regularized least squares in Eq.~\eqref{eq:lsvi}, it is equivalent to
 \begin{equation*}
 \widehat{\bm w}_h^t \leftarrow \argmin_{\bm w \in \mathbb{R}^d} \frac{1}{t-1} \sum_{\tau =1}^{t-1} [r_h(s_h^{\tau}, a_h^{\tau}) + \max_{a \in \mathcal{A}} Q_{h+1}^t(s_{h+1}^{\tau},a) - \langle \bm w, \phi(s_h^{\tau}, a_h^{\tau}) \rangle ]^2 + \lambda' \| \bm w \|_2^2 \,,
 \end{equation*}
 where $\lambda' = \frac{\lambda}{t-1}$.
 The first term is the empirical risk minimization and the second term is the regularizer as Tikhonov regularization.
 The regularization parameter $\lambda' \equiv \lambda'(t) > 0$ admits $\lim_{t \rightarrow \infty} \lambda'(t) = 0$.
 In learning theory, one typically assumes that $\lambda' = \mathcal{O}(t^{-\tau})$ with $\tau \in (0,1]$, decaying with the number of samples \cite{cucker2007learning}, which implies $\lambda = \mathcal{O}(t^{1-\tau})$ in Eq.~\eqref{eq:lsvi}.
 This verifies that our assumption on the regularization parameter makes sense.
  In LSVI-UCB \cite{jin2020provably}, the regularization parameter is chosen as $\lambda = 1$, which implies $\lambda' = 1/t$.
 
 In our problem, we denote $\eta := 1 - \tau$ and directly choose $\lambda = \mathcal{O}(T^{\eta})$ with $\eta \in (0,1]$, independent of the number of state-action pairs $t-1$.
 We need to remark that, if we choose a more reasonable $\lambda = \mathcal{O}(t^{\eta})$ with $\eta \in (0,1]$, depending on the number of samples, we can still obtain the same regret as \cref{thm:lsvi}.
 To be specific, the regret bound in Eq.~\eqref{eq:regretlambda} is reformulated as (w.h.p)
 \begin{equation*}
    \begin{split}
        {\tt Regret}(T)
        & \lesssim \frac{d+ \lambda}{\gamma \lambda} d^2H^2  \log T  + \frac{d+ \lambda}{\gamma} H^2 \sum_{t=1}^T \sqrt{\frac{\sigma^2}{T}} \\
        & \lesssim \frac{d^2H^2}{\gamma} \log T + \frac{dH^2}{\gamma} T^{-\frac{1}{2}} \int_{1}^T t^{-\eta(\alpha - 1)} \mathrm{d} t \\
         & = \left\{ \begin{array}{rcl}
				\begin{split}
					& \!\!  \mathcal{O} \left(\frac{d^2 H^2}{\gamma} \log T \right) ,~\mbox{if $\eta (\alpha - 1) \geqslant 1/2$} \\
					& \!\! \mathcal{O} \left( \frac{d H^2}{\gamma}  T^{\frac{1}{2}- \eta (\alpha - 1)} \right) ,~\mbox{if $\eta (\alpha \!-\! 1) \in (0,\frac{1}{2})$}  \,.
				\end{split}
			\end{array} \right. 
    \end{split}
\end{equation*}
That means, there is no difference between these two regularization schemes whether it varies with the number of state-action pairs.

Besides, it appears that if we take $\eta=0$, the regularization parameter $\lambda$ is in a constant order, i.e., $\lambda' = \mathcal{O}(1/t)$, decaying fast, we cannot improve the regret rate beyond $\widetilde{\mathcal{O}}(1/\sqrt{T})$. 
It does not make sense in practice.
Here we illustrate this to resolve this issue.

The main reason is, our low variance assumption~\ref{assum:lowv} is based on $\lambda$.
In our theorem, we require $\lambda = T^{\eta}$ with $\eta \in (0,1]$, which makes the feature mapping $\| \phi_h (s_h,a_h) \|^2_{(\Lambda_h^t)^{-1}}$ concentrate around its mean and decay with the episode $T$.
If we take $\eta=0$, the constant order of $\lambda$ does not make $\| \phi_h (s_h,a_h) \|^2_{(\Lambda_h^t)^{-1}}$ decaying with the episode $T$, and accordingly Assumption~\ref{assum:lowv} does not work.
In this case, there is no need to use $\lambda$ as a bridge in our assumption.
Instead, we can directly set $M-m$ small, decaying with $T$ under some certain distribution.

\section{Auxiliary lemma}
In this section, we list some auxiliary lemmas that are needed for our proof.

\begin{lemma} \citep[Lemmas 39 and 40]{jin2021bellman}
\label{lem:jin2021}
    Under Assumptions~\ref{assum:rea} and \ref{assum:bellman}, for any $\delta \in (0,1)$, if we choose $\beta = c \log \left( \frac{\mathscr{N}_{\mathcal{F}}(1/T) TH }{ \delta } \right)$ in the GOLF algorithm \ref{alg:golf} for some large constant $c$, with probability at least $1 - \delta$, we have
    \begin{itemize}
        \item $Q^{\star} \in \mathcal{F}^{(t)}$.
        \item $\sum_{i<t} \mathbb{E}_{(s,a) \sim \rho_h^{(i)}} [f_h(s,a) - \mathcal{T}_h f_{h+1}(s,a)]^2 \lesssim \mathcal{O}(\beta)$ for any $f \in \mathcal{F}^{(t)}$.
    \end{itemize}
\end{lemma}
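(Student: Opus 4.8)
The statement is the standard confidence-set guarantee for GOLF, and the plan is to prove it through an analysis of the least-squares regression performed at line~7 of Algorithm~\ref{alg:golf}. The cornerstone is the pointwise loss-difference identity for the squared loss: for a \emph{fixed} regression target $f_{h+1}$, the conditional mean of $y := r_h(s,a) + \max_{a'} f_{h+1}(s',a')$ given $(s,a)$ equals $(\mathcal{T}_h f_{h+1})(s,a)$ by the very definition of the Bellman operator. Hence for any $f_h \in \mathcal{F}_h$ and any data distribution,
\begin{equation*}
\mathbb{E}\big[(f_h(s,a)-y)^2-((\mathcal{T}_h f_{h+1})(s,a)-y)^2\big] = \mathbb{E}\big[(f_h(s,a)-(\mathcal{T}_h f_{h+1})(s,a))^2\big]\,.
\end{equation*}
Summing this identity over the samples in $\mathcal{D}_h^{(t)}$ shows that $\mathcal{L}_h^{(t)}(f_h,f_{h+1})-\mathcal{L}_h^{(t)}(\mathcal{T}_h f_{h+1},f_{h+1})$ decomposes into the population sum of squared Bellman errors $\sum_{i<t}\mathbb{E}_{\rho_h^{(i)}}[(f_h-\mathcal{T}_h f_{h+1})^2]$ plus a martingale-difference term, since the episodes are collected sequentially with $(s_h^i,a_h^i)\sim\rho_h^{(i)}$ along the natural filtration.

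\textbf{Concentration.} Next I would bound the martingale-difference term by a Freedman/Bernstein-type inequality whose variance proxy is itself controlled by the target sum $\sum_{i<t}\mathbb{E}_{\rho_h^{(i)}}[(f_h-\mathcal{T}_h f_{h+1})^2]$ (each summand being bounded because all value functions lie in $[0,1]$). To make the bound uniform over the infinite class $\mathcal{F}$, I would union-bound over a $1/T$-cover of $\mathcal{F}$ in $\|\cdot\|_\infty$, incurring the factor $\log\mathscr{N}_{\mathcal{F}}(1/T)$ together with an $\mathcal{O}(1)$ discretization error from the cover radius, and over $h\in[H]$ and $t\le T$, incurring the factor $TH$. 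Choosing $\beta=c\log(\mathscr{N}_{\mathcal{F}}(1/T)TH/\delta)$ for a sufficiently large constant $c$ then yields, with probability at least $1-\delta$ and simultaneously for all $h,t,f$, that the empirical and population excess losses agree up to $\mathcal{O}(\beta)$.

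\textbf{Deriving the two claims.} For the optimism claim I would instantiate $f=Q^\star$: realizability (Assumption~\ref{assum:rea}) gives $Q_h^\star\in\mathcal{F}_h$, and the Bellman optimality equation $Q_h^\star=\mathcal{T}_h Q_{h+1}^\star$ makes the population squared Bellman error vanish; the concentration step then forces the empirical excess loss $\mathcal{L}_h^{(t)}(Q_h^\star,Q_{h+1}^\star)-\min_{f_h'}\mathcal{L}_h^{(t)}(f_h',Q_{h+1}^\star)\le\beta$, so $Q^\star\in\mathcal{F}^{(t)}$ for every $t$. For the in-sample Bellman-error claim, I would use Bellman completeness (Assumption~\ref{assum:bellman}) so that $\mathcal{T}_h f_{h+1}\in\mathcal{F}_h$ and the inner minimizer $\min_{f_h'}\mathcal{L}_h^{(t)}(f_h',f_{h+1})$ lies within $\mathcal{O}(\beta)$ of $\mathcal{L}_h^{(t)}(\mathcal{T}_h f_{h+1},f_{h+1})$; combining the defining constraint of $\mathcal{F}^{(t)}$ (excess loss $\le\beta$) with the loss-difference identity and the concentration bound transfers the empirical control to the population sum, giving $\sum_{i<t}\mathbb{E}_{\rho_h^{(i)}}[(f_h-\mathcal{T}_h f_{h+1})^2]\lesssim\mathcal{O}(\beta)$ for all $f\in\mathcal{F}^{(t)}$.

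\textbf{Main obstacle.} The delicate part is the self-normalized Bernstein step: the variance of the martingale is proportional to the squared-error sum one is trying to bound, so the inequality must be closed by absorbing the variance term into the left-hand side rather than bounding it crudely. Handling this loop uniformly over the cover of $\mathcal{F}$, so that it holds for the data-dependent $f\in\mathcal{F}^{(t)}$, together with verifying that the sequential online data collection indeed yields a genuine martingale, is where the real work lies; the remaining discretization and union-bound bookkeeping is routine.
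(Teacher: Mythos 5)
Your proposal is correct and coincides with the argument this paper relies on: the paper does not prove the lemma itself but imports it verbatim from \citet{jin2021bellman} (Lemmas 39 and 40), whose proof follows exactly your route --- the squared-loss bias decomposition with conditional mean $(\mathcal{T}_h f_{h+1})(s,a)$, Freedman-type martingale concentration whose variance proxy is absorbed back into the left-hand side, a union bound over a $1/T$-cover of $\mathcal{F}$ and over $(h,t)$, then realizability together with $Q_h^{\star} = \mathcal{T}_h Q_{h+1}^{\star}$ for the first bullet and Bellman completeness for the second. You also correctly identify the one genuinely delicate step (closing the self-normalized concentration loop uniformly over the cover so it applies to the data-dependent $f \in \mathcal{F}^{(t)}$), so there is nothing to flag.
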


\if 0
To begin with, recall the Freedman's inequality that controls the sum of martingale difference by the sum of their predicted variance.
\begin{lemma}[Freedman's inequality {\citep[e.g.,][]{agarwal2014taming}}]\label{lem:freedman1}
Let $(Z_t)_{t \leq T}$ be a real-valued martingale difference sequence adapted to filtration $\mathfrak{F}_t$, and let $\mathbb{E}_t[\cdot]=\mathbb{E}[\cdot\  | \ \mathfrak{F}_t]$. If $|Z_t|\leq R$ almost surely, then for any $\eta \in (0,\frac{1}{R})$ it holds that with probability at least $1-\delta$,
$$
	\sum_{t=1}^{T}Z_t \leq \mathcal{O} \left( {\eta \sum_{t=1}^{T} \mathbb{E}_{t-1}[Z_t^2]+\frac{\log(\delta^{-1})}{\eta}} \right)\,.
$$
\end{lemma}
\fi 

\begin{lemma}\citep[Bellman error bound for FQI, Lemma 7]{song2022hybrid}
\label{lem:fqioff}
    Let $\delta \in (0,1)$, for any $h \in [H]$ and $t \in [T]$, $f_h^{t+1}$ be the estimated value function computed by the least square regression using samples from $\mathcal{D}_{h}^{\nu} \bigcup \{ (s_h^{\tau}, a_h^{\tau}, s_{h+1}^{\tau})_{\tau = 1}^{t} \}$ in Algorithm~\ref{alg:fqi}, then with probability at least $1 - \delta$, for any $h \in [H-1]$ and $t \in [T]$, we have
    \begin{equation*}
        \mathbb{E}_{\mu_h} \left(\delta_h^{(t)}(s,a)\right)^2 \lesssim \frac{1}{n_{\mathrm{off}}} \log \left( \frac{\mathscr{N}_{\mathcal{F}}(1/T) TH }{ \delta } \right) \,.
    \end{equation*}
\end{lemma}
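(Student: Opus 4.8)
The plan is to prove this as a uniform least-squares regression bound, exploiting Bellman completeness to place the regression target inside the class. Fix a step $h \le H-1$ and episode index $t$. In the backward FQI pass of Algorithm~\ref{alg:fqi}, the estimate $\hat f := f_h^{t+1}$ minimizes the empirical squared loss over the combined dataset $\mathcal{D}_h^{\nu} \cup \mathcal{D}_h^{(t)}$ against the regression targets $y = r + \max_{a'} f_{h+1}^{t+1}(s',a')$. I would first record that, conditioning on $(s,a)$, the mean of $y$ is exactly $g^\star := \mathcal{T}_h f_{h+1}^{t+1}$, so the usual bias--variance identity gives $\mathbb{E}[(f-y)^2 - (g^\star - y)^2 \mid s,a] = (f - g^\star)^2(s,a)$ for every $f$; with $\hat f$ this quantity is $(\delta_h^{(t)})^2$. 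By Assumption~\ref{assum:bellman}, $g^\star \in \mathcal{F}_h$, so $g^\star$ competes in the regression and optimality of $\hat f$ yields the empirical inequality $\sum_i \xi_i \le 0$, where $\xi_i := (\hat f(s_i,a_i) - y_i)^2 - (g^\star(s_i,a_i) - y_i)^2$ runs over the combined sample of size $n_{\mathrm{off}} + t$.

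The core step turns this empirical inequality into a population bound via a one-sided Freedman argument. Writing $\xi_i = (\hat f - g^\star)(\hat f + g^\star - 2y_i)$ with a bounded multiplier shows the self-bounding variance control $\mathrm{Var}_i(\xi_i) \lesssim \mathbb{E}_i[\xi_i] = \mathbb{E}_i[(\hat f - g^\star)^2]$. Applying Freedman's inequality to the martingale differences $\xi_i - \mathbb{E}_i[\xi_i]$ and combining with $\sum_i \xi_i \le 0$ gives $S \lesssim \sqrt{S \log(1/\delta)} + \log(1/\delta)$ for $S := \sum_i \mathbb{E}_i[(\hat f - g^\star)^2] \ge 0$, and solving this quadratic in $\sqrt{S}$ forces $S \lesssim \log(1/\delta)$. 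Since the $n_{\mathrm{off}}$ offline samples are i.i.d.\ from the offline distribution $\mu_h$, each contributes $\mathbb{E}_{\mu_h}[(\hat f - g^\star)^2] = \mathbb{E}_{\mu_h}[(\delta_h^{(t)})^2]$ to the nonnegative sum $S$; dropping the (also nonnegative) online contributions leaves $n_{\mathrm{off}}\,\mathbb{E}_{\mu_h}[(\delta_h^{(t)})^2] \le S \lesssim \log(1/\delta)$.

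The hard part, and the reason the bound carries $\mathscr{N}_{\mathcal{F}}(1/T)$ rather than $\log(1/\delta)$ alone, is that neither $\hat f$ nor the target $g^\star = \mathcal{T}_h f_{h+1}^{t+1}$ is fixed in advance: both are measurable functions of the very data over which we concentrate, and the online part of $\mathcal{D}_h^{(t)}$ is collected adaptively rather than i.i.d. I would handle this in the standard way by discretizing: take a $(1/T)$-cover of $\mathcal{F}$ in $\|\cdot\|_{\infty}$, run the Freedman bound for each fixed pair of covering centers (one approximating $\hat f$, one approximating $f_{h+1}^{t+1}$ and hence $g^\star$), and union bound over the cover together with all $h \in [H]$ and $t \in [T]$, producing the factor $\log(\mathscr{N}_{\mathcal{F}}(1/T) T H / \delta)$; the $O(1/T)$ per-sample discretization error is absorbed. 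Collecting the pieces gives $\mathbb{E}_{\mu_h}[(\delta_h^{(t)})^2] \lesssim n_{\mathrm{off}}^{-1}\log(\mathscr{N}_{\mathcal{F}}(1/T)TH/\delta)$, which is exactly Lemma~7 of \citet{song2022hybrid}; in the paper I would simply invoke that result, the above being the self-contained justification.
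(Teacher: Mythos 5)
Your proposal is correct and matches, in all essentials, the argument behind the cited result: the paper itself offers no proof of Lemma~\ref{lem:fqioff}, importing it verbatim as Lemma~7 of \citet{song2022hybrid}, whose proof is exactly the standard least-squares generalization analysis you reconstruct (Bellman completeness placing $\mathcal{T}_h f_{h+1}^{t+1}$ in $\mathcal{F}_h$, the bias--variance identity plus empirical optimality, Freedman's inequality with the self-bounding variance $\mathrm{Var}_i(\xi_i) \lesssim \mathbb{E}_i[\xi_i]$, and a $(1/T)$-cover union bound over both the estimate and the data-dependent target, yielding the $\log(\mathscr{N}_{\mathcal{F}}(1/T)TH/\delta)$ factor). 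Your closing remark that one would simply invoke the citation is precisely what the paper does, so there is nothing to reconcile.
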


\begin{lemma}\citep[Per-state-action elliptic potential lemma, modified version]{xie2023role}
\label{lem:per_sa_ep}
Let $\rho^{(1)}, \rho^{(2)}, \dotsc, \rho^{(T)}$ be an arbitrary sequence of distributions over a set $\mathcal{Z}$ (e.g., $\mathcal{Z} = \mathcal{S} \times \mathcal{A} $), and let $\mu\in\Delta(\mathcal{Z})$ be a distribution such that $\rho^{(t)}(z)  \leqslant [C \mu(z)]^p$ for some $p \geqslant 1$ and all $(z,t) \in \mathcal{Z} \times [T]$. Then for all $z \in \mathcal{Z}$, we have
\begin{align*}
\sum_{t = 1}^{T} \frac{d^{(t)}(z)}{\sum_{i < t} d^{(i)}(z) + C \cdot \mu(z)} \leq \mathcal{O} \left(\log\left( T \right) \right)\,.
\end{align*}
\end{lemma}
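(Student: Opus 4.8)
The statement is a scalar, per-coordinate \emph{potential} bound, so the plan is to fix an arbitrary $z \in \mathcal{Z}$ and reduce everything to a one-dimensional telescoping argument on the nonnegative sequence $a_t := d^{(t)}(z)$. Write $c := C\mu(z)$ for the baseline appearing in the denominator, and set the running denominators $S_t := \sum_{i<t} a_i + c$, so that $S_1 = c$, $S_{t+1} = S_t + a_t$, and $(S_t)$ is nondecreasing with $S_t \ge c$. First I would dispatch the degenerate case: if $\mu(z)=0$, the coverage hypothesis $\rho^{(t)}(z)\le [C\mu(z)]^p = 0$ forces $a_t = 0$ for every $t$, so every summand vanishes and the bound is trivial; hence it suffices to treat $\mu(z)>0$, i.e. $c>0$.

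The core of the argument is a telescoping-logarithm estimate. The key step is to show that each summand lies in $[0,1]$, i.e. $a_t/S_t \le 1$: since $S_t \ge c = C\mu(z)$, it is enough that $a_t \le C\mu(z)$, which is exactly the domination of the numerator mass at $z$ by the baseline term carried in the denominator, as guaranteed by the coverage hypothesis. With $a_t/S_t \in [0,1]$ secured, I apply the elementary inequality $x \le 2\log(1+x)$ on $[0,1]$ with $x = a_t/S_t$, which gives
\[
\frac{a_t}{S_t} \le 2\log\Bigl(1 + \frac{a_t}{S_t}\Bigr) = 2\log\frac{S_t + a_t}{S_t} = 2\log\frac{S_{t+1}}{S_t}.
\]
Summing over $t$ telescopes the right-hand side, and then bounding the total accumulated mass by $\sum_{i=1}^T a_i \le Tc$ (again from $a_t \le c$) yields
\[
\sum_{t=1}^T \frac{a_t}{S_t} \le 2\sum_{t=1}^T \log\frac{S_{t+1}}{S_t} = 2\log\frac{S_{T+1}}{S_1} = 2\log\frac{\sum_{i=1}^T a_i + c}{c} \le 2\log(T+1),
\]
which is $\mathcal{O}(\log T)$ uniformly in $z$, as claimed.

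The only step with genuine content is verifying the per-step domain condition $a_t/S_t \le 1$, namely that the coverage hypothesis really dominates the numerator $d^{(t)}(z)$ by the baseline term appearing in the denominator. This is where the power $p$ demands care: the hypothesis is phrased as $\rho^{(t)}(z)\le [C\mu(z)]^p$ while the denominator baseline is written $C\mu(z)$, so one must track the regime of $C\mu(z)$ relative to $1$, or, as in the actual applications (e.g. the use of this lemma in Eq.~\eqref{eq:term1}), identify the baseline actually carried in the denominator with $[C\mu(z)]^p$ so that $a_t \le \text{baseline}$ holds directly and confines the ratio to $[0,1]$. Once that confinement is established, the remainder is the routine telescoping computation displayed above, and no martingale or elliptical-matrix machinery is needed — the bound is purely deterministic and holds for an \emph{arbitrary} sequence of distributions $\rho^{(1)},\dots,\rho^{(T)}$.
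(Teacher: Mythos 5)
Your proof is correct and takes essentially the same route as the paper's: the paper does not reprove this lemma (it cites \citet{xie2023role}), but the identical argument---the elementary inequality $x \leqslant 2\log(1+x)$ on $[0,1]$ followed by telescoping the resulting logarithms into $2\log\bigl(1 + \sum_{i=1}^T \rho^{(i)}(z)/(C\mu(z))\bigr)$---appears verbatim in the paper's own inline computation in Eq.~\eqref{eq:newbound}. The one point you flag, that the hypothesis gives $a_t \leqslant [C\mu(z)]^p$ rather than the needed $a_t \leqslant C\mu(z)$, closes in a single line you should state explicitly: if $C\mu(z) \leqslant 1$ then $[C\mu(z)]^p \leqslant C\mu(z)$ since $p \geqslant 1$, while if $C\mu(z) > 1$ then $a_t = \rho^{(t)}(z) \leqslant 1 < C\mu(z)$ because $\rho^{(t)}$ is a probability distribution, so in either regime the ratio confinement $a_t/S_t \in [0,1]$ and the final bound $\sum_{i=1}^T a_i \leqslant T\,C\mu(z)$ both hold.
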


\end{document}